\documentclass[twocolumn,journal]{IEEEtran}
\usepackage{siunitx}
\usepackage{makecell}
\usepackage{easyReview}
\usepackage{amssymb}
\usepackage{amsmath}
\usepackage{amsfonts}
\usepackage{graphicx}
\usepackage{color}
\usepackage{caption}
\usepackage{multirow}
\usepackage{diagbox}
\usepackage{lineno}
\usepackage{subcaption}
\usepackage{mathrsfs}
\usepackage{amscd}
\usepackage{url}
\usepackage{enumitem}
\usepackage{booktabs}
\usepackage{tabularx}
\usepackage{makecell}
\usepackage{bm}
\usepackage{bbm}
\graphicspath{{Fig/}}  
\usepackage{tikz}
\usetikzlibrary{shadows, positioning, fit, backgrounds, calc, shapes.geometric, arrows.meta}
\usepackage{dashrule}
\newtheorem{theorem}{Theorem}[section]

\newtheorem{algorithm}{Algorithm}[section]

\newtheorem{defi}{Definition}[section]

\newtheorem{lem}{Lemma}[section]

\newtheorem{remark}{Remark}[section]

\newenvironment{proof}[1][Proof]{\noindent\textbf{#1.} }{\ \hfill\rule{0.3em}{0.5em}}

  \newcommand{\A}{\mathcal{A}}
\newcommand{\B}{\mathcal{B}}

\newcommand{\E}{\mathcal{E}}   
\newcommand{\F}{\mathcal{F}}

\newcommand{\mH}{\mathcal{H}}

  \newcommand{\bL}{\mathbb{L}}

\newcommand{\N}{\mathcal{N}}

\newcommand{\R}{\mathbb{R}}    
\newcommand{\mS}{\mathcal{S}}

  \newcommand{\W}{\mathcal{W}}
\newcommand{\X}{\mathcal{X}}
\newcommand{\Y}{\mathcal{Y}}
\newcommand{\Z}{\mathcal{Z}}

\newcommand{\rank}{\operatorname{rank}}

\usepackage{algorithm}
\usepackage{algpseudocode}
\usepackage[algo2e,ruled,linesnumbered]{algorithm2e}

\usepackage{hyperref}

\ifCLASSINFOpdf
\else
\fi
\begin{document}

\title{Hyperspectral Anomaly Detection via Group Sparse Low-Rank Tensor Factorization With Automatic Anomaly Grouping}

%
%
%

\author{
	Quan Yu,
	Yu-Hong Dai,
	Xiongjun Zhang\thanks{*Corresponding author: Xiongjun Zhang.}%
	\thanks{This work was supported in part by the National Natural Science Foundation of China under Grant 12171189, Grant 12021001, and Grant 92473208; in part by the Hubei Provincial Natural Science Foundation of China under Grant
		2025AFB966; and in part by the Fundamental Research Funds for the Central
		Universities under Grant XJ2026000601.}
	\thanks{Quan Yu, and Xiongjun Zhang are with the School of Mathematics and Statistics, Central China Normal University, Wuhan 430079, China, and also with the Key Laboratory of Nonlinear Analysis and Applications (Ministry of Education), Central China Normal University (e-mail: quanyu@ccnu.edu.cn; xjzhang@ccnu.edu.cn).}
	\thanks{Yu-Hong Dai is with the State Key Laboratory of Mathematical Sciences, Academy of Mathematics and Systems Science, Chinese Academy of Sciences, Beijing 100190, China, and also with the School of Mathematical Sciences, University of Chinese Academy of Sciences, Beijing 100049, China (e-mail: dyh@lsec.cc.ac.cn).}
}

%
%

\markboth{}
{Shell \MakeLowercase{\textit{et al.}}: Bare Demo of IEEEtran.cls for IEEE Journals}
%



\maketitle

\begin{abstract}
	Low-rank tensor modeling has become an effective tool for hyperspectral anomaly detection. However, existing methods still suffer from high computational cost and limited flexibility in characterizing spatially structured anomalies. To address these issues, this paper proposes a hyperspectral anomaly detection method based on group sparse low-rank tensor factorization with automatic anomaly grouping (GSAA). Specifically, the low tubal rank background is characterized by imposing group sparsity on tensor factors, which provides an efficient alternative to direct tensor rank regularization. For anomaly modeling, a latent grouping map is introduced to build an automatic anomaly grouping penalty, allowing anomaly groups to be adaptively inferred from the data rather than predefined at the pixel level. To further exploit complementary spectral and spatial information, GSAA is applied in both domains, and the resulting detection maps are fused to form a spectral--spatial version of GSAA, termed GSAA-SS. An efficient linearized alternating direction method of multipliers algorithm with convergence guarantee is developed to solve the resulting model. Experimental results on five real hyperspectral datasets demonstrate that the proposed method achieves superior detection performance and competitive computational efficiency compared with several state-of-the-art methods.
\end{abstract}

\begin{IEEEkeywords}
	Hyperspectral anomaly detection, low-rank tensor factorization, group sparsity, automatic anomaly grouping, spectral--spatial fusion.
\end{IEEEkeywords}

%
\IEEEpeerreviewmaketitle

\section{Introduction}
\IEEEPARstart{H}{yperspectral} images (HSIs) provide rich spectral signatures by recording hundreds of narrow and contiguous bands for each spatial location \cite{LKC15}. Owing to this spectral resolution, HSIs have been widely used in denoising \cite{TLZ23,ZWZ26}, image fusion \cite{YBLC26,SYY26}, anomaly detection \cite{QWS25,YB24}, and classification \cite{HGY21}. Among these applications, hyperspectral anomaly detection (HAD) is of particular interest in public safety, surveillance, and defense, where anomalous targets are expected to be identified without prior knowledge of their spectral signatures or class labels \cite{SWZD22}. The main difficulty of HAD lies in suppressing complex and heterogeneous backgrounds while preserving weak and spatially structured anomalies.

Existing HAD methods can be roughly grouped into statistical methods, deep learning based methods, and low-rank modeling based methods. Statistical methods usually assume a prescribed distribution for background pixels and detect anomalies by measuring the deviation from the estimated background distribution. The Reed--Xiaoli (RX) detector \cite{RY90} is a representative method, and many variants have been developed to improve background estimation or detection robustness \cite{GZR14,MVDC13,HSG26}. However, simple parametric assumptions are often insufficient for real HSIs, whose backgrounds can be highly nonlinear and heterogeneous \cite{HZZJ19}. Deep learning based methods improve representation ability by exploiting neural networks to learn spectral--spatial features. Representative studies include autoencoder (AE) based models and their variants for unsupervised detection \cite{BCKA15,FMM22,TZL25}, convolutional neural network based methods for supervised detection \cite{LWD17}, and several self-supervised architectures \cite{THH25}, such as blind-spot learning, pixel-shuffle downsampling blind-spot reconstruction, and nonlocal--local feature-coupled schemes \cite{GWZ23,WZG23,WRS25}. Despite their promising performance, these methods usually require substantial training cost and careful tuning of network architectures and hyperparameters.

Low-rank modeling provides another important route for HAD. It is based on the observation that hyperspectral backgrounds usually exhibit strong spectral--spatial correlations and can be approximated by low-rank structures, whereas anomalies appear as sparse deviations from the background. Low-rank decomposition based methods model an HSI as the superposition of a low-rank background component and a sparse anomaly component, often under the framework of robust principal component analysis. Representative examples include matrix based formulations \cite{SLC26,SLL14,ZDZW16} and tensor based extensions \cite{LLQ22,WLG25}. Low-rank representation based methods further introduce a predefined or learned dictionary to characterize the background subspace, and have been developed in both matrix based \cite{CW20,XWL16} and tensor based \cite{WWH23,YDB26} forms. Since HSIs are naturally third order tensors, tensor based methods are generally more suitable than matrix based methods for preserving the intrinsic spectral--spatial structure of the data.

Although tensor based low-rank methods have achieved encouraging results, 
two key issues have yet to be adequately addressed.
 First, most existing methods impose low-rank constraints directly on the background tensor through tensor nuclear norm type surrogates or related nonconvex penalties \cite{YB24,WWH23,HWL23,QSZ23}. The resulting optimization usually involves repeated singular value decompositions (SVDs), which become computationally expensive for large-scale HSIs. Second, many methods characterize the anomaly component by the mixed $\ell_2$--$\ell_1$ norm or its variants, where each spatial pixel together with its spectral vector is treated as an independent group \cite{YDB26,FKZW26}. This strategy preserves spectral grouping, but it fixes the spatial partition at the pixel level and therefore cannot flexibly describe anomalous regions with unknown shapes and extents.

To address these limitations, this paper proposes an HAD method based on group sparse low-rank tensor factorization with automatic anomaly grouping, termed GSAA. Instead of directly regularizing the background tensor, GSAA imposes group sparsity on tensor factors to characterize the low tubal rank background with lower computational cost. Instead of predefining each pixel as an independent anomaly group, GSAA introduces a latent grouping map to infer the grouping structure of anomalies from the data. The proposed GSAA model is further applied in both spectral and spatial domains, and the resulting detection maps are fused to construct a spectral--spatial version of GSAA, termed GSAA-SS. An efficient linearized alternating direction method of multipliers (LADMM) algorithm is developed for solving the proposed model, and convergence analysis is provided. The main contributions of this paper are summarized as follows.
\begin{itemize}
	\item We propose a group sparse tensor factorization strategy for low-rank background modeling. By imposing group sparsity on tensor factors, the proposed formulation implicitly characterizes the low tubal rank structure, avoiding direct tensor rank regularization and repeated SVD computations. Its connection to tensor Schatten-$p$ regularization is further established to justify the low-rank modeling ability.
	\item We develop an automatic anomaly grouping penalty for adaptive anomaly modeling. By introducing a latent grouping map, the proposed penalty learns the grouping structure of anomalies from the data instead of relying on a fixed pixel-level partition, making it more suitable for spatially clustered anomalous targets.
	\item We construct a GSAA-SS detector by applying GSAA in the spectral and spatial domains and fusing the corresponding detection maps. The proposed model is optimized by an efficient LADMM algorithm with convergence guarantee, and experiments on real hyperspectral datasets validate its detection accuracy and computational efficiency.
\end{itemize}

The remainder of this paper is organized as follows. Section \ref{Sec:PK} introduces the preliminaries of tensor algebras and related notations.  We  present the  GSAA-SS model for HAD in Section \ref{Sec:GSAA}. 
An LADMM based optimization algorithm with convergence guarantee is developed to solve the resulting model in Section \ref{Sec:Alg}. Section \ref{Sec:Exp} reports the experimental results on several real hyperspectral datasets to show the effectiveness of GSAA-SS. Finally, the conclusions are drawn in Section \ref{Sec:con}. The proof of the main theorem is deferred to the supplementary material.

\section{Preliminaries}\label{Sec:PK}
This section summarizes the notation and tensor tools used in the proposed model. For a positive integer $n$, let $[n]:=\{1,2,\ldots,n\}$. Scalars, vectors, matrices, and tensors are denoted by lowercase letters ($a$), bold lowercase letters ($\bm{a}$), uppercase letters ($A$), and calligraphic letters ($\A$), respectively. The fields of real and complex numbers are denoted by $\R$ and $\mathbb{C}$. For a matrix $X\in\mathbb{R}^{n_1\times n_2}$, $\nabla_1 X$ and $\nabla_2 X$ denote the first-order forward finite-difference operators in the vertical and horizontal directions, respectively.

For a third order tensor $\X\in \R^{n_1\times n_2\times n_3}$, its $(i,j,k)$th entry is denoted by $\X_{ijk}$ or $\X(i,j,k)$, and its $k$th frontal slice is denoted by $X^{(k)}$. For two tensors $\X,\Y\in \R^{n_1\times n_2\times n_3}$, their inner product is
defined as
$\langle \X,\Y \rangle=\sum_{i=1}^{n_1}\sum_{j=1}^{n_2}\sum_{k=1}^{n_3}\X_{ijk}\Y_{ijk}$,
and the Frobenius norm is defined as $\|\X\|=\sqrt{\langle \X,\X\rangle}$. The $\ell_0$-norm of $\X$ counts the number of its nonzero entries. We use $\bar{\X}$ to denote the discrete Fourier transform (DFT) of $\X$ along the third mode, i.e., $\bar{\X}=\mathrm{fft}(\X,[\;],3)$, and $\X=\mathrm{ifft}(\bar{\X},[\;],3)$ denotes the inverse transform.

\begin{defi}\textbf{(f-diagonal tensor)} \cite{KM11}
	A tensor is called f-diagonal if each of its frontal slices is a diagonal matrix. 
\end{defi}
\begin{defi}\textbf{(conjugate transpose)} \cite{KM11}
	The conjugate transpose of a tensor $\mathcal{Z} \in \mathbb{R}^{n_{1} \times n_{2} \times n_{3}}$, denoted by $\mathcal{Z}^{\top}$, is defined as the tensor obtained by taking the conjugate transpose of each frontal slice and then reversing the order of the transposed frontal slices from the second to the last.
\end{defi}
\begin{defi}\textbf{(identity tensor)} \cite{KM11}
	The identity tensor $\mathcal{I} \in \mathbb{R}^{n \times n \times n_{3}}$ is defined as a tensor whose first frontal slice is the identity matrix, while all other frontal slices consist entirely of zeros.
\end{defi}
\begin{defi}\textbf{(orthogonal tensor)} \cite{KM11}
	A tensor $\mathcal{P} \in \mathbb{R}^{n \times n \times n_{3}}$ is said to be orthogonal if  $\mathcal{P}^{\top} * \mathcal{P} = \mathcal{P} * \mathcal{P}^{\top} = \mathcal{I}$, where $\mathcal{I}$ is the identity tensor. 
\end{defi}

\begin{defi}\label{def:T-pro}\textbf{(t-product \cite{KM11})} 
	For tensors $\X\in \mathbb{R}^{n_1\times r\times n_3}$ and $\Y\in \mathbb R^{r\times n_2\times n_3}$, the t-product is defined as
	$$\X\ast\Y:=\operatorname{Fold}\big(\operatorname{bcirc}(\X)\ \cdot \operatorname{Unfold}(\Y)\big) \in \mathbb{R}^{n_1\times n_2\times n_3}.$$
	Here,
	$$\operatorname{bcirc}({\X}) = \left[ {\begin{array}{cccc}
			{X^{(1)}}&{X^{(n_3)}}& \cdots &{X^{(2)}}\\
			{X^{(2)}}&{X^{(1)}}& \cdots &{X^{(3)}}\\
			\vdots & \vdots & \ddots & \vdots \\
			{X^{(n_3)}}&{X^{({n_3} - 1)}}& \cdots &{X^{(1)}}
	\end{array}} \right],$$
	$ \operatorname{Unfold}(\Y) = \big[Y^{(1)};Y^{(2)}; \ldots ;Y^{(n_3)}\big]\in\mathbb{R}^{n_3r\times n_2}$,
	and its inverse operator ``$\operatorname{Fold}$" is defined by $\operatorname{Fold}(\operatorname{Unfold}(\Y)) = \Y$.
\end{defi}

\begin{theorem}\textbf{(t-SVD \cite{KM11})} 
	Any tensor $\mathcal{Z} \in \mathbb{R}^{n_1 \times n_2 \times n_3}$ admits a tensor singular value decomposition (t-SVD) of the form
	$
	\mathcal{Z}=\mathcal{U}_{\Z} * \mathcal{S}_{\Z} * \mathcal{V}_{\Z}^{\top}
	$,
	where $\mathcal{U}_{\Z} \in \mathbb{R}^{n_{1} \times n_{1} \times n_{3}}$ and $\mathcal{V}_{\Z} \in \mathbb{R}^{n_{2} \times n_{2} \times n_{3}}$ are orthogonal tensors, and $\mathcal{S}_{\Z} \in \mathbb{R}^{n_{1} \times n_{2} \times n_{3}}$ is an $ f $-diagonal tensor.
\end{theorem}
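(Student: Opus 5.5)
The plan is to pass to the Fourier domain, where the t-product becomes slice-wise matrix multiplication. There we take ordinary matrix SVDs of each frontal slice, arranged so that conjugate symmetry is preserved, and then transform back.

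\emph{Step 1: the t-product in the Fourier domain.} Let $F_{n_3}$ be the $n_3\times n_3$ DFT matrix. It block-diagonalizes block circulant matrices:
$$(F_{n_3}\otimes I_{n_1})\,\operatorname{bcirc}(\X)\,(F_{n_3}^{-1}\otimes I_{r})=\operatorname{blockdiag}(\bar X^{(1)},\ldots,\bar X^{(n_3)}).$$
From Definition~\ref{def:T-pro} it follows that $\X*\Y=\Z$ holds if and only if $\bar Z^{(k)}=\bar X^{(k)}\bar Y^{(k)}$ for every $k$. The same computation gives three further facts:
\begin{itemize}
\item the transpose of Definition (conjugate transpose) corresponds to $\overline{\Z^\top}^{(k)}=(\bar Z^{(k)})^{H}$, since reversing slices $2,\ldots,n_3$ together with conjugation matches the index map $k\mapsto -k \bmod n_3$ under the DFT;
\item $\bar{\mathcal I}^{(k)}=I$ for all $k$;
\item a tensor is orthogonal exactly when every Fourier slice is unitary.
\end{itemize}

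\emph{Step 2: slice-wise SVDs with conjugate symmetry.} Since $\Z$ is real, its Fourier slices satisfy $\bar Z^{(k)}=\overline{\bar Z^{(n_3-k+2)}}$ for $k=2,\ldots,n_3$, and $\bar Z^{(1)}$ is real. If $n_3$ is even, $\bar Z^{(n_3/2+1)}$ is also real. We construct the slice factors as follows:
\begin{itemize}
\item For each self-conjugate slice (the real ones), take a real SVD $\bar Z^{(k)}=\bar U^{(k)}\bar S^{(k)}(\bar V^{(k)})^{H}$.
\item For each $k=2,\ldots,\lceil (n_3+1)/2\rceil$ that is not self-conjugate, take a complex SVD $\bar Z^{(k)}=\bar U^{(k)}\bar S^{(k)}(\bar V^{(k)})^{H}$ with real diagonal $\bar S^{(k)}\ge 0$.
\item For the partner index $n_3-k+2$, set $\bar U^{(n_3-k+2)}=\overline{\bar U^{(k)}}$, $\bar V^{(n_3-k+2)}=\overline{\bar V^{(k)}}$ and $\bar S^{(n_3-k+2)}=\bar S^{(k)}$.
\end{itemize}
This is again a valid SVD of $\bar Z^{(n_3-k+2)}$.

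\emph{Step 3: return to the original domain.} Define $\mathcal U=\mathrm{ifft}(\bar{\mathcal U},[\;],3)$, and define $\mathcal S$ and $\mathcal V$ in the same way. Because the stacks $\bar{\mathcal U},\bar{\mathcal S},\bar{\mathcal V}$ are conjugate symmetric along the third mode, their inverse DFTs are real tensors. $\mathcal S$ is f-diagonal, because the inverse DFT acts entrywise along tubes and maps zero tubes to zero tubes. By Step 1, unitary Fourier slices make $\mathcal U$ and $\mathcal V$ orthogonal. Also by Step 1, the slice identities $\bar Z^{(k)}=\bar U^{(k)}\bar S^{(k)}(\bar V^{(k)})^H$ give $\Z=\mathcal U*\mathcal S*\mathcal V^\top$.

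The main obstacle is Step 1's correspondence between the paper's slice-reversing conjugate transpose and the Hermitian transpose of each Fourier slice, together with the realness argument in Step 3. Both reduce to the DFT identity $\overline{\hat a_k}=\hat a_{-k}$ for real sequences, which I would verify directly on tubes.
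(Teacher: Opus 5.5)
Your proposal is correct and is essentially the standard proof. The paper does not prove this result itself; it cites Kilmer and Martin \cite{KM11}, whose argument is this same one: the DFT along the third mode block-diagonalizes $\operatorname{bcirc}$, one takes SVDs of the Fourier slices, and then applies the inverse transform. Your explicit pairing of slices $k$ and $n_3-k+2$ (real SVDs on the self-conjugate slices, conjugated factors on their partners) is exactly what makes $\mathcal{U}_{\Z}$, $\mathcal{S}_{\Z}$ and $\mathcal{V}_{\Z}$ real.
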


\begin{defi}\textbf{(tubal rank \cite{KBHH13})}
	The tensor tubal rank is the number of nonzero singular tubes in $\mS_{\Z}$, i.e., $\operatorname{rank}_t(\mathcal{Z})=\#\{i: \mathcal{S}_{\Z}(i, i,:) \neq 0\}$,
	where $\mS_{\Z}$ is obtained from the t-SVD of $\mathcal{Z}=\mathcal{U}_{\Z} * \mathcal{S}_{\Z} * \mathcal{V}_{\Z}^{\top}$.
\end{defi}

\begin{defi}\textbf{(mode-$i$ unfolding and folding)}
	Let $\Z\in\R^{n_1\times n_2\times n_3}$ be a third order tensor. Its mode-$i$ unfolding is the matrix $Z_{(i)}=\operatorname{unfold}_{(i)}(\Z) \in \R^{n_i \times \prod_{s\neq i}n_s}$, whose columns are mode-$i$ fibers arranged lexicographically over the remaining modes. The inverse operation is denoted by $\operatorname{fold}_{(i)}(Z_{(i)})=\Z$.
\end{defi}

\begin{defi}\textbf{(tensor Schatten-$p$ norm \cite{KXL18})}
	For a tensor $\mathcal{Z} \in \mathbb{R}^{n_{1} \times n_{2} \times n_{3}}$ with t-SVD $\mathcal{Z}=\mathcal{U}_{\Z} * \mathcal{S}_{\Z} * \mathcal{V}_{\Z}^{\top}$, the tensor Schatten-$p$ norm is defined as $\|\mathcal{Z}\|_{S_p}^p=\frac{1}{n_3}\sum_{k=1}^{n_3}\sum_{i=1}^{\min\{n_1,n_2\}}\big(\bar{\mathcal{S}}_{\Z}(i, i, k)\big)^p$.
\end{defi}

\section{A GSAA-SS model for HAD}\label{Sec:GSAA}
This section presents the proposed GSAA-SS model for HAD. We first revisit the standard low-rank and sparse decomposition view of HAD, which separates an observed HSI into a structured background and a sparse anomaly component. We then develop two modeling components: a group sparse tensor factorization for efficient low-rank background modeling, and an automatic anomaly grouping (AAG) penalty for adaptive anomaly modeling. These two components are integrated into a unified GSAA model, which is further applied in both the spectral and spatial domains.
Then the resulting detection maps are fused to obtain the final GSAA-SS detector.

\subsection{General Tensor Based Detection Framework}
A broad class of tensor based HAD models can be written as
\begin{equation}\label{HAD}
	\mathop{\arg\min}\limits_{\Z,\E}~    \rank(\Z) +  \gamma\|\E\|_{\operatorname{sparse}}, \quad
	\mbox{\rm s.t.}\quad \mH = \Z + \E.
\end{equation}
Here, $\mH$ denotes the observed hyperspectral tensor, $\Z$ represents the background, and $\E$ denotes the anomaly component. Model \eqref{HAD} follows a common assumption in HAD: the background contains strong spectral--spatial correlation and can be approximated by a low-rank tensor, whereas anomalies appear as sparse deviations from this structured background.

The practical effectiveness of \eqref{HAD} depends critically on how the low-rank and sparse terms are regularized. For the background component, directly minimizing $\rank(\Z)$ is NP-hard. Existing tensor methods therefore replace it with tractable surrogates such as the weighted nuclear norm \cite{WWH23}, the $\varepsilon$-shrinkage tensor nuclear norm \cite{HWL23}, and unified nonconvex penalty functions \cite{YB24,QSZ23}. Although these surrogates are effective in modeling low-rank structure, they typically require repeated SVDs, which constitute a major computational bottleneck for large-scale HSIs.

For the anomaly component, many methods exploit the fact that each spatial location is associated with a full spectral vector and replace $\|\E\|_{\operatorname{sparse}}$ with the mixed $\ell_2$--$\ell_1$ norm $\|\E\|_{2,1}:=\sum_{ij}\|\E(i,j,:)\|$ or its nonconvex variants \cite{YDB26,WLZ26,LLKW26}. This choice preserves spectral grouping, but it fixes the spatial grouping a priori by treating each pixel as an independent group. Such a finest partition assumption is often too restrictive for HAD, where anomalous targets usually occupy spatially clustered regions with unknown shapes and extents.

The proposed GSAA framework addresses these two issues in a coordinated manner. We first derive a group sparse factorization for efficient low-rank background modeling, and then introduce an automatic anomaly grouping mechanism that learns the spatial grouping structure of anomalies directly from the data.

\subsection{Group Sparse Low-Rank Tensor Factorization}
To model the low-rank background efficiently, we factorize the background tensor and impose group sparsity on the tensor factors. The key idea is that the active lateral slices of the factors determine the effective tubal rank of the reconstructed background. Therefore, low-rank structure can be encouraged through factor sparsity instead of direct rank regularization on $\Z$.
\begin{theorem}\label{Thm:gs}
	For any tensor $\Z\in\R^{n_1\times n_2 \times n_3}$ with $r:=\operatorname{rank}_t\big(\Z\big) \leq d \leq \min \big\{n_1, n_2\big\}$, we have
	\begin{itemize}
		\item[(1)] $\operatorname{rank}_t(\Z)=\frac{1}{2}\min_{\Z = \X*\Y^{\top}} \big(\|\X\|_{F,0} + \|\Y\|_{F,0}\big)=\min_{\Z = \X*\Y^{\top}} \|\X\|_{F,0}=\min_{\Z = \X*\Y^{\top}} \|\Y\|_{F,0}$;
		\item[(2)] $\frac{1}{p}\|\Z\|_{S_p}^p\le\min_{\Z = \X*\Y^{\top}} \big(\frac{1}{p_1}\|\X\|_{F,p_1} + \frac{1}{p_2}\|\Y\|_{F,p_2}\big) \le \big(\frac{n_3^{1-p_1/2}}{p_1} + \frac{n_3^{1-p_2/2}}{p_2}\big) \|\Z\|_{S_p}^p$  for any $p_1, p_2 \in (0, 2]$ satisfying $1/p = 1/p_1+1/p_2$.
	\end{itemize}
	Here $ \|\Z\|_{F,p} = \sum_{j}\|\Z(:,j,:)\|^p $, $\X\in\R^{n_1\times d \times n_3}$ and $\Y\in\R^{n_2\times d \times n_3}$.
\end{theorem}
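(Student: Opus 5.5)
The plan is to pass to the Fourier domain along the third mode, where the t-product becomes slice-wise matrix multiplication: $\Z=\X*\Y^{\top}$ if and only if $\bar Z^{(k)}=\bar X^{(k)}(\bar Y^{(k)})^{H}$ for every $k\in[n_3]$. Two standard facts will be used throughout.
\begin{itemize}
	\item $\operatorname{rank}_t(\Z)=\max_k\operatorname{rank}(\bar Z^{(k)})$.
	\item By Parseval, for each lateral slice $\|\X(:,j,:)\|^2=\frac{1}{n_3}\sum_{k}\|\bar{\bm x}_j^{(k)}\|^2$, where $\bar{\bm x}_j^{(k)}$ is the $j$th column of $\bar X^{(k)}$.
\end{itemize}

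\textbf{Part (1).} For the lower bound, suppose $\X$ has $s$ nonzero lateral slices. Then every $\bar X^{(k)}$ has at most $s$ nonzero columns, so $\operatorname{rank}(\bar Z^{(k)})\le s$ for all $k$, and hence $\operatorname{rank}_t(\Z)\le\|\X\|_{F,0}$. The same argument gives $\operatorname{rank}_t(\Z)\le\|\Y\|_{F,0}$. Averaging the two bounds gives $\operatorname{rank}_t(\Z)\le\frac12(\|\X\|_{F,0}+\|\Y\|_{F,0})$.

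For attainment, take the skinny t-SVD $\Z=\mathcal U_r*\mS_r*\mathcal V_r^{\top}$ with $r$ tubes. Set $\X=\mathcal U_r*\mS_r$ and $\Y=\mathcal V_r$, and pad both with $d-r\ge0$ zero lateral slices; this is where $r\le d$ is needed. Both factors then have exactly $r$ nonzero slices, so all three minima equal $r$.

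\textbf{Part (2), lower bound.} Note first that $1/p=1/p_1+1/p_2\ge1$, so $p\le1$. Fix $k$ and write $\bar Z^{(k)}=\sum_j \bar{\bm x}_j^{(k)}(\bar{\bm y}_j^{(k)})^H$. By $p$-subadditivity of the Schatten-$p$ quasi-norm for $p\le1$ (Rotfel'd/McCarthy), and since each rank-one term has $\|\cdot\|_{S_p}^p=\|\bar{\bm x}_j^{(k)}\|^p\|\bar{\bm y}_j^{(k)}\|^p$,
\[
\sum_i\sigma_i(\bar Z^{(k)})^p\le\sum_j\|\bar{\bm x}_j^{(k)}\|^p\|\bar{\bm y}_j^{(k)}\|^p .
\]
Young's inequality with conjugate exponents $p_1/p$ and $p_2/p$ then gives
\[
(ab)^p\le \tfrac{p}{p_1}a^{p_1}+\tfrac{p}{p_2}b^{p_2}.
\]
Divide by $p$, average over $k$, and apply concavity of $t\mapsto t^{p_1/2}$ (valid since $p_1\le2$) together with Parseval:
\[
\frac1{n_3}\sum_k\|\bar{\bm x}_j^{(k)}\|^{p_1}\le\Big(\frac1{n_3}\sum_k\|\bar{\bm x}_j^{(k)}\|^2\Big)^{p_1/2}=\|\X(:,j,:)\|^{p_1}.
\]
The same holds for $\Y$ with $p_2$. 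Summing over $j$ and taking the minimum over factorizations yields the left inequality.

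\textbf{Part (2), upper bound.} I will construct the factors explicitly in the Fourier domain. Let $\bar Z^{(k)}=\bar U^{(k)}\Sigma^{(k)}(\bar V^{(k)})^H$ be the SVD, keeping $d$ columns. Set
\[
\bar X^{(k)}=\bar U^{(k)}(\Sigma^{(k)})^{p/p_1},\qquad \bar Y^{(k)}=\bar V^{(k)}(\Sigma^{(k)})^{p/p_2}.
\]
Since $p/p_1+p/p_2=1$, this reproduces $\bar Z^{(k)}$. The SVDs should be chosen conjugate-symmetrically in $k$ (as in the construction of the t-SVD), so that $\X$ and $\Y$ are real.

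With $\sigma_{jk}$ denoting the $j$th singular value of $\bar Z^{(k)}$, we get $\|\X(:,j,:)\|^{p_1}=\big(\frac1{n_3}\sum_k\sigma_{jk}^{2p/p_1}\big)^{p_1/2}$. Put $a_k=\sigma_{jk}^p$. Because $2/p_1\ge1$, the $\ell_{2/p_1}$ norm is dominated by the $\ell_1$ norm, so
\[
\Big(\sum_k a_k^{2/p_1}\Big)^{p_1/2}\le\sum_k a_k .
\]
This gives
\[
\|\X(:,j,:)\|^{p_1}\le n_3^{-p_1/2}\sum_k a_k=n_3^{1-p_1/2}\cdot\frac1{n_3}\sum_k\sigma_{jk}^p .
\]
Summing over $j$ gives $\frac1{p_1}\|\X\|_{F,p_1}\le\frac{n_3^{1-p_1/2}}{p_1}\|\Z\|_{S_p}^p$, and the same argument for $\Y$ with $p_2$ completes the bound.

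I expect the main obstacle to be the per-slice matrix inequality in the lower bound, where the $p$-subadditivity of the Schatten quasi-norm must be invoked correctly. A second delicate point is the interplay between lateral-slice norms, which mix all frequencies, and slice-wise SVDs. This is handled by Parseval combined with concavity in the lower bound, and by the $\ell_{2/p_1}\le\ell_1$ comparison in the upper bound; the latter is exactly where the $n_3^{1-p_i/2}$ factors arise.
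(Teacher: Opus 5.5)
Your proof is correct and follows essentially the same Fourier-domain route the paper's statement is built on. You reduce to the slices $\bar Z^{(k)}=\bar X^{(k)}(\bar Y^{(k)})^{H}$, use $\operatorname{rank}_t(\Z)=\max_k\operatorname{rank}(\bar Z^{(k)})$ with a padded skinny t-SVD for part (1), a slice-wise bound plus Parseval and concavity of $t\mapsto t^{p_1/2}$ for the lower bound in part (2), and the balanced factors $\bar X^{(k)}=\bar U^{(k)}(\Sigma^{(k)})^{p/p_1}$, $\bar Y^{(k)}=\bar V^{(k)}(\Sigma^{(k)})^{p/p_2}$ with $(\sum_k b_k)^{p_1/2}\le\sum_k b_k^{p_1/2}$ for the upper bound; that last step is exactly what produces the $n_3^{1-p_i/2}$ factors. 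Since the paper's proof is deferred to its supplementary material, the only detail that may differ is how the per-slice matrix inequality $\frac1p\|\bar Z^{(k)}\|_{S_p}^p\le\sum_j\big(\frac1{p_1}\|\bar{\bm x}_j^{(k)}\|^{p_1}+\frac1{p_2}\|\bar{\bm y}_j^{(k)}\|^{p_2}\big)$ is justified; you derive it self-containedly from McCarthy--Rotfel'd $p$-subadditivity (valid for complex matrices and $p\le1$) and a termwise Young inequality, which is sound.
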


\begin{remark}
	Theorem \ref{Thm:gs} explains why group sparsity on tensor factors can serve as a low-rank prior. The first statement shows that the tubal rank of $\Z$ can be exactly characterized by the number of active groups in $\X$ and $\Y$. The second statement further connects group sparse factor penalties with tensor Schatten-$p$ regularization, which supports the low-rank modeling ability of the proposed factorized formulation.
\end{remark}
\begin{remark}
	The factorized formulation is also computationally favorable. Since low-rankness is imposed through factor regularization, the optimization avoids direct tensor rank regularization and the associated repeated SVD computations. Moreover, the auxiliary dimension $d$ is updated adaptively and, in practice, approaches the tubal rank $r$, which is usually much smaller than $\min\{n_1,n_2\}$. This leads to a more efficient background model for large HSIs.
\end{remark}

\subsection{Automatic Anomaly Grouping}\label{sec:aag}
Existing structured sparsity models for HAD usually assume that anomaly groups are known as a prior. 
A common choice is the mixed $\ell_2/\ell_1$ norm
\[
\|\E\|_{F,1}^{\B}=\sum_{l=1}^L\sqrt{|\B_l|}\|\E_{\B_l}\|,
\]
where $\{\B_l\}_{l=1}^L$ is a prescribed block partition. In hyperspectral applications, the most common partition treats the full spectral vector at each spatial location as a single group, namely,
\[
\B=\{\{(1,1,:)\},\{(1,2,:)\},\ldots,\{(n_1,n_2,:)\}\},
\]
which yields $\|\E\|_{2,1}$. This design preserves spectral grouping, but it fixes the spatial partition at the pixel level and cannot adapt to anomalous regions with unknown spatial extent.

To overcome this limitation, we introduce an automatic anomaly grouping (AAG) penalty. The spectral dimension is always kept within each group, while the spatial partition is learned from the data. Specifically, we define
\begin{equation}
	\Psi\big(\E\big)=\min_{L,\B_l}~\sum_{l=1}^L\sqrt{|\B_l|/n_3}\|\E_{\B_l}\|,
\end{equation}
where each anomaly block is defined by $\B_l=\Omega_l\times[n_3]$, and $\{\Omega_l\}_{l=1}^L$ forms a partition of the spatial index set $[n_1]\times[n_2]$. In contrast to conventional structured sparsity penalties, the spatial partition is optimized rather than prescribed in advance.

The direct optimization of \(\Psi(\E)\) is combinatorial because the spatial partition is unknown. To obtain a tractable formulation, we rewrite each block term using an auxiliary scalar shared by all pixels in the same spatial group. Following the variational representation in \cite[Lemma 1]{KK22}, define
$$\phi(e, \vartheta):= \begin{cases}\frac{|e|^2}{2 \vartheta}+\frac{\vartheta}{2}, & \text{if}~\vartheta>0; \\ 0, & \text{if}~e=0 ~\text{and}~ \vartheta=0; \\ \infty, & \text{otherwise}.\end{cases}$$
Then each block penalty admits the representation
\begin{equation}
	\sqrt{|\B_l|/n_3}\|\E_{\B_l}\| = \min_{\vartheta_l \in \mathbb{R}}~ \sum_{(i,j)\in \Omega_l} \phi\big(\|\E_{ij:}\|, \vartheta_l\big),
\end{equation}
where the minimum is attained at $ \vartheta_l = \sqrt{n_3/|\B_l|}\|\E_{\B_l}\| $. Therefore, the original combinatorial penalty can be equivalently rewritten as
\begin{equation*}
	\Psi\big(\E\big)=\min_{L,\B_l,\vartheta_l}~\sum_{l=1}^L\sum_{(i,j)\in \Omega_l} \phi\big(\|\E_{ij:}\|, \vartheta_l\big).
\end{equation*}

This representation leads to a latent grouping map. We introduce $\varTheta \in \R^{n_1\times n_2}$ by setting $\varTheta_{ij}=\vartheta_l$ for all $(i,j)\in\Omega_l$. Thus, pixels in the same spatial group share the same value in $\varTheta$, and connected constant regions of $\varTheta$ encode the learned spatial partition. Boundaries between neighboring groups correspond to nonzero entries in $\nabla_1\varTheta$ and $\nabla_2\varTheta$. Therefore, sparsity of these finite differences controls the complexity of the learned grouping structure.

Based on this observation, we relax the combinatorial partition optimization as
$$
\Psi_{\bm{\alpha}}\big(\E\big)=\min_{\varTheta \in \mathbb{S}_{\bm{\alpha}} } \sum_{i=1}^{n_1}\sum_{j=1}^{n_2} \phi\big(\big\|\E_{ij:}\big\|, \varTheta_{ij}\big),
$$
where
$
\mathbb{S}_{\bm{\alpha}}
:=\{\varTheta:\|\nabla_1\varTheta\|_0 \leq \alpha_1, \|\nabla_2\varTheta\|_0 \leq \alpha_2\}
$,
and $\bm{\alpha} = (\alpha_1, \alpha_2) \in \mathbb{Z}_+^2$. The parameters $\alpha_1$ and $\alpha_2$ control the number of vertical and horizontal changes in $\varTheta$, respectively, and thus determine the flexibility of the spatial grouping map.

The AAG penalty interpolates between two meaningful extremes, as formalized below.
\begin{theorem}\label{Thm:LOP}
	For a tensor $\E \in \R^{n_1\times n_2 \times n_3}$, we have
	\begin{itemize}
		\item[(1)] $\Psi_{(0,0)}(\E)=\sqrt{n_1n_2}\|\E\|$;
		\item[(2)] $\lim_{(\alpha_1,\alpha_2) \to (\infty,\infty)}\Psi_{(\alpha_1,\alpha_2)}(\E)=\|\E\|_{2,1}$.
	\end{itemize}
\end{theorem}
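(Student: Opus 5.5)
For part (1), I would first identify the feasible set $\mathbb{S}_{(0,0)}$ and then solve a one-dimensional minimization. For part (2), I would show that for large enough $\bm{\alpha}$ the constraint disappears and the problem splits into independent per-pixel problems. The only tool needed is the elementary scalar identity behind \cite[Lemma 1]{KK22}: for fixed $e$ and $m\ge 1$ copies,
\[
\min_{\vartheta}\Big(\frac{a}{2\vartheta}+\frac{m\vartheta}{2}\Big)=\sqrt{ma}
\quad\text{for } a>0,
\]
attained at $\vartheta=\sqrt{a/m}$. For $e=0$, the value $0$ is attained at $\vartheta=0$, and negative $\vartheta$ gives $+\infty$.

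\textbf{Part (1).} When $\alpha_1=\alpha_2=0$, the constraint forces $\nabla_1\varTheta=0$ and $\nabla_2\varTheta=0$. So every pair of vertically or horizontally adjacent entries of $\varTheta$ is equal. Since the pixel grid is connected under these adjacencies, $\varTheta\equiv c$ for some scalar $c$. The objective then becomes $\sum_{ij}\phi(\|\E_{ij:}\|,c)$, and I would split into cases.
\begin{itemize}
\item If $\E\neq 0$: $c\le 0$ gives $+\infty$, because at least one pixel has $\|\E_{ij:}\|>0$. For $c>0$ the objective is $\|\E\|^2/(2c)+n_1n_2c/2$, using $\sum_{ij}\|\E_{ij:}\|^2=\|\E\|^2$. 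Its minimum is $\sqrt{n_1n_2}\,\|\E\|$, attained at $c=\|\E\|/\sqrt{n_1n_2}$.
\item If $\E=0$: choosing $c=0$ gives $0=\sqrt{n_1n_2}\|\E\|$, and no value can be negative, since $\phi\ge 0$.
\end{itemize}

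\textbf{Part (2).} Enlarging $\bm{\alpha}$ enlarges $\mathbb{S}_{\bm{\alpha}}$, so $\Psi_{\bm{\alpha}}(\E)$ is nonincreasing in each $\alpha_i$. Moreover $\nabla_1\varTheta$ has at most $(n_1-1)n_2$ entries and $\nabla_2\varTheta$ at most $n_1(n_2-1)$ entries. Hence for $\alpha_1\ge (n_1-1)n_2$ and $\alpha_2\ge n_1(n_2-1)$, the constraint is vacuous and $\mathbb{S}_{\bm{\alpha}}=\R^{n_1\times n_2}$. In this case the sequence is eventually constant, so the limit equals the unconstrained value.

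Without the constraint, the objective separates over pixels. Each term satisfies $\min_{\vartheta}\phi(\|\E_{ij:}\|,\vartheta)=\|\E_{ij:}\|$: apply the scalar identity with $m=1$, taking $\vartheta=\|\E_{ij:}\|$, and use the case $e=\vartheta=0$ for zero pixels. Summing over pixels gives $\sum_{ij}\|\E_{ij:}\|=\|\E\|_{2,1}$.

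\textbf{Main obstacle.} The only delicate point is the careful treatment of the extended-valued cases in $\phi$: zero pixels, $\vartheta=0$, and negative $\vartheta$. I would settle these explicitly in both parts so that the minima are genuinely attained and no infimum is left unattained at $\vartheta\to 0^+$.
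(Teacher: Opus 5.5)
Your proposal is correct and matches the natural route the paper's variational identity is set up for (the paper defers its own proof to the supplementary material, which is not included here): at $\bm{\alpha}=(0,0)$ the map $\varTheta$ is forced to be constant, which reduces $\Psi_{(0,0)}(\E)$ to the single-block identity, and for large $\bm{\alpha}$ the constraint is vacuous, so the problem separates pixelwise into $\min_{\vartheta}\phi(\|\E_{ij:}\|,\vartheta)=\|\E_{ij:}\|$. One minor point: the paper diagonalizes $\nabla_u^{\top}\nabla_u$ with a 2D FFT, which indicates periodic differences, so $\nabla_1\varTheta$ and $\nabla_2\varTheta$ each have $n_1n_2$ entries rather than $(n_1-1)n_2$ and $n_1(n_2-1)$; this only changes the threshold beyond which the constraint is vacuous, and the constancy argument in part (1) is unaffected.
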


\begin{remark}
	Theorem \ref{Thm:LOP} shows that $\Psi_{\bm{\alpha}}$ bridges the coarsest and finest spatial partitions. Thus, it preserves spectral group sparsity while allowing the spatial grouping pattern to adapt to the data.
\end{remark}
\begin{remark}
	Compared with $\|\cdot\|_{2,1}$ based detectors, AAG estimates both the anomaly tensor $\E$ and the grouping map $\varTheta$. The piecewise constant structure of $\varTheta$ encodes learned spatial groups and provides a complementary spatially coherent anomaly response.
\end{remark}

\subsection{Proposed GSAA-SS Model}
By combining the group sparse low-rank background factorization with the AAG penalty, we obtain the unified GSAA model:
\begin{equation}\label{model:OBP_GS}
	\begin{array}{cl}
		\mathop{\arg\min}\limits_{\X,\Y,\E,\varTheta}&    \|\X\|_{F,p} + \|\Y\|_{F,p} + \gamma\varPhi(\E, \varTheta)      \\
		\mbox{\rm s.t.}& \mH = \X*\Y^{\top} + \E,~ \varTheta \in \mathbb{S}_{\bm{\alpha}}.
	\end{array}
\end{equation}
where $\varPhi(\E, \varTheta) =  \sum_{i=1}^{n_1}\sum_{j=1}^{n_2} \phi(\|\E_{ij:}\|, \varTheta_{ij})$.

In model \eqref{model:OBP_GS}, the factors $\X$ and $\Y$ model the low-rank background through group sparse regularization, whereas the pair $(\E,\varTheta)$ captures the anomaly component together with its adaptively learned spatial grouping structure.

To further exploit the complementary spectral and spatial information in HSIs, we extend the unified GSAA model to a spectral--spatial detection framework, termed GSAA-SS. Specifically, GSAA is separately applied in the spectral and spatial domains of the same HSI $\mH\in\mathbb{R}^{n_1\times n_2\times n_3}$, and the resulting detection maps are then fused. In the spectral branch, a matrix version of GSAA is applied to the mode-3 unfolding of $\mH$ to characterize the common background spectral subspace. Pixels that are poorly represented by this subspace yield large residuals and are therefore identified as spectral anomalies. In the spatial branch, principal component analysis (PCA) \cite{RS02} is first used to reduce the spectral dimension and obtain a compact tensor $\tilde{\mH}\in\mathbb{R}^{n_1\times n_2\times b}$ with $b\ll n_3$, while preserving the main spatial structure \cite{JC16}. GSAA is then applied to $\tilde{\mH}$ within the t-product framework to capture its low-rank background spatial structure. The overall workflow is illustrated in Fig.~\ref{fig:flow} and consists of the following three stages.

\par (1) \textbf{Spectral-domain detection:} 
The HSI is first unfolded along the spectral mode into the matrix $H_{(3)}$, and a matrix version of GSAA is then applied to the unfolded data:
\begin{equation*}
	\begin{array}{cl}
		\mathop{\arg\min}\limits_{X,Y,E,\varTheta}&    \|X\|_{F,p} + \|Y\|_{F,p} + \gamma\varPhi(\operatorname{fold}_{(3)}(E), \varTheta)      \\
		\mbox{\rm s.t.}& H_{(3)} = XY^{\top} + E,~ \varTheta \in \mathbb{S}_{\bm{\alpha}}.
	\end{array}
\end{equation*}
The optimized AAG map in this branch is denoted by $\varTheta_{spe}$.

\par (2) \textbf{Spatial domain detection:} For the tensor $\tilde{\mH}$ obtained by PCA, GSAA is applied within the t-product framework:
\begin{equation*}
	\begin{array}{cl}
		\mathop{\arg\min}\limits_{\X,\Y,\E,\varTheta}&    \|\X\|_{F,p} + \|\Y\|_{F,p} + \gamma\varPhi(\E, \varTheta)      \\
		\mbox{\rm s.t.}& \tilde{\mH} = \X*\Y^{\top} + \E,~ \varTheta \in \mathbb{S}_{\bm{\alpha}}.
	\end{array}
\end{equation*}
The optimized AAG map in this branch is denoted by $\varTheta_{spa}$.

\par (3) \textbf{Spectral--spatial fusion:} The AAG maps $\varTheta_{spe}$ and $\varTheta_{spa}$ obtained from the spectral and spatial branches, respectively, are fused via element-wise multiplication:
$$
\varTheta_{fus}=\varTheta_{spa}\odot \varTheta_{spe},
$$
where $\odot$ denotes the Hadamard product.

\begin{figure}[htbp]
	\centering
	\includegraphics[width=1\linewidth]{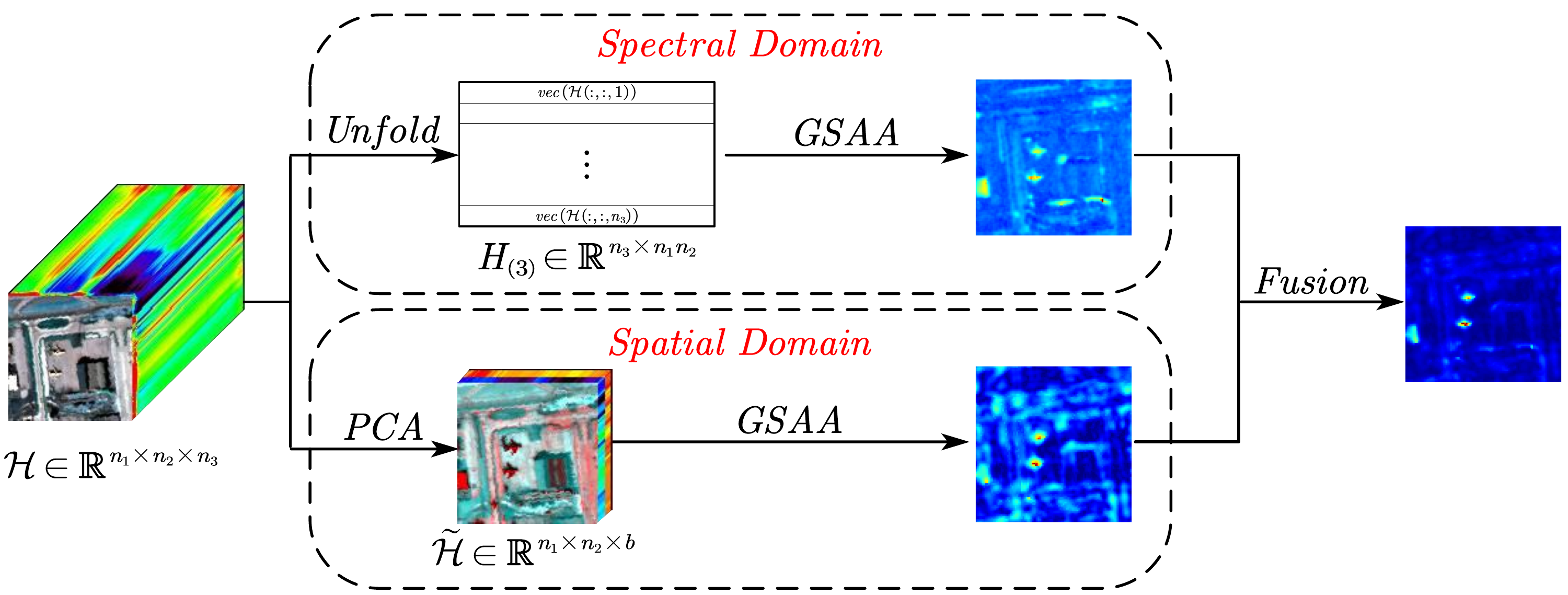}
	\caption{Flow chart of GSAA-SS.}
	\label{fig:flow}
\end{figure}

\section{Optimization by Linearized Alternating Direction Method of Multipliers}\label{Sec:Alg}
This section develops an efficient LADMM algorithm for solving the proposed GSAA model and provides convergence analysis.

\subsection{LADMM Algorithm}
To account for Gaussian noise in real HSI observations, the constrained GSAA model in \eqref{model:OBP_GS} is converted into the following penalized formulation:
\begin{multline}\label{model:OBP_GS_Noise}
	\mathop{\arg\min}\limits_{\X,\Y,\E,\varTheta\in \mathbb{S}_{\bm{\alpha}}}    \|\X\|_{F,p} + \|\Y\|_{F,p} + \gamma\varPhi(\E, \varTheta)\\+\frac{\lambda}{2}\big\|\X*\Y^{\top} + \E-\mH\big\|^2.     
\end{multline}

By introducing four auxiliary variables $\N$, $\Upsilon$, $U$ and $V$, problem \eqref{model:OBP_GS_Noise} can be rewritten as:
\begin{equation}\label{pro:av}
	\begin{array}{cl}
		\mathop{\arg\min}\limits_{\X,\Y,\E,\varTheta,\N,\Upsilon,U,V}&    \big\|\X\big\|_{F,p} + \big\|\Y\big\|_{F,p} + \gamma\varPhi(\E, \varTheta)+\frac{\lambda}{2}\big\|\X\\
		&*\Y^{\top} + \N-\mH\big\|^2  + \delta_{\mathbb{S}_1}\big(U\big) + \delta_{\mathbb{S}_2}\big(V\big)          \\
		\mbox{\rm s.t.}& \N = \E, \Upsilon = \varTheta, ~U=\nabla_1\Upsilon,~V=\nabla_2\Upsilon,
	\end{array}
\end{equation}
where $ \delta_{\mathbb{S}}(\cdot) $ is the indicator function, $ \mathbb{S}_1 = \big\lbrace U:\|U\|_0 \leq \alpha_1\rbrace $, and $ \mathbb{S}_2 = \lbrace V:\|V\|_0 \leq \alpha_2\big\rbrace  $. The augmented Lagrangian function of \eqref{pro:av} is given by
\begin{equation*}
	\begin{aligned}
		&\bL(\X,\Y,\E,\varTheta,\N,\Upsilon,U,V; \{\Lambda_u\}_{u=1}^4, \{\beta_u\}_{u=1}^2)\\
		=\ &\|\X\|_{F,p} + \|\Y\|_{F,p} + \gamma\varPhi(\E, \varTheta)+\frac{\lambda}{2}\big\|\X*\Y^{\top} + \N-\mH\big\|^2 \\&+ \delta_{\mathbb{S}_1}\big(U\big) + \delta_{\mathbb{S}_2}\big(V\big)+\big\langle \Lambda_1,  \E-\N\big\rangle + \frac{\beta_1}{2}\big\| \E-\N\big\|^2\\
		&+\big\langle \Lambda_2,  \varTheta-\Upsilon\big\rangle + \frac{\beta_1}{2}\big\|\varTheta-\Upsilon\big\|^2+\big\langle \Lambda_3,  \nabla_1\Upsilon-U\big\rangle \\
		&+ \frac{\beta_2}{2}\big\|\nabla_1\Upsilon-U\big\|^2+\big\langle \Lambda_4,  \nabla_2\Upsilon-V\big\rangle + \frac{\beta_2}{2}\big\|\nabla_2\Upsilon-V\big\|^2.
	\end{aligned}
\end{equation*}
where $\Lambda_u, u\in[4]$ are the Lagrange multipliers, and $\beta_u>0, u\in[2]$ are the penalty parameters. The variables are updated alternately, and the resulting subproblems admit closed-form or proximal solutions as described below.

\subsubsection{Update $\mathcal{X}^{t+1}$ and $\mathcal{Y}^{t+1}$}
The sub-problem to update $\mathcal{X}$ is
$$
\mathop{\arg\min}_{\X}~ \|\X\|_{F,p} + \lambda f(\X,\Y^t),
$$
where $f(\X,\Y) = \frac{1}{2}\|\X*\Y^{\top} + \N^t-\mH\|^2$.
To address the aforementioned problem, we linearize the term $f(\X,\Y^t)$ at the current iterate point $\X^t$. Consequently, the original problem can be reformulated in a relaxed manner as
\begin{multline*}
\mathop{\arg\min}_{\X}~\|\X\|_{F,p} + \lambda\big\langle\nabla_{\X}f(\X^t,\Y^t), \X-\X^t\big\rangle\\ + \frac{\lambda l_{\X}^t}{2}\|\X-\X^t\|^2.
\end{multline*}
Then the update of $\X$ is given by
\begin{equation}\label{opt:X}
	\X^{t+1} = \operatorname{prox}_{1/(\lambda l_{\X}^t)\|\cdot\|_{F,p}}\big(\X^t-\nabla_{\X}f(\X^t,\Y^t)/l_{\X}^t\big),
\end{equation}
where $\nabla_{\X}f(\X^t,\Y^t) = (\X^t*{\Y^t}^{\top} + \N^t-\mH) * \Y^t$ and $ l_{\X}^t=\|\Y^t\|_2^2+\varepsilon $ with $ \varepsilon>0 $.

The sub-problem to update $\mathcal{Y}$ is
$$
\mathop{\arg\min}_{\Y}~ \|\Y\|_{F,p} + \lambda f(\X^{t+1},\Y).
$$
Analogous to the update of $\mathcal{X}$, the update of $\mathcal{Y}$ is given by
\begin{equation}\label{opt:Y}
	\Y^{t+1} = \operatorname{prox}_{1/(\lambda l_{\Y}^t)\|\cdot\|_{F,p}}\big(\Y^t-\nabla_{\Y}f(\X^{t+1},\Y^t)/l_{\Y}^t\big),
\end{equation}
where $\nabla_{\Y}f(\X^{t+1},\Y^t) = (\X^{t+1}*{\Y^t}^{\top} + \N^t-\mH)^{\top} * \X^{t+1}$ and $ l_{\Y}^t=\|\X^{t+1}\|_2^2+\varepsilon $.

\subsubsection{Update $\E^{t+1}$ and $\varTheta^{t+1}$}
The sub-problem to update $(\E, \varTheta)$ is
\begin{equation}\label{opt:E-Theta}
	\begin{aligned}
		&\mathop{\arg\min}\limits_{\E, \varTheta}~  \gamma\varPhi(\E, \varTheta) +  \big\langle\Lambda_1^t,  \E-\N^t\big\rangle + \frac{\beta_1^t}{2}\|\E-\N^t\|^2\\
		&+\big\langle\Lambda_2^t,  \varTheta-\Upsilon^t\big\rangle + \frac{\beta_1^t}{2}\|\varTheta-\Upsilon^t\|^2\\
		=&\operatorname{prox}_{\gamma/\beta_1^t\varPhi}(\N^t - \Lambda_1^t/\beta_1^t, \Upsilon^t-\Lambda_2^t/\beta_1^t).
	\end{aligned}
\end{equation}

\subsubsection{Update $\N^{t+1}$}
The sub-problem to update $\N$ is
\begin{equation}\label{sub:N}
	\begin{aligned}
		&\mathop{\arg\min}\limits_{\N}~\frac{\lambda}{2}\|\X^{t+1}*{\Y^{t+1}}^{\top} + \N-\mH\|^2 \\&
		+\langle \Lambda_1^t,  \E^{t+1}-\N\rangle + \frac{\beta_1^t}{2}\| \E^{t+1}-\N\|^2\\
		=	& (\lambda(\mH - \X^{t+1}*{\Y^{t+1}}^{\top}) + \Lambda_1^t + \beta_1^t\E^{t+1})/(\lambda + \beta_1^t).
	\end{aligned}
\end{equation}

\subsubsection{Update $\Upsilon^{t+1}$}
The sub-problem to update $\Upsilon$ is
\begin{equation*}
	\begin{aligned}
		&\mathop{\arg\min}\limits_{\Upsilon}~\big\langle \Lambda_2^t,  \varTheta^{t+1}-\Upsilon\big\rangle + \frac{\beta_1^t}{2}\big\|\varTheta^{t+1}-\Upsilon\big\|^2
		\\
		&+\big\langle \Lambda_3^t,  \nabla_1\Upsilon-U^t\big\rangle + \frac{\beta_2^t}{2}\big\|\nabla_1\Upsilon-U^t\big\|^2\\&+\big\langle \Lambda_4^t,  \nabla_2\Upsilon-V^t\big\rangle + \frac{\beta_2^t}{2}\big\|\nabla_2\Upsilon-V^t\big\|^2.
	\end{aligned}
\end{equation*}
To optimize the problem, we reformulate it as the following linear system
\begin{equation}\label{ls}
	(\beta_1^tI+\beta_2^t\nabla_1^{\top}\nabla_1  +\beta_2^t\nabla_2^{\top}\nabla_2)\Upsilon=\chi_0 + \nabla_1^{\top}\chi_1 + \nabla_2^{\top}\chi_2,
\end{equation}
where $\chi_0 = \beta_1^t\varTheta^{t+1}+\Lambda_2^t$, $\chi_1 = \beta_2^tU^t - \Lambda_3^t$ and $\chi_2=\beta_2^tV^t - \Lambda_4^t$.
The operator $\nabla_u^{\top}\nabla_u$ corresponds to a block-circulant matrix, which can be diagonalized using a two-dimensional fast Fourier transform matrix. Applying the Fourier transform to both sides of equation \eqref{ls} and employing the convolution theorem, as shown in \cite{YB24,KF09}, we can readily derive the closed-form solution for $\Upsilon^{t+1}$ as follows
\begin{equation}\label{opt:Upsilon}
	\Upsilon^{t+1} = \F^{-1}\Big(\frac{\F(\chi_0)+\sum_{u=1}^2\F(\nabla_u)^{\top}\odot\F(\chi_u)}{\beta_1^t\bm{1}+\sum_{u=1}^2\beta_2^t|\F(\nabla_u)|^2}\Big),
\end{equation}
where $ \bm{1} $ represents the matrix with all elements as 1, $ \odot $ is the element-wise multiplication, $\F(\cdot)$ is the Fourier transform, and $|\cdot|^2$ is the element-wise square operation.

\subsubsection{Update $U^{t+1}$}
The sub-problem to update $U$ is
\begin{equation}\label{opt:U}
	\begin{aligned}
		&\mathop{\arg\min}\limits_{U \in \mathbb{S}_1}~ \langle \Lambda_3^t,  \nabla_1\Upsilon^{t+1}-U\rangle + \frac{\beta_2^t}{2}\|\nabla_1\Upsilon^{t+1}-U\|^2\\
		=&P_{\mathbb{S}_1}(\nabla_1\Upsilon^{t+1}+\Lambda_3^t/\beta_2^t).
	\end{aligned}
\end{equation}

\subsubsection{Update $V^{t+1}$}
The sub-problem to update $V$ is
\begin{equation}\label{opt:V}
	\begin{aligned}
		&\mathop{\arg\min}\limits_{V \in \mathbb{S}_2}~ \langle \Lambda_4^t,  \nabla_2\Upsilon^{t+1}-V\rangle + \frac{\beta_2^t}{2}\|\nabla_2\Upsilon^{t+1}-V\|^2\\
		=&P_{\mathbb{S}_2}(\nabla_2\Upsilon^{t+1}+\Lambda_4^t/\beta_2^t).
	\end{aligned}
\end{equation}

The complete LADMM procedure is summarized in Algorithm \ref{Alg:LADMM}.
\begin{algorithm}[htbp]
	\caption{The LADMM algorithm for problem \eqref{pro:av}}\label{Alg:LADMM}
	\KwIn{$\gamma>0$, $\lambda>0$, $p$, $\{\alpha_u\}_{u=1}^2$, $\{\beta_u^0, \rho_u\}_{u=1}^2$, and $\X^0$, $\Y^0$, $\E^0$, $\varTheta^0$,  $\N^0$, $\Upsilon^0$, $U^0$, $V^0$, $\{\Lambda_u^0\}_{u=1}^4$.}
	\KwOut{$\varTheta^{t+1}$.}
	$t\leftarrow 0$;
	
	\While{not converged}
	{
		Update $\X^{t+1}$ according to \eqref{opt:X}.
		
		Update $\Y^{t+1}$ according to \eqref{opt:Y}.
		
		Update $(\E^{t+1}, \varTheta^{t+1})$ according to \eqref{opt:E-Theta}.
		
		Update $\N^{t+1}$ according to \eqref{sub:N}.
		
		Update $\Upsilon^{t+1}$ according to \eqref{opt:Upsilon}.
		
		Update $U^{t+1}$ according to \eqref{opt:U}.
		
		Update $V^{t+1}$ according to \eqref{opt:V}.
		
		Update multipliers $\Lambda_u^{t+1}$ and penalty parameters $\beta_u^{t+1}$  according to 
		\begin{equation}\label{ADMM-2}
			\left\{\begin{array}{l}
				\Lambda_1^{t+1} = \Lambda_1^t + \beta_1^t\big(\E^{t+1}-\N^{t+1}\big);\\
				\Lambda_2^{t+1} = \Lambda_2^t + \beta_1^t\big(\varTheta^{t+1}-\Upsilon^{t+1}\big);\\
				\Lambda_3^{t+1} = \Lambda_3^t + \beta_2^t\big(\nabla_1\Upsilon^{t+1}-U^{t+1}\big);\\
				\Lambda_4^{t+1} = \Lambda_4^t + \beta_2^t\big(\nabla_2\Upsilon^{t+1}-V^{t+1}\big);\\
				\beta_1^{t+1} = \rho_1 \beta_1^t,~ \beta_2^{t+1} = \rho_2 \beta_2^t.
			\end{array}\right.
		\end{equation}
		
		Remove the zero lateral slices of $\X^{t+1}$ and $\Y^{t+1}$.
		
		$t\leftarrow t+1$\;
	}
\end{algorithm}

The remaining implementation details are the proximal mappings used in Algorithm \ref{Alg:LADMM}.
\par (1) $\operatorname{prox}_{\eta\|\cdot\|_{F,p}}(\cdot)$: Following \cite[Theorem 4.1]{YB24}, the proximal mapping is defined as
$$
[\operatorname{prox}_{\eta\|\cdot\|_{F,p}}(\Z)](:,j,:)=\begin{cases}\operatorname{prox}_{\eta |\cdot|^p}\big(z_j\big) \frac{\Z(:,j,:)}{z_j}, & \text{if}~z_j \neq 0; \\ 0, & \text{if}~z_j=0,\end{cases}
$$
where $z_j = \|\Z(:,j,:)\|$.
The proximal mapping of $|\cdot|^p$ considered here is classical, as given in \cite{MS12}.

\par (2) $\operatorname{prox}_{\eta\varPhi}(\cdot, \cdot)$: We first note that $\operatorname{prox}_{\eta\varPhi}(\tilde{\E}, \tilde{\varTheta})$ is separable, i.e.,
$$
[\operatorname{prox}_{\eta\varPhi}(\tilde{\E}, \tilde{\varTheta})]_{ij} = \operatorname{prox}_{\eta\tilde{\phi}}(\tilde{\E}_{ij:}, \tilde{\varTheta}_{ij}),
$$
where $\tilde{\phi}(\E_{ij:}, \varTheta_{ij}) = \phi(\|\E_{ij:}\|, \varTheta_{ij})$.
To further simplify the computation, we introduce the following lemma.
\begin{lem}
	For a given vector $\tilde{\bm{x}}$ and scalar $\tilde{y}$, we have
	$$
	\operatorname{prox}_{\eta\tilde{\phi}}(\tilde{\bm{x}}, \tilde{y}) = \begin{cases}(\frac{z_1}{\|\tilde{\bm{x}}\|}\tilde{\bm{x}},z_2), & \text{if}~\tilde{\bm{x}} \neq \bm{0}; \\ (\bm{0},\max\{\tilde{y}-\eta/2, 0\}), & \text{if}~\tilde{\bm{x}}=\bm{0},\end{cases}
	$$
	where $(z_1, z_2) \in \operatorname{prox}_{\eta\phi}(\|\tilde{\bm{x}}\|, \tilde{y})$.
\end{lem}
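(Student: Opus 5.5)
The idea is to reduce the vector--scalar proximal problem to a scalar pair problem by exploiting rotational invariance in the first argument. We want to minimize
\[
\eta\,\phi(\|\bm{x}\|, y) + \tfrac12\|\bm{x}-\tilde{\bm{x}}\|^2 + \tfrac12 (y-\tilde{y})^2
\]
over $(\bm{x},y)$. The penalty depends on $\bm{x}$ only through $\|\bm{x}\|$. So we write $\bm{x} = s\bm{u}$ with $s=\|\bm{x}\|\ge 0$ and $\|\bm{u}\|=1$, and optimize over the direction first.

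\textbf{Case $\tilde{\bm{x}}\neq\bm{0}$.} For fixed $s$ and $y$, expand
\[
\tfrac12\|s\bm{u}-\tilde{\bm{x}}\|^2 = \tfrac12 s^2 - s\langle \bm{u},\tilde{\bm{x}}\rangle + \tfrac12\|\tilde{\bm{x}}\|^2 .
\]
This is minimized over unit $\bm{u}$ by $\bm{u}=\tilde{\bm{x}}/\|\tilde{\bm{x}}\|$, by Cauchy--Schwarz, and the minimum value is $\tfrac12(s-\|\tilde{\bm{x}}\|)^2$. The choice does not depend on $s$ or $y$; when $s=0$ the direction is irrelevant. The joint problem therefore reduces to
\[
\min_{s\ge0,\,y}\; \eta\,\phi(s,y) + \tfrac12(s-\|\tilde{\bm{x}}\|)^2 + \tfrac12(y-\tilde{y})^2 .
\]
This differs from $\operatorname{prox}_{\eta\phi}(\|\tilde{\bm{x}}\|,\tilde{y})$ only by the sign restriction $s\ge0$. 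Since $\phi(e,\vartheta)$ depends on $e$ only through $|e|$ and $\|\tilde{\bm{x}}\|>0$, replacing any $e<0$ by $|e|$ does not increase the objective. Hence some minimizer has $z_1\ge0$, and the sign restriction can be dropped. Every minimizer of the reduced problem then yields a minimizer $(z_1\tilde{\bm{x}}/\|\tilde{\bm{x}}\|, z_2)$ of the original problem, which is the first case of the lemma.

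\textbf{Case $\tilde{\bm{x}}=\bm{0}$.} Here the direction is free and the objective in $s$ becomes $\eta\phi(s,y)+\tfrac12 s^2 + \tfrac12(y-\tilde{y})^2$. Fix $y$ and split by its sign.
\begin{itemize}
\item If $y>0$, the terms in $s$ are $\eta s^2/(2y) + s^2/2$, so $s=0$ is optimal. The remaining problem is $\min_{y>0}\; \eta y/2 + \tfrac12 (y-\tilde{y})^2$.
\item If $y=0$, the definition of $\phi$ forces $s=0$, with value $\tfrac12\tilde{y}^2$.
\item If $y<0$, the objective is infinite.
\end{itemize}
Combining these, we minimize the convex function $\eta y/2 + \tfrac12(y-\tilde{y})^2$ over $y\ge0$. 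Its minimizer is the projection $\max\{\tilde{y}-\eta/2,0\}$. Note that $y=0$ is feasible with the matching value $\eta\cdot 0/2 + \tfrac12\tilde{y}^2 = \tfrac12\tilde{y}^2$, so including it introduces no inconsistency. This gives $(\bm{0},\max\{\tilde{y}-\eta/2,0\})$.

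\textbf{Main obstacle.} The delicate point is the extended-valued definition of $\phi$ at $\vartheta=0$. In the first case we must check that the directional reduction remains valid when $z_1=0$. This holds because then $\bm{x}=\bm{0}$ whatever $\bm{u}$ is, and the formula still returns $\bm{0}$. We must also check that $\phi$ is lower semicontinuous, so that the minimum in the reduced problem is attained and $\operatorname{prox}_{\eta\phi}$ is nonempty. Lower semicontinuity follows since $e^2/(2\vartheta)+\vartheta/2\ge |e|\to\infty$ appropriately as $\vartheta\downarrow0$ with $e\neq0$. Coercivity of the quadratic terms then gives existence of a minimizer.
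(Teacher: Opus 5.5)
Your proof is correct and follows essentially the paper's route. For $\tilde{\bm{x}}\neq\bm{0}$ the paper compares an arbitrary competitor $(\bm{x},y)$ with the scalar pair $(\|\bm{x}\|,y)$ via $(\|\bm{x}\|-\|\tilde{\bm{x}}\|)^2\le\|\bm{x}-\tilde{\bm{x}}\|^2$; this is your Cauchy--Schwarz direction step in one line, and it needs no sign discussion because $\|z_1\tilde{\bm{x}}/\|\tilde{\bm{x}}\|-\tilde{\bm{x}}\|=|z_1-\|\tilde{\bm{x}}\||$ for every real $z_1$. Your explicit treatment of $\tilde{\bm{x}}=\bm{0}$ fills in what the paper simply calls clear.

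One small slip in the lower semicontinuity remark: the blow-up as $\vartheta\downarrow0$ with $e\neq0$ comes from $e^2/(2\vartheta)\to\infty$, not from the lower bound $|e|$, which stays finite.
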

\begin{proof}
	It is clear that $\operatorname{prox}_{\eta\tilde{\phi}}(\tilde{\bm{x}}, \tilde{y}) = (\bm{0},\max\{\tilde{y}-\eta/2, 0\})$ when $\tilde{\bm{x}}=\bm{0}$. Therefore, we only need to consider $\tilde{\bm{x}} \neq \bm{0}$ in the following. From $(z_1, z_2) \in \operatorname{prox}_{\eta\phi}(\|\tilde{\bm{x}}\|, \tilde{y})$, one has
	\begin{equation*}
		\begin{aligned}
			&\eta\tilde{\phi}(\frac{z_1}{\|\tilde{\bm{x}}\|}\tilde{\bm{x}},z_2) + \frac{1}{2}\|\frac{z_1}{\|\tilde{\bm{x}}\|}\tilde{\bm{x}} - \tilde{\bm{x}}\|^2 + \frac{1}{2}(z_2-\tilde{y})^2\\
			=&\eta\phi(z_1,z_2) + \frac{1}{2}(z_1-\|\tilde{\bm{x}}\|)^2+ \frac{1}{2}(z_2-\tilde{y})^2\\
			\le & \eta \phi(\|\bm{x}\|,y)+ \frac{1}{2}(\|\bm{x}\|-\|\tilde{\bm{x}}\|)^2+ \frac{1}{2}(y-\tilde{y})^2\\
			\le & \eta\tilde{\phi}(\bm{x},y) + \frac{1}{2}\|\bm{x}-\tilde{\bm{x}}\|^2+ \frac{1}{2}\|y-\tilde{y}\|^2,
		\end{aligned}
	\end{equation*}
	which completes the proof.
\end{proof}

By this lemma, computing $\operatorname{prox}_{\eta\varPhi}(\tilde{\E}, \tilde{\varTheta})$ reduces to evaluating $\operatorname{prox}_{\eta\phi}(\|\tilde{\bm{x}}\|, \tilde{y})$, whose closed form is given in \cite[Example 2.4]{CM20}:
$$
\begin{aligned}
	&\operatorname{prox}_{\eta \phi}(\|\tilde{\bm{x}}\|, \vartheta) = \\&\begin{cases}(0,0), & \text{if}~ 2 \eta \vartheta+\|\tilde{\bm{x}}\|^2 \leq \eta^2; \\
		\big(0, \tilde{y}-\frac{\eta}{2}\big), & \text{if}~\|\tilde{\bm{x}}\|=0~\text{and}~2 \tilde{y}>\eta ; \\
		\big(\|\tilde{\bm{x}}\|-\eta s, \tilde{y}+\eta \frac{s^2-1}{2}\big), & \text{otherwise}.\end{cases}
\end{aligned}
$$
Let $a = \frac{2}{\eta}\tilde{y}+1$, $b = -\frac{2}{\eta}\|\tilde{\bm{x}}\|$, and $c = -\frac{b^2}{4}-\frac{a^3}{27}$. The variable $s$ is computed by
$$
s= \begin{cases}\sqrt[3]{-\frac{b}{2}+\sqrt{-c}}+\sqrt[3]{-\frac{b}{2}-\sqrt{-c}}, & \text{if}~ c<0; \\ 2 \sqrt[3]{-\frac{b}{2}}, & \text{if}~c=0; \\ 2 \sqrt[3]{\sqrt{\frac{b^2}{4}+c} \cos \big(\frac{\arctan (-2 \sqrt{c} / b)}{3}\big),} & \text{if}~ c>0.\end{cases}
$$

\par (3) $P_{\mathbb{S}}(\cdot)$: Let $\bm{x} \in \mathbb{R}^n$ be an arbitrary vector and $ \alpha > 0  $ a given radius. The projection of $ \bm{x} $ onto the $\ell_0$-ball,  defined as $ \mathbb{S} = \{\bm{z}\in \mathbb{R}^n|\|\bm{z}\|_0 \leq \alpha\} $, is denoted $P_{\mathbb{S}}(\bm{x}) $. This projection can be expressed concisely and computed efficiently as follows
\[
P_{\mathbb{S}}(\bm{x}) = 
\begin{cases} 
	\bm{x}, & \text{if}~ \|\bm{x}\|_0 \leq \alpha; \\
	\text{sign}(\bm{x}) \odot \max\big(|\bm{x}| - \varpi, 0\big), & \text{otherwise},
\end{cases}
\]
where $\varpi$ is defined as the ($\alpha$+1)-th largest absolute value in the descendingly sorted vector $|\bm{x}|$ components.

\subsection{Computation Complexity}
In each iteration of Algorithm \ref{Alg:LADMM}, the dominant cost comes from t-products and FFT operations. Updating $\X$ and $\Y$ requires two t-products with complexity $\mathcal{O}((d n_{1}+d n_{2}+n_{1}n_{2})n_{3}\log n_{3}+dn_{1}n_{2}n_{3})$, while updating $\N$ requires one t-product with complexity $\mathcal{O}(d(n_{1}+n_{2})n_{3}\log n_{3}+dn_{1}n_{2}n_{3})$. The update of $\Upsilon$ is solved by FFT and inverse FFT with complexity $\mathcal{O}(n_{1}n_{2}\log(n_{1}n_{2}))$. The updates of $\E$ and $\varTheta$ cost $\mathcal{O}(n_{1}n_{2}n_{3})$, and the projections for $U$ and $V$ cost $\mathcal{O}(n_{1}n_{2})$. Therefore, the overall per-iteration complexity is $\mathcal{O}((d(n_{1}+n_{2})+n_{1}n_{2})n_{3}\log n_{3} + dn_{1}n_{2}n_{3} + n_{1}n_{2}\log(n_{1}n_{2}))$.

\subsection{Convergence Analysis}
For convergence analysis, denote $\W = (\W_1, \W_2)$, where $\W_1 = (\X,\Y,\E,\varTheta,\N,\Upsilon,U,V)$ collects the primal variables and $\W_2 = (\Lambda_1,\dots, \Lambda_4)$ collects the dual variables. The following result establishes subsequential convergence of Algorithm \ref{Alg:LADMM} for problem \eqref{pro:av}.
\begin{theorem}\label{Thm:CA}
	Let $\{\W^t\}_{t=1}^{\infty}$ be the sequence produced by Algorithm \ref{Alg:LADMM}. Assume that the sequence  $\{\W_2^t\}_{t=1}^{\infty}$ is bounded. Then every accumulation point of $\{\W^t\}_{t=1}^{\infty}$ is a Karush--Kuhn--Tucker (KKT) point of the optimization problem \eqref{pro:av}.
\end{theorem}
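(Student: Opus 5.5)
The argument follows the usual template for ADMM with increasing penalties ($\beta_u^{t+1}=\rho_u\beta_u^t$, $\rho_u>1$) and bounded multipliers. We first use the multiplier updates \eqref{ADMM-2} to control the constraint residuals, then derive the optimality condition of each subproblem at iteration $t$, and finally pass to the limit along a convergent subsequence.

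\textbf{Step 1 (vanishing residuals).} From \eqref{ADMM-2} we have $\E^{t+1}-\N^{t+1}=(\Lambda_1^{t+1}-\Lambda_1^t)/\beta_1^t$, and the analogous identities hold for $\varTheta-\Upsilon$, $\nabla_1\Upsilon-U$ and $\nabla_2\Upsilon-V$. Since $\{\W_2^t\}$ is bounded and $\beta_u^t\to\infty$ geometrically, all four residuals tend to zero. Let $\W^\star$ be an accumulation point with $\W^{t_k}\to\W^\star$. The primal feasibility conditions $\N^\star=\E^\star$, $\Upsilon^\star=\varTheta^\star$, $U^\star=\nabla_1\Upsilon^\star$ and $V^\star=\nabla_2\Upsilon^\star$ then follow. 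Moreover, $U^\star\in\mathbb{S}_1$ and $V^\star\in\mathbb{S}_2$ because the $\ell_0$-balls are closed.

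\textbf{Step 2 (first-order conditions per block).} Next we write the optimality condition of each subproblem.
\begin{itemize}
\item For $\X$, by \eqref{opt:X}: $0\in\partial\|\X^{t+1}\|_{F,p}+\lambda\nabla_\X f(\X^t,\Y^t)+\lambda l_\X^t(\X^{t+1}-\X^t)$. The $\Y$-update gives the analogous condition.
\item For $(\E,\varTheta)$: $-\Lambda_1^t-\beta_1^t(\E^{t+1}-\N^t)\in\gamma\partial_\E\varPhi$, and correspondingly for $\varTheta$.
\item For $\N$, by \eqref{sub:N}: $\lambda(\X^{t+1}*{\Y^{t+1}}^\top+\N^{t+1}-\mH)=\Lambda_1^t+\beta_1^t(\E^{t+1}-\N^{t+1})=\Lambda_1^{t+1}$.
\item The $\Upsilon$ system \eqref{ls} similarly becomes $\Lambda_2^{t+1}-\nabla_1^\top\Lambda_3^{t}-\nabla_2^\top\Lambda_4^t=\beta_2^t(\cdots)$, which we rewrite in terms of $\Lambda^{t+1}$ plus correction terms involving $\beta_2^t(U^{t+1}-U^t)$.
\item The $U,V$ projections give $\Lambda_3^{t+1}\in N_{\mathbb{S}_1}(U^{t+1})$ and $\Lambda_4^{t+1}\in N_{\mathbb{S}_2}(V^{t+1})$ (limiting normal cone).
\end{itemize}
Each condition is expressed through $\Lambda^{t+1}$ plus error terms of the form $\beta^t(\text{difference of consecutive iterates})$ or $l^t(\X^{t+1}-\X^t)$.

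\textbf{Step 3 (main obstacle: the error terms).} This is the hard step. Boundedness of the multipliers alone does not give $\W^{t+1}-\W^t\to0$, so the errors $\beta_1^t(\N^{t+1}-\N^t)$, $\beta_2^t(U^{t+1}-U^t)$ and $l_\X^t(\X^{t+1}-\X^t)$ must be handled separately.

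For the coupling terms, the standard approach with increasing penalties is to rewrite, for example, $\beta_1^t(\E^{t+1}-\N^t)=\beta_1^t(\E^{t+1}-\N^{t+1})+\beta_1^t(\N^{t+1}-\N^t)$. The first summand equals $\Lambda_1^{t+1}-\Lambda_1^t$, which is bounded. For the second, we plan to use the closed form of $\N^{t+1}$ in \eqref{sub:N}: it gives $\N^{t+1}-\E^{t+1}=O(1/\beta_1^t)$, so $\beta_1^t(\N^{t+1}-\N^t)$ reduces to multiplier differences plus $\beta_1^t(\E^{t+1}-\E^t)$.

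We will therefore pass to a further subsequence along which consecutive multiplier differences converge. The KKT conditions are then stated in the form "there exist multipliers $\Lambda^\star$ with…", where $\Lambda^\star$ is the limit of the multipliers; in that formulation the mismatch between $\Lambda^t$ and $\Lambda^{t+1}$ is absorbed.

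For the linearized blocks, we use that $l_\X^t$ and $l_\Y^t$ are bounded along the subsequence, since the iterates converge. We then show $\X^{t+1}-\X^t\to0$ via a sufficient decrease of the linearized prox step (descent lemma, since $l_\X^t$ dominates the Lipschitz constant $\|\Y^t\|_2^2$ of $\nabla_\X f$) combined with a lower bound on the objective. If that fails, we restrict to the assumption, implicit in subsequential arguments of this type, that the accumulation point is approached with $\W^{t_k+1}-\W^{t_k}\to0$.

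\textbf{Step 4 (limit).} Finally we pass to the limit along $t_k$, using outer semicontinuity of the limiting subdifferentials of $\|\cdot\|_{F,p}$, $\varPhi$ and $\delta_{\mathbb{S}_u}$. We also need convergence of the function values: $\varPhi$ and $\|\cdot\|_{F,p}$ are continuous on their domains, and the prox optimality yields $\limsup$ of the function values. Together with feasibility from Step 1, this yields the KKT system of \eqref{pro:av} at $\W^\star$.
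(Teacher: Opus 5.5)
\textbf{There is a genuine gap in your Step~3, and neither remedy you sketch closes it.}

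Rewrite the subproblem optimality conditions with the updated multipliers. The $(\E,\varTheta)$-step becomes $0\in\gamma\partial\varPhi(\E^{t+1},\varTheta^{t+1})+\bigl(\Lambda_1^{t+1}+\beta_1^t(\N^{t+1}-\N^t),\,\Lambda_2^{t+1}+\beta_1^t(\Upsilon^{t+1}-\Upsilon^t)\bigr)$. The $\Upsilon$-step becomes $\Lambda_2^{t+1}=\nabla_1^{\top}\bigl(\Lambda_3^{t+1}+\beta_2^t(U^{t+1}-U^t)\bigr)+\nabla_2^{\top}\bigl(\Lambda_4^{t+1}+\beta_2^t(V^{t+1}-V^t)\bigr)$. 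So what you actually need is that the $\beta$-weighted differences $\beta_1^t(\N^{t+1}-\N^t)$, $\beta_1^t(\Upsilon^{t+1}-\Upsilon^t)$, $\beta_2^t(U^{t+1}-U^t)$ and $\beta_2^t(V^{t+1}-V^t)$ vanish along your subsequence.

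Bounded multipliers only make these terms bounded. The natural descent estimate combines strong convexity of the $(\E,\varTheta)$- and $\N$-subproblems with the summable increase $\frac{\beta_u^{t+1}+\beta_u^t}{2(\beta_u^t)^2}\|\Lambda^{t+1}-\Lambda^t\|^2$ of $\bL$ at the multiplier step. It yields only $\sum_t\beta_1^t\|\E^{t+1}-\E^t\|^2<\infty$, i.e.\ control of $\sqrt{\beta_1^t}$ times the differences, not of $\beta_1^t$ times them.

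Your two remedies do not help:
\begin{itemize}
\item ``Absorbing the mismatch'' into a limiting multiplier is not legitimate. One and the same $\Lambda_1^\star$ must satisfy $\Lambda_1^\star=\lambda(\X^\star*{\Y^\star}^{\top}+\N^\star-\mH)$ and $-\Lambda_1^\star\in\gamma\partial_{\E}\varPhi(\E^\star,\varTheta^\star)$. A nonzero limit of the bounded error term is exactly a violation of the second relation.
\item The fallback assumption $\W^{t_k+1}-\W^{t_k}\to0$ is an added hypothesis. It is also too weak, because the error terms carry the factor $\beta_u^t\to\infty$.
\end{itemize}
Two parts of your plan are fine. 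Your handling of the linearized $\X,\Y$ blocks via the descent lemma with $l_{\X}^t=\|\Y^t\|_2^2+\varepsilon$ works. Also, if you use the optimality conditions of the iteration that produces $\W^{t_k}$ rather than $\W^{t_k+1}$, only $\Lambda^{t_k}$ appears, so the $\Lambda^t$ versus $\Lambda^{t+1}$ issue disappears.

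\textbf{This step cannot be closed under the stated hypothesis alone, because the conclusion can fail.} Take $\mH=0$, $\X^0=\Y^0=0$, $\E^0=\N^0=0$, $U^0=V^0=0$, $\Lambda_u^0=0$ for $u\in[4]$, and $\varTheta^0=\Upsilon^0=c\bm{1}$ with $c>0$. Use the case $\tilde{\bm{x}}=\bm{0}$ of the lemma on $\operatorname{prox}_{\eta\tilde{\phi}}$ together with $\nabla_u\bm{1}=0$. By induction, the background factors, $\E^t$, $\N^t$, $U^t$, $V^t$ and all multipliers stay zero. Meanwhile $\varTheta^t=\Upsilon^t=\vartheta^t\bm{1}$ with $\vartheta^{t+1}=\max\{\vartheta^t-\gamma/(2\beta_1^t),0\}$.

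Since $\sum_t 1/\beta_1^t=\rho_1/(\beta_1^0(\rho_1-1))<\infty$, any $c>\gamma\rho_1/(2\beta_1^0(\rho_1-1))$ makes the iterates converge to a point with $\vartheta^\star=c-\gamma\rho_1/(2\beta_1^0(\rho_1-1))>0$. At this limit:
\begin{itemize}
\item The multipliers are bounded (identically zero), and even $\W^{t+1}-\W^t\to0$ holds.
\item Yet $\nabla_{\varTheta}\varPhi(0,\vartheta^\star\bm{1})=\frac{1}{2}\bm{1}$, so $\varTheta$-stationarity would need $\Lambda_2=-\frac{\gamma}{2}\bm{1}$.
\item This fails for every choice of multipliers. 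The relation $\Lambda_2=\nabla_1^{\top}\Lambda_3+\nabla_2^{\top}\Lambda_4$ forces the entries of $\Lambda_2$ to sum to zero, which $-\frac{\gamma}{2}\bm{1}$ does not.
\end{itemize}
Here $\beta_1^t(\Upsilon^{t+1}-\Upsilon^t)\equiv-\frac{\gamma}{2}\bm{1}$ is exactly your uncontrolled error term.

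So a correct statement needs an additional assumption, for example that the four $\beta$-weighted differences above tend to zero along the subsequence. Under such an assumption your Steps~1, 2 and~4 go through.
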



\section{Numerical Experiments}\label{Sec:Exp}
This section evaluates the proposed GSAA-SS detector on five real hyperspectral scenes.  All experiments were conducted on a computer with an Intel Core i5-12500H CPU (2.50 GHz) and 16 GB RAM. All methods were implemented in MATLAB R2022a and applied to the raw hyperspectral data without additional preprocessing.

The proposed method is compared with ten representative hyperspectral anomaly detectors, including RX \cite{RY90}, RPCA \cite{SLL14}, LRASR \cite{XWL16}, Turbo-GoDec \cite{SLC26}, PTA \cite{LLQ22}, TPCA \cite{CYW18}, PCA-TLRSR \cite{WWH23}, RGAE \cite{FMM22}, GAED \cite{XAJZ22}, and LCRS \cite{HYZY26}. RX is a classical statistical detector. RPCA is a matrix low-rank decomposition method that separates sparse anomalies from a low-rank background and then applies RX to the anomaly component. LRASR is a matrix representation based detector using a dictionary driven background representation. Turbo-GoDec extends the classical GoDec framework by incorporating a spatial cluster sparsity prior for anomalies. PTA and TPCA are tensor based low-rank detectors, where PTA additionally introduces total variation regularization to exploit spatial smoothness. PCA-TLRSR is a tensor representation based method that employs PCA for spectral redundancy reduction and subsequent tensor representation modeling. RGAE and GAED are deep learning based detectors. LCRS is an efficient saliency based detector that combines multi-scale spectral local contrast with three-dimensional spectral residual saliency.

For quantitative evaluation, let $P_D$ denote the probability of detection and $P_F$ denote the false-alarm rate. We use the area under the curve (AUC) of the receiver operating characteristic (ROC) curve as the primary metric, where the ROC curve plots $P_D$ against $P_F$. A larger AUC indicates better detection accuracy.

\subsection{Dataset Description}
The experiments use five real hyperspectral scenes selected from two public data sources, which are described in detail below.
\par (1) Airport-Beach-Urban Dataset\footnote{\url{http://xudongkang.weebly.com/data-sets.html}}: 
The Airport-Beach-Urban (ABU) dataset contains hyperspectral images from airport, beach, and urban scenes. The spatial size of each image is either \(100 \times 100\) or \(150 \times 150\), and the number of spectral bands is approximately 100 or 200 depending on the sensor. Four scenes are selected in this paper, including two airport scenes, one beach scene, and one urban scene. Their pseudo-color images and ground-truth maps are shown in Fig. \ref{fig:HSIs}(a)--(d).

\par (2) San Diego Dataset\footnote{\url{https://aviris.jpl.nasa.gov/data/index.html}}:
The San Diego dataset contains an airport scene captured by the Airborne Visible/Infrared Imaging Spectrometer (AVIRIS). It was acquired over San Diego with a spatial resolution of 3.5 m/pixel and contains 189 spectral bands covering wavelengths from 370 to 2510 nm. Following common experimental settings, a \(100 \times 100\) subimage is used. Its pseudo-color image and ground-truth map are shown in Fig. \ref{fig:HSIs}(e).

\begin{figure}[htbp]
	\centering
	\begin{subfigure}[b]{1\linewidth}
		\begin{minipage}{0.2\textwidth}
			\centering
			\includegraphics[width=1\textwidth]{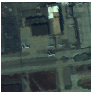}\vspace{0pt}
			\includegraphics[width=1\textwidth]{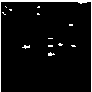}
			\caption{}
		\end{minipage}\hfill
		\begin{minipage}{0.2\textwidth}
			\centering
			\includegraphics[width=1\textwidth]{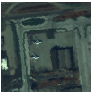}\vspace{0pt}
			\includegraphics[width=1\textwidth]{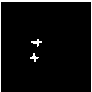}
			\caption{}
		\end{minipage}\hfill
		\begin{minipage}{0.199\textwidth}
			\centering
			\includegraphics[width=1\textwidth]{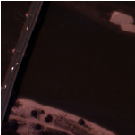}\vspace{0pt}
			\includegraphics[width=1\textwidth]{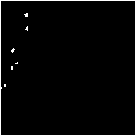}
			\caption{}
		\end{minipage}\hfill
		\begin{minipage}{0.199\textwidth}
			\centering
			\includegraphics[width=1\textwidth]{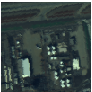}\vspace{0pt}
			\includegraphics[width=1\textwidth]{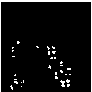}
			\caption{}
		\end{minipage}\hfill
		\begin{minipage}{0.2\textwidth}
			\centering
			\includegraphics[width=1\textwidth]{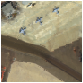}\vspace{0pt}
			\includegraphics[width=1\textwidth]{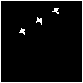}
			\caption{}
		\end{minipage}
	\end{subfigure}
	\vfill
	\caption{Pseudo-color images and ground-truth maps: (a) Airport1; (b) Airport2; (c) Beach; (d) Urban; (e) San Diego.}
	\label{fig:HSIs}
\end{figure}

\subsection{Detection Performance}
Fig. \ref{fig:2D} shows the detection maps produced by all compared methods on the five test scenes. The proposed GSAA-SS detector provides clearer anomaly responses while maintaining stronger background suppression in most cases. On Airport2, RX, RPCA, LRASR, Turbo-GoDec, TPCA, RGAE, and LCRS miss part of the anomalous targets, whereas PTA, PCA-TLRSR, and GAED recover more targets but also retain visible background interference, especially in the lower-left region. GSAA-SS better separates the target region from the background. On Urban, RX, RPCA, LRASR, Turbo-GoDec, and LCRS suppress the background effectively but also weaken several anomaly pixels. PTA, TPCA, PCA-TLRSR, RGAE, and GAED preserve more anomaly responses but introduce stronger false responses in the upper-right and lower-left areas. In contrast, GSAA-SS produces compact and prominent anomaly responses with fewer background artifacts.

\begin{figure*}[htbp]
	\centering
	\begin{subfigure}[b]{\linewidth}
		\centering
		\includegraphics[width=1\textwidth]{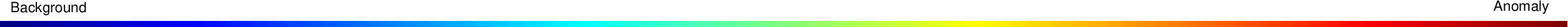}  
	\end{subfigure}
	\begin{subfigure}[b]{\linewidth}
		\centering
		\begin{subfigure}[b]{0.09\linewidth}
			\includegraphics[width=\linewidth]{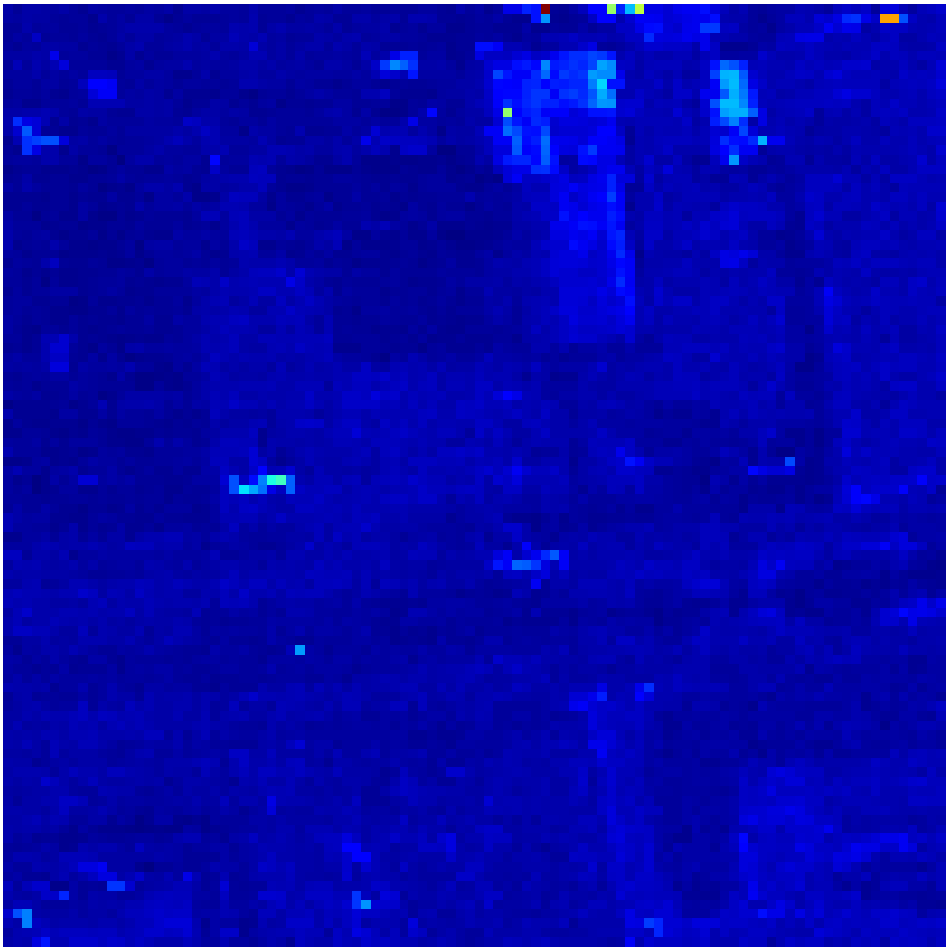}
			\includegraphics[width=\linewidth]{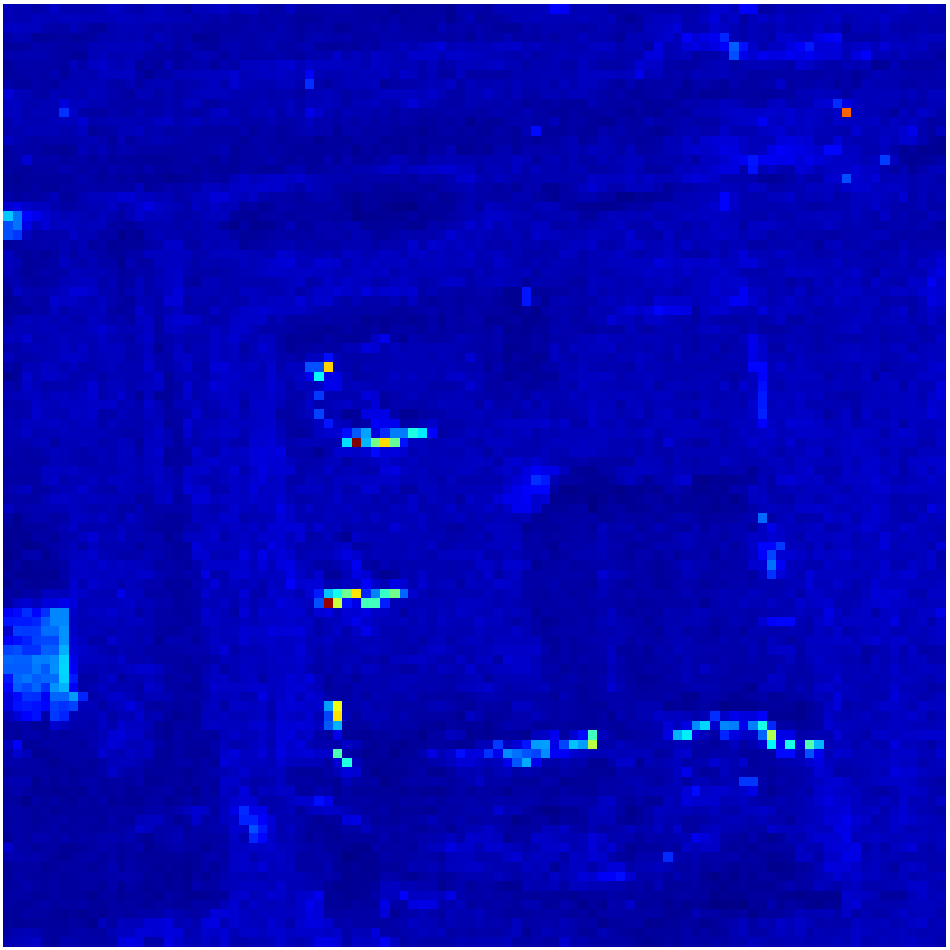}
			\includegraphics[width=\linewidth]{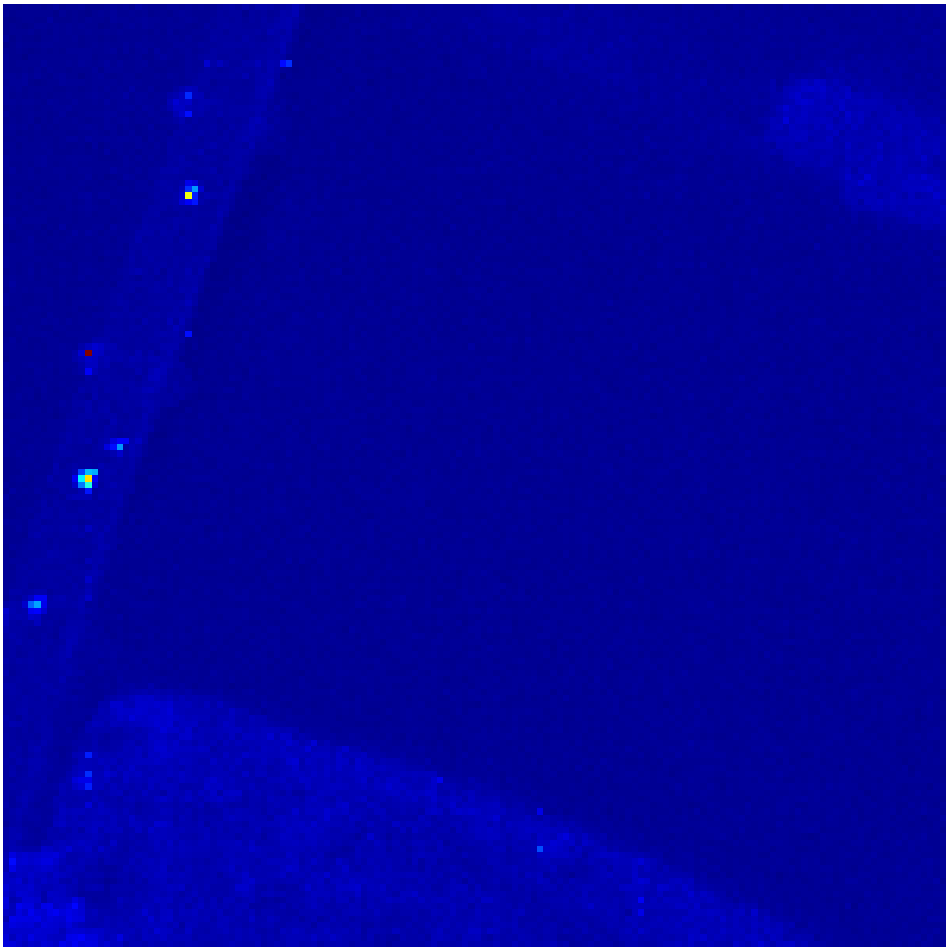}
			\includegraphics[width=\linewidth]{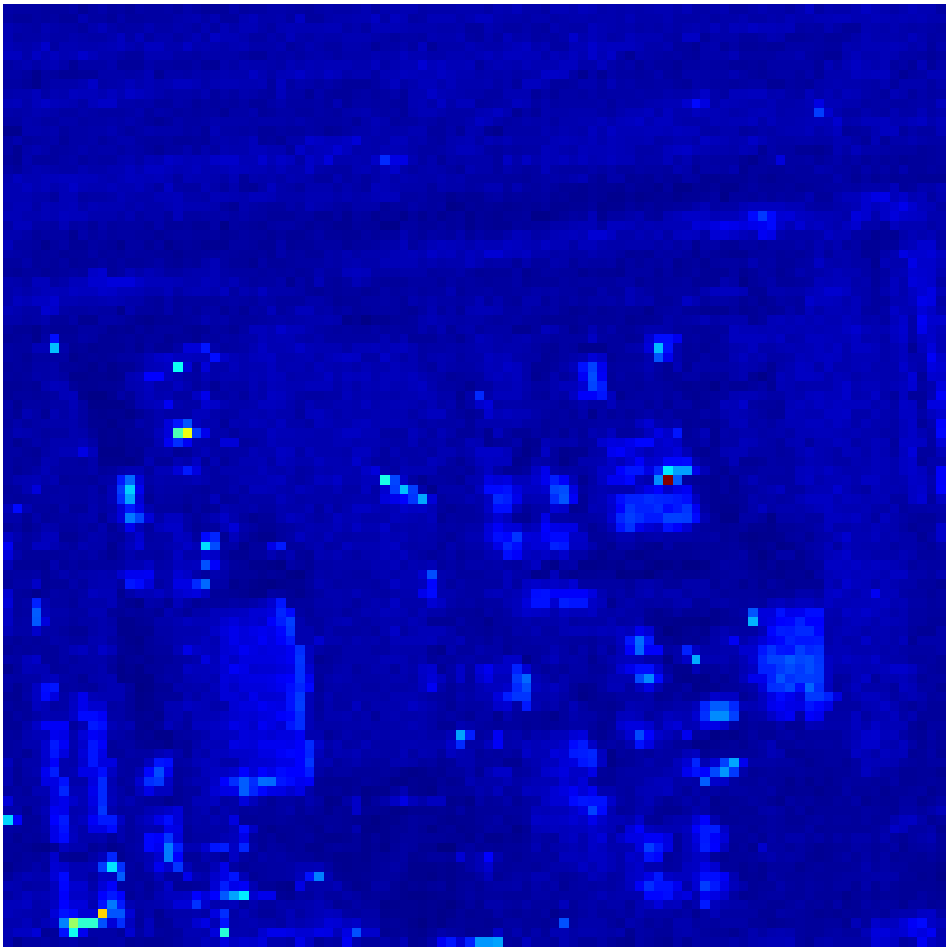}
			\includegraphics[width=\linewidth]{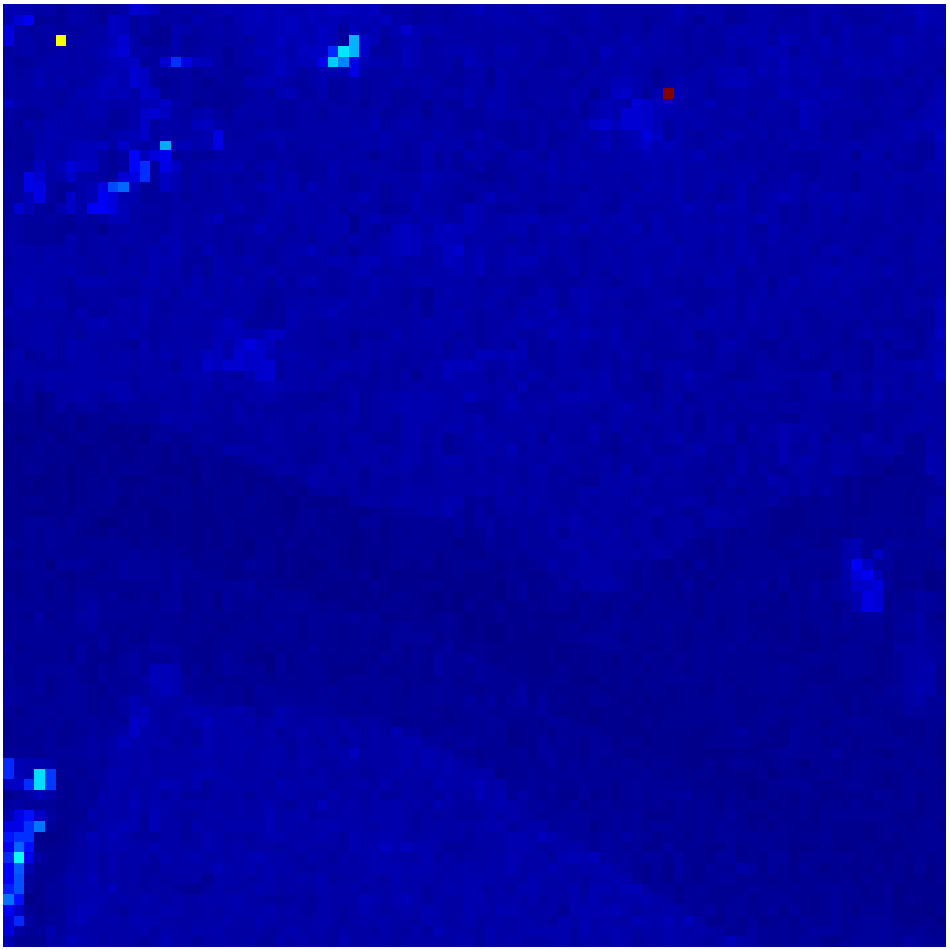}
			\caption{}
		\end{subfigure}\hfill
		\begin{subfigure}[b]{0.09\linewidth}
			\includegraphics[width=\linewidth]{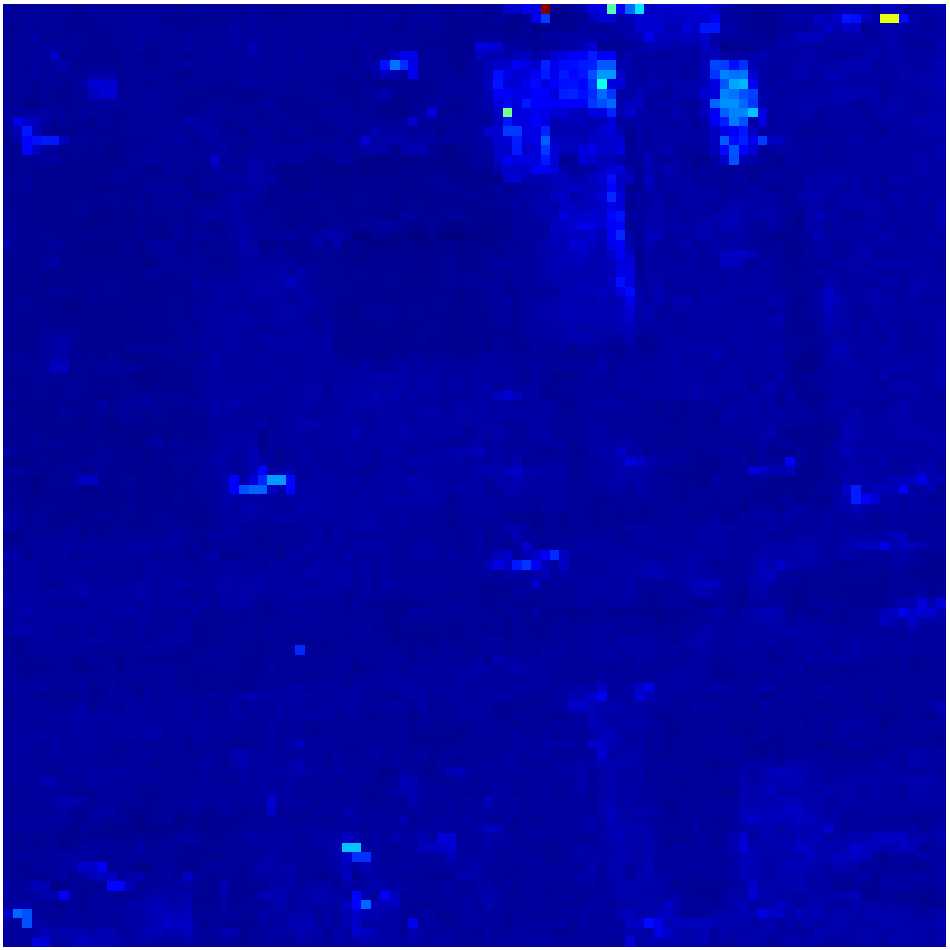}
			\includegraphics[width=\linewidth]{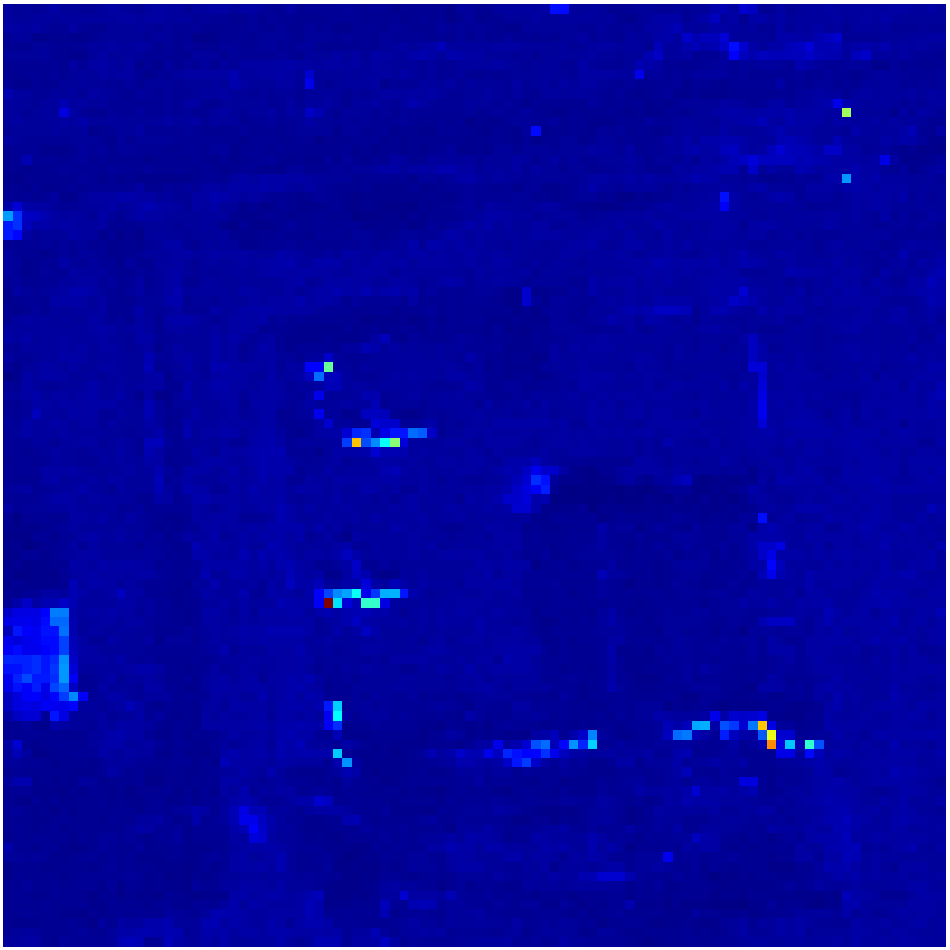}
			\includegraphics[width=\linewidth]{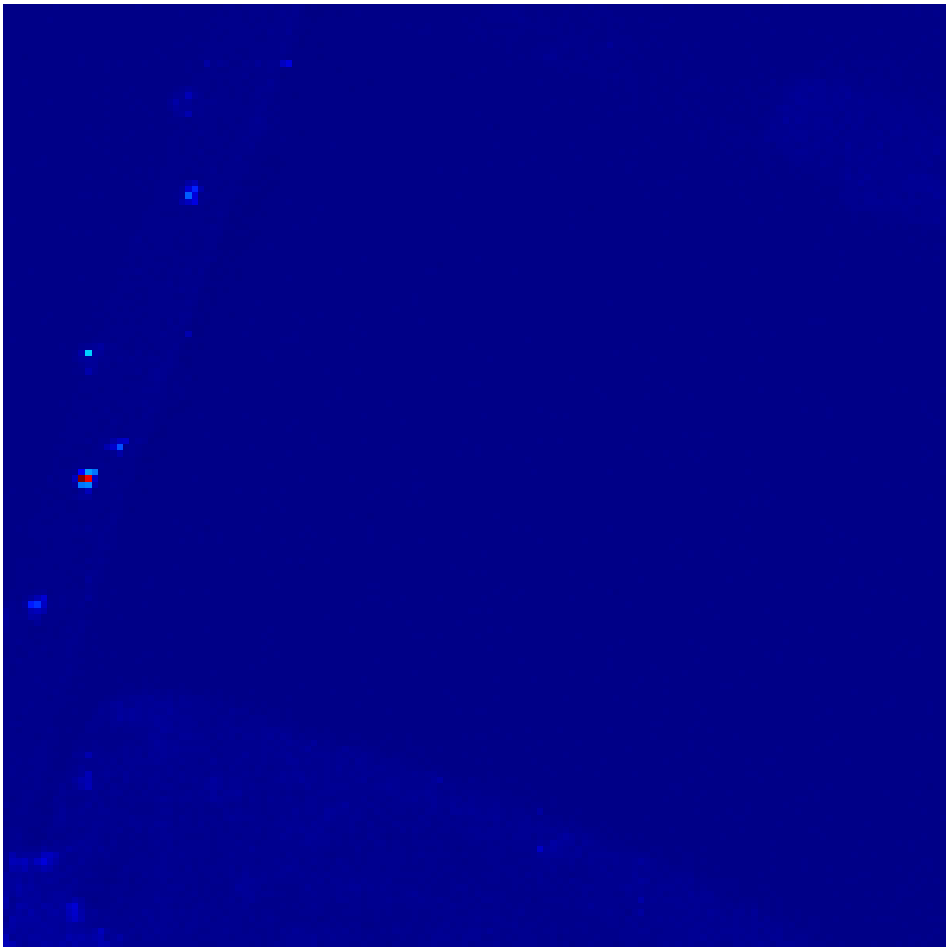}
			\includegraphics[width=\linewidth]{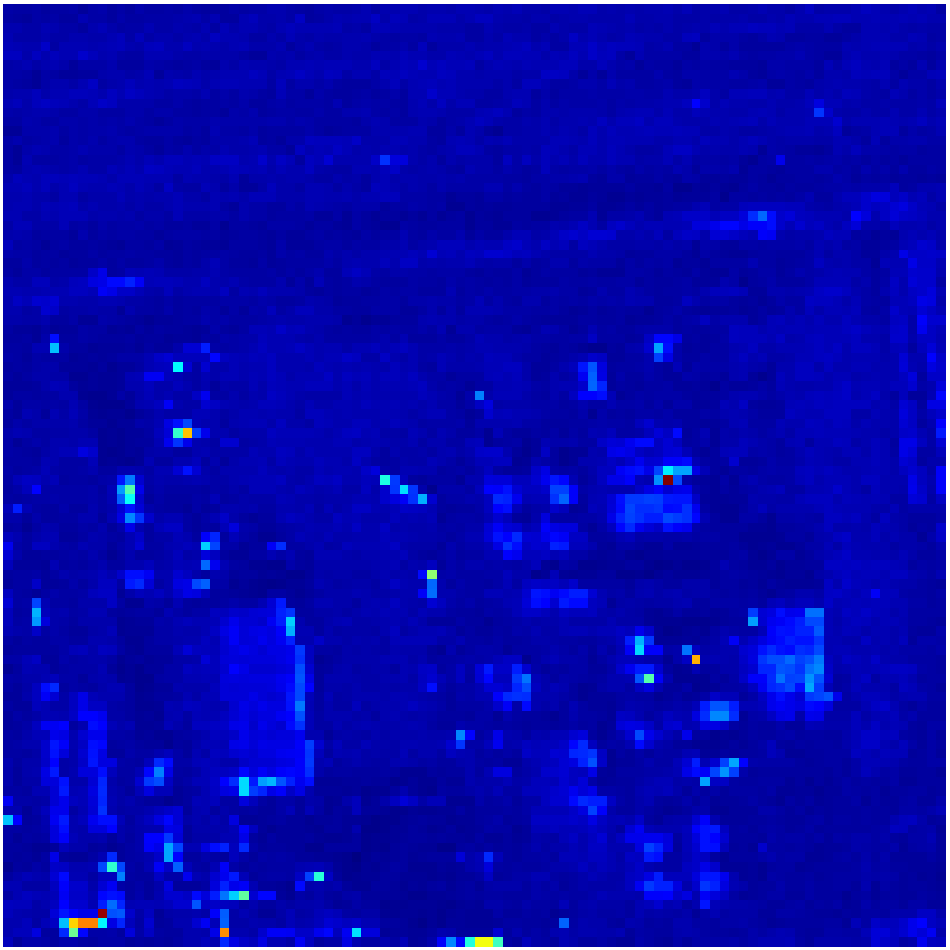}
			\includegraphics[width=\linewidth]{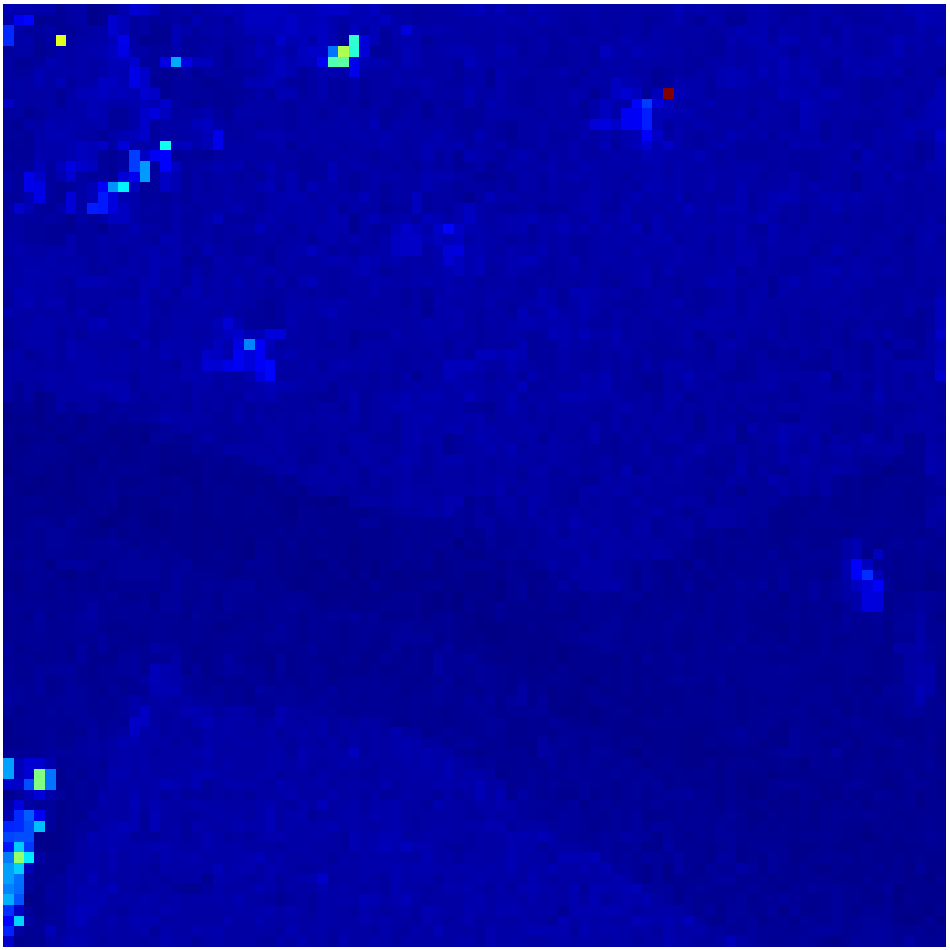}
			\caption{}
		\end{subfigure}\hfill
		\begin{subfigure}[b]{0.09\linewidth}
			\includegraphics[width=\linewidth]{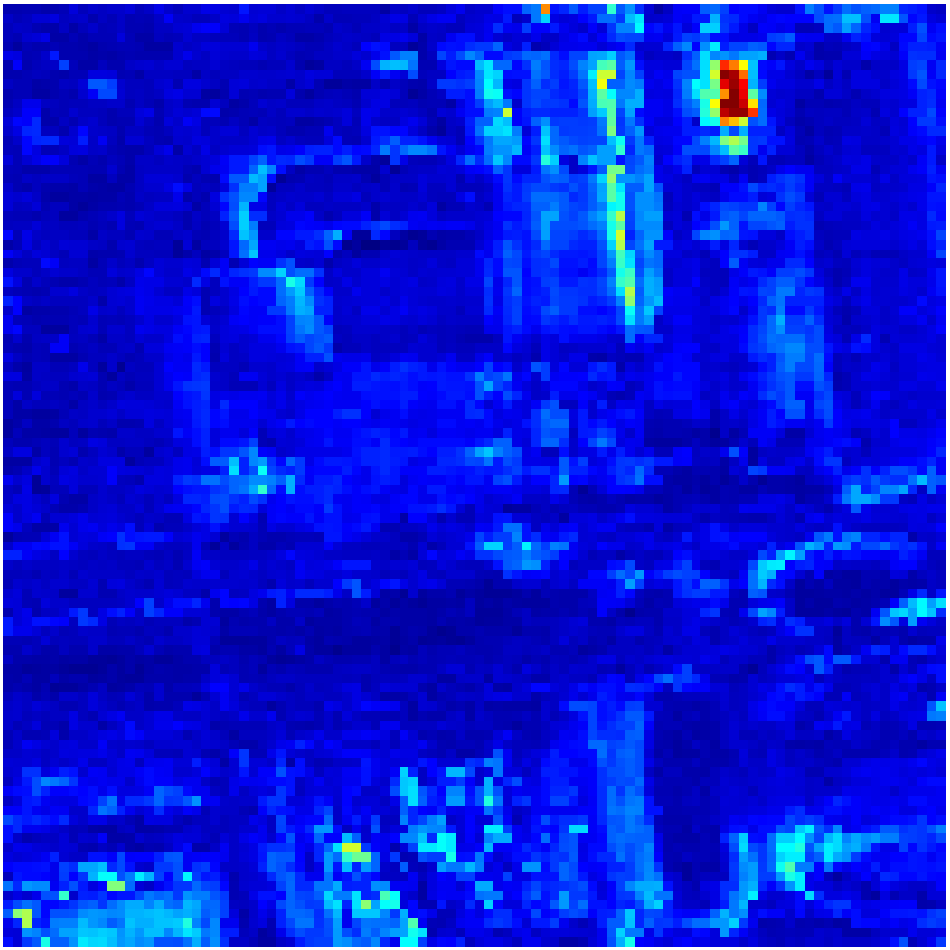}
			\includegraphics[width=\linewidth]{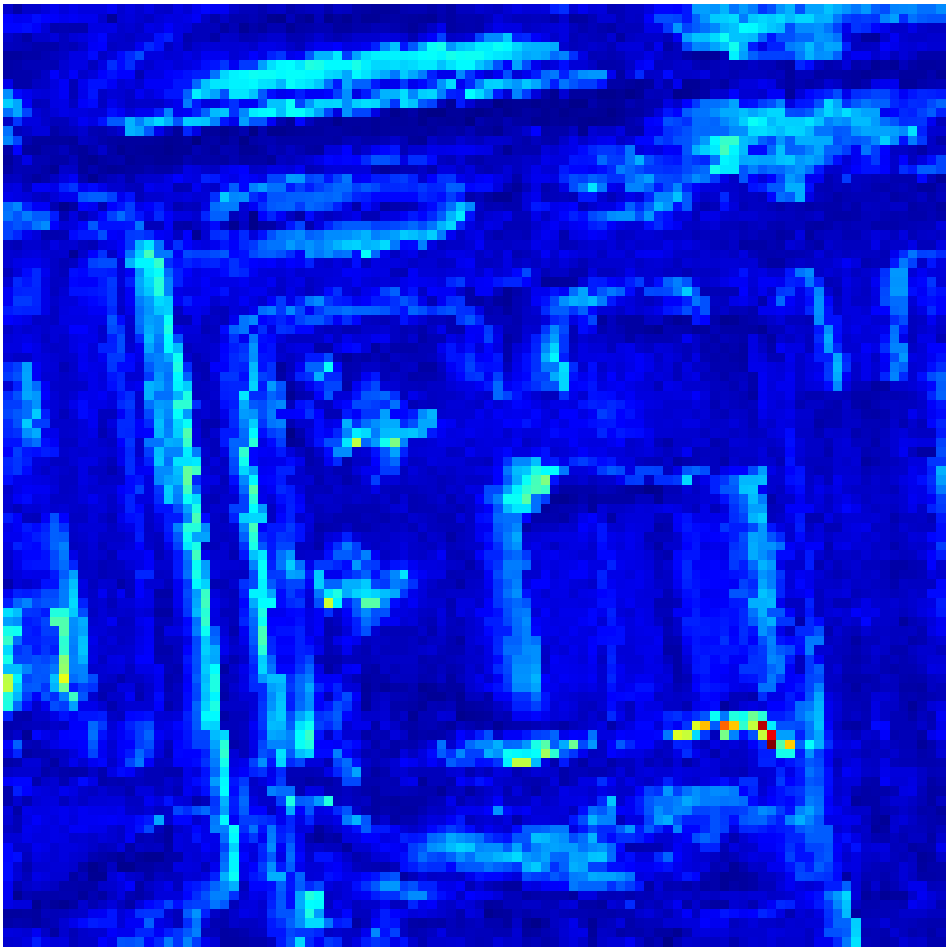}
			\includegraphics[width=\linewidth]{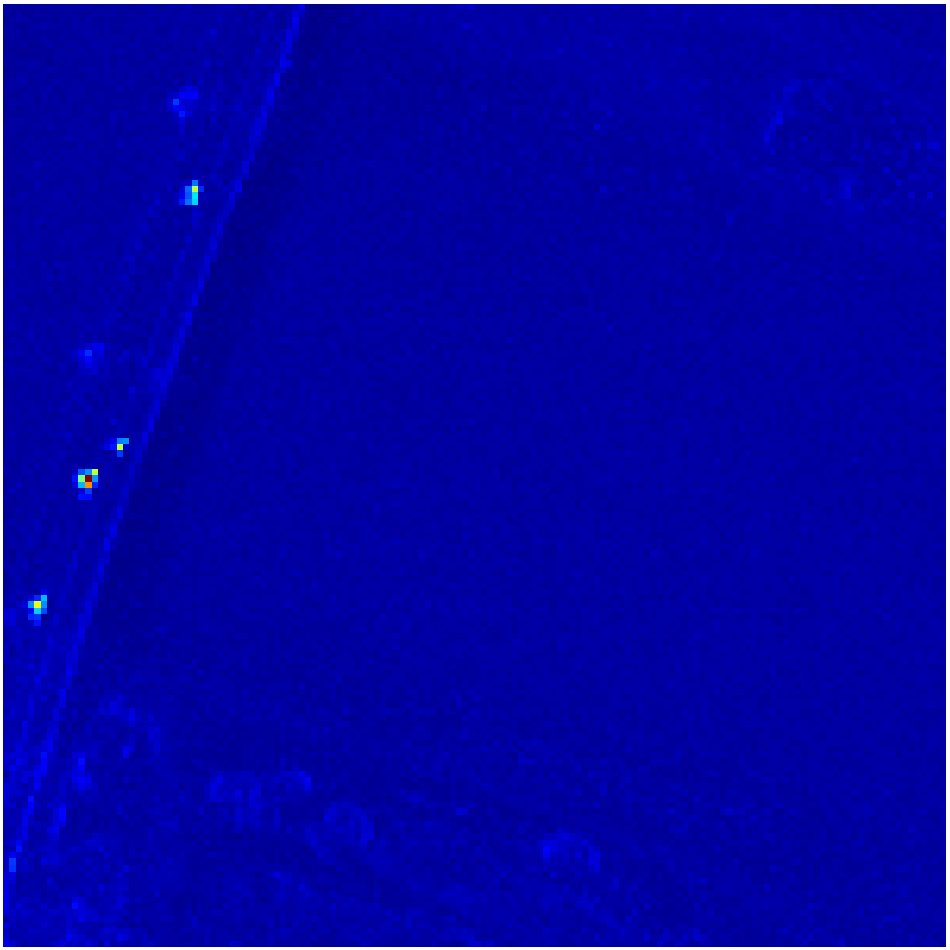}
			\includegraphics[width=\linewidth]{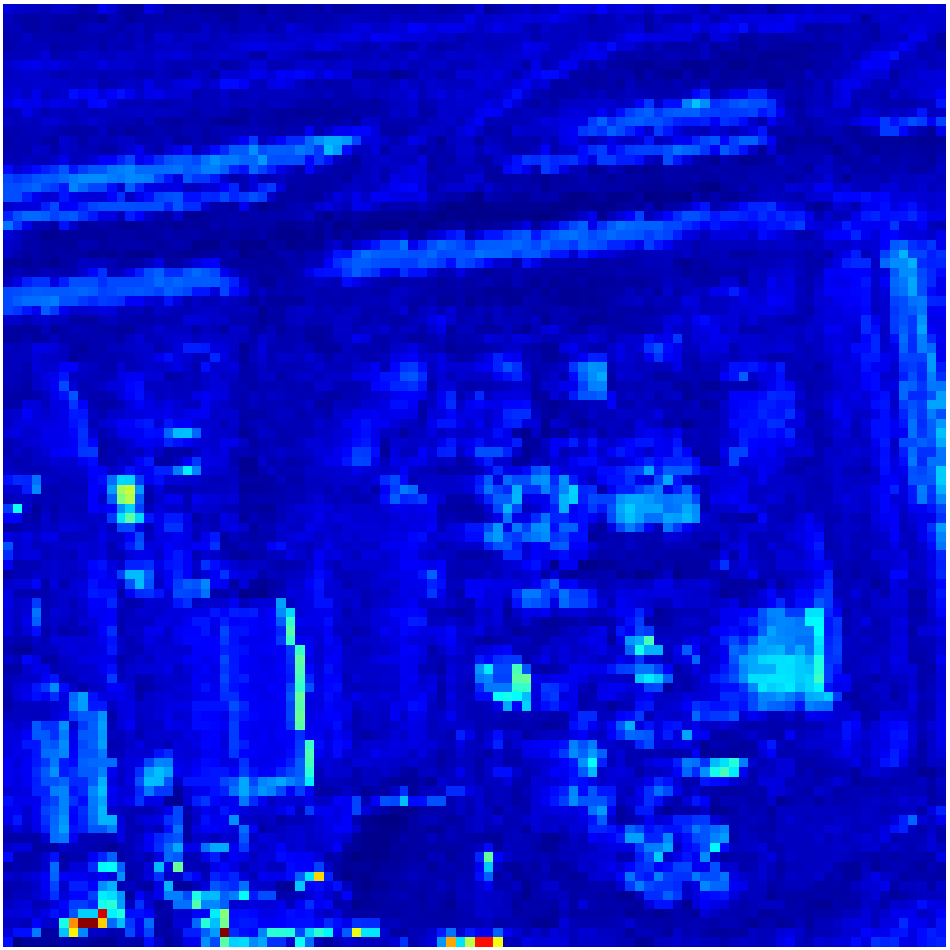}
			\includegraphics[width=\linewidth]{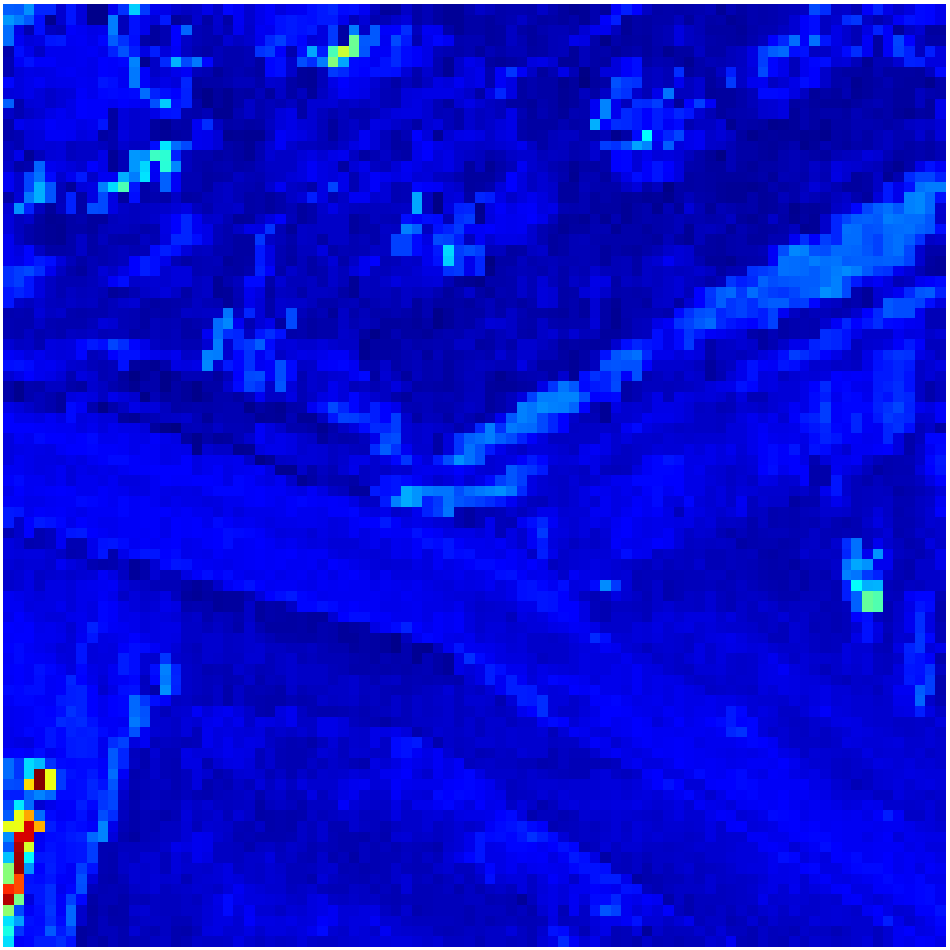}
			\caption{}
		\end{subfigure}\hfill
		\begin{subfigure}[b]{0.09\linewidth}
			\includegraphics[width=\linewidth]{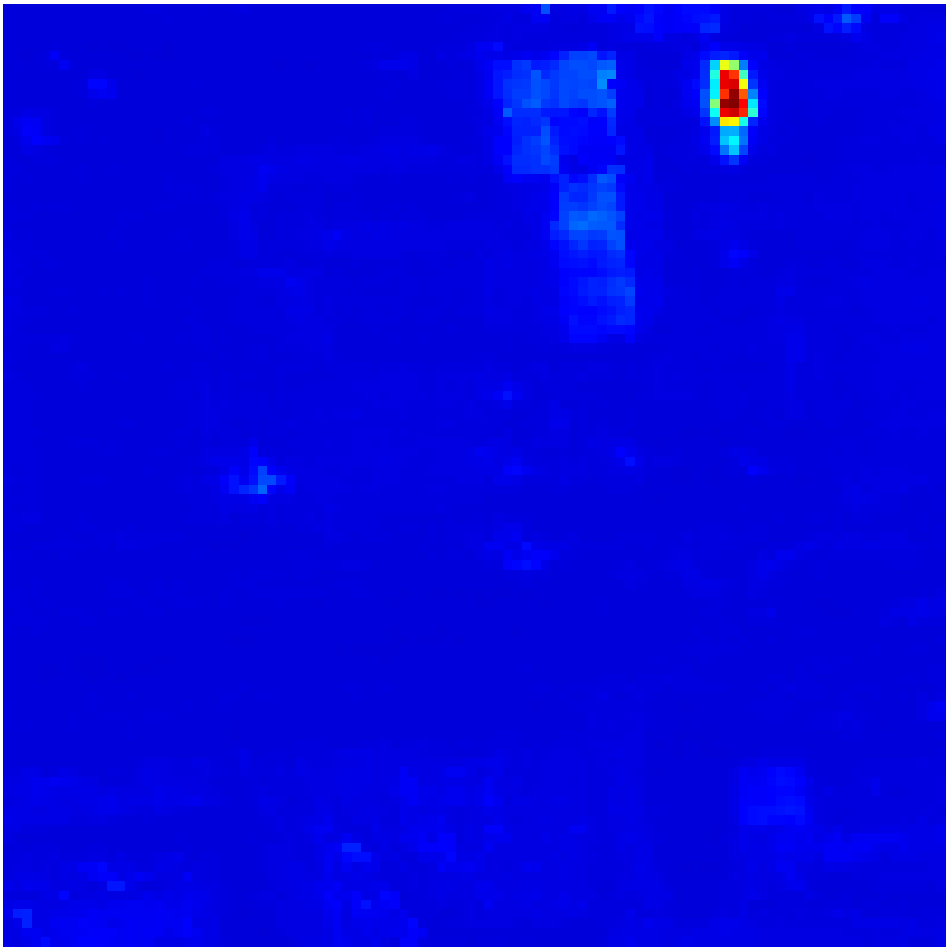}
			\includegraphics[width=\linewidth]{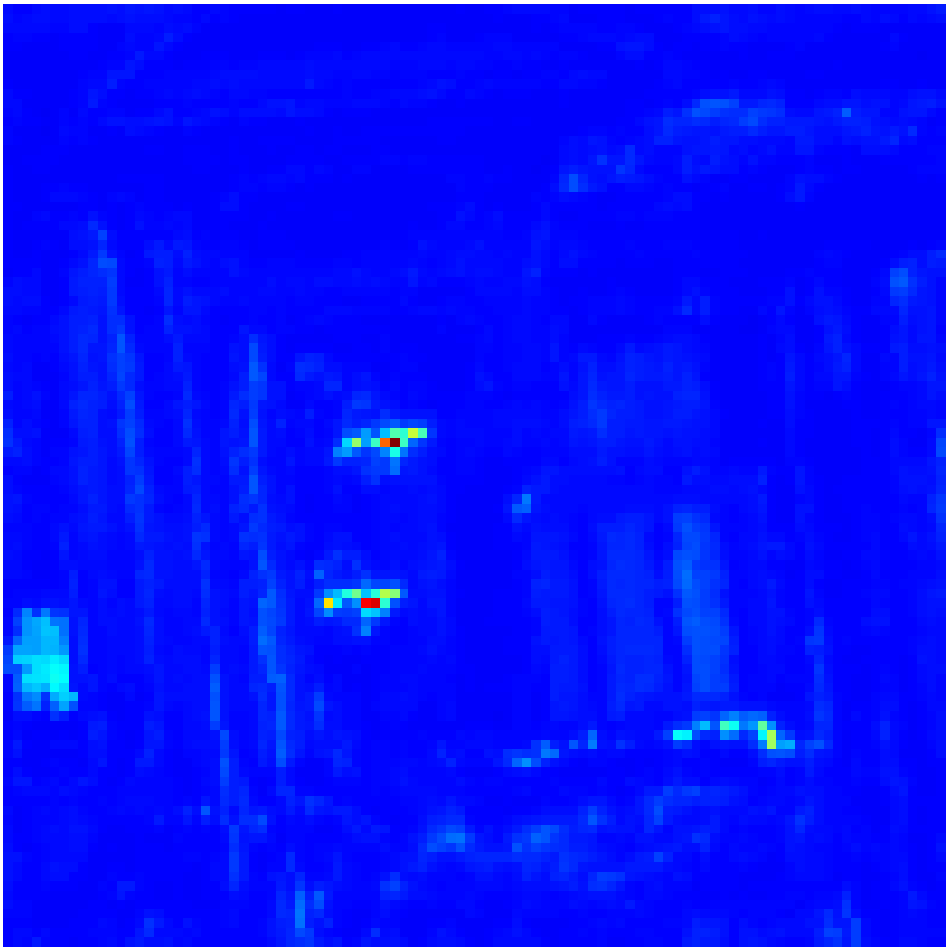}
			\includegraphics[width=\linewidth]{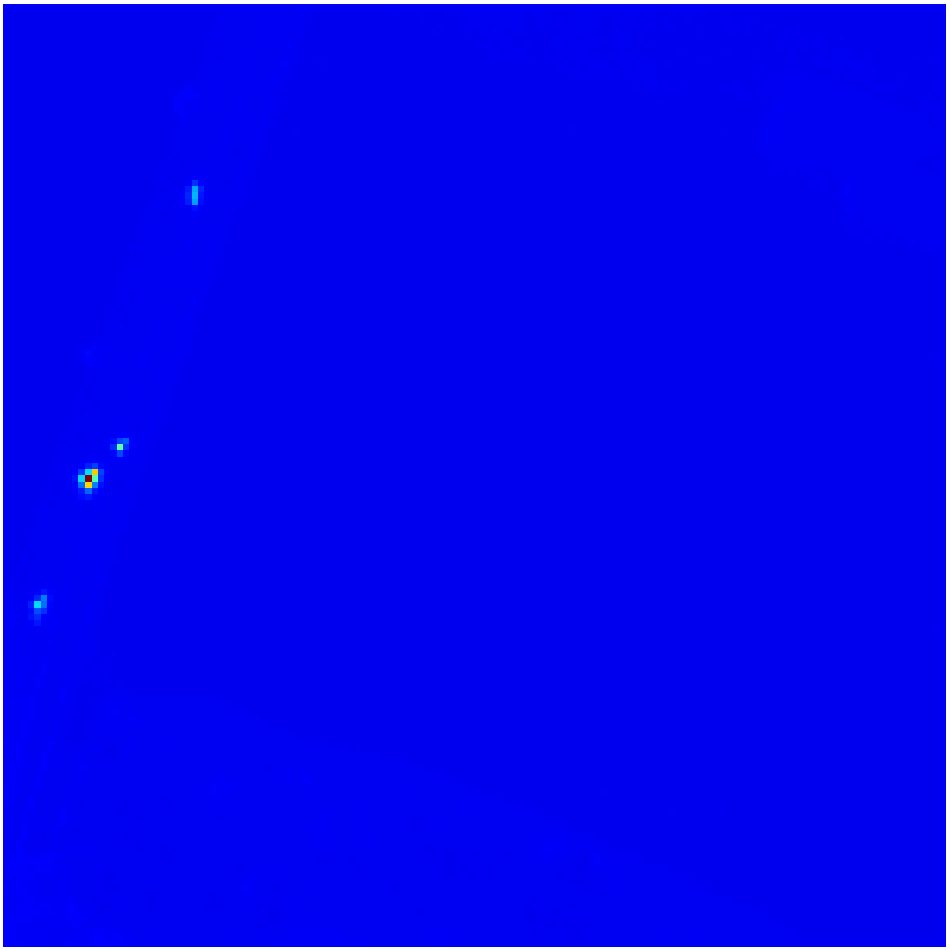}
			\includegraphics[width=\linewidth]{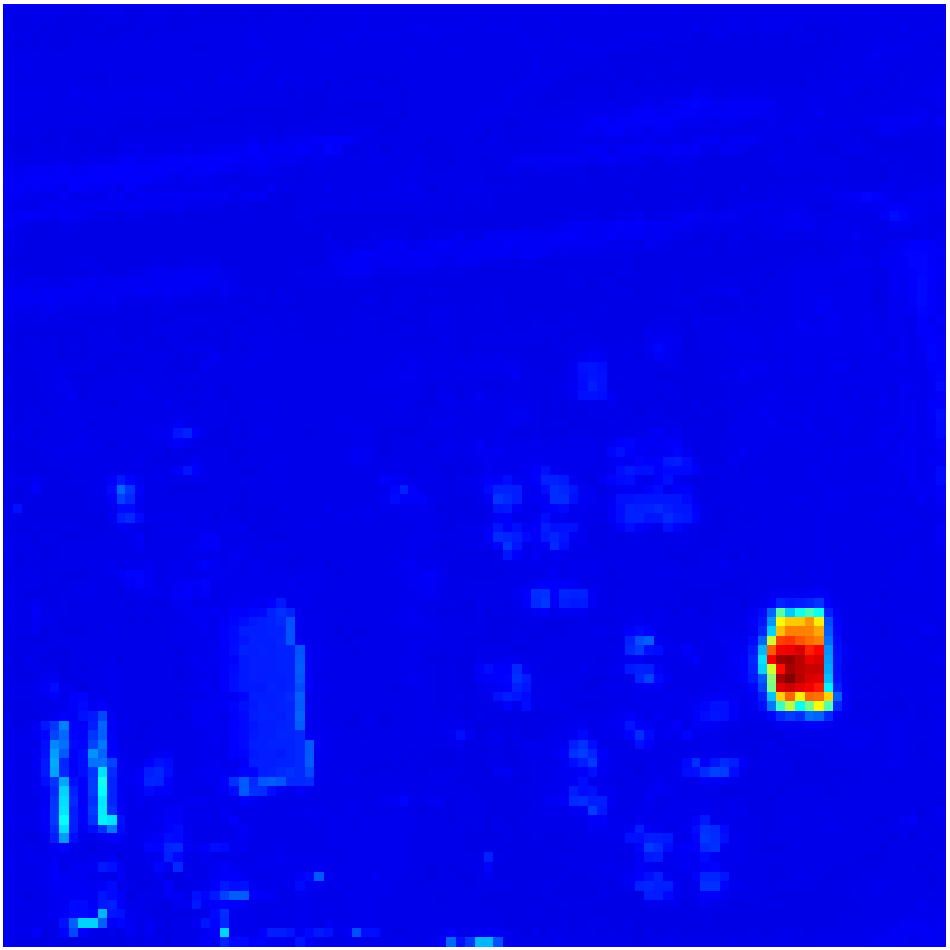}
			\includegraphics[width=\linewidth]{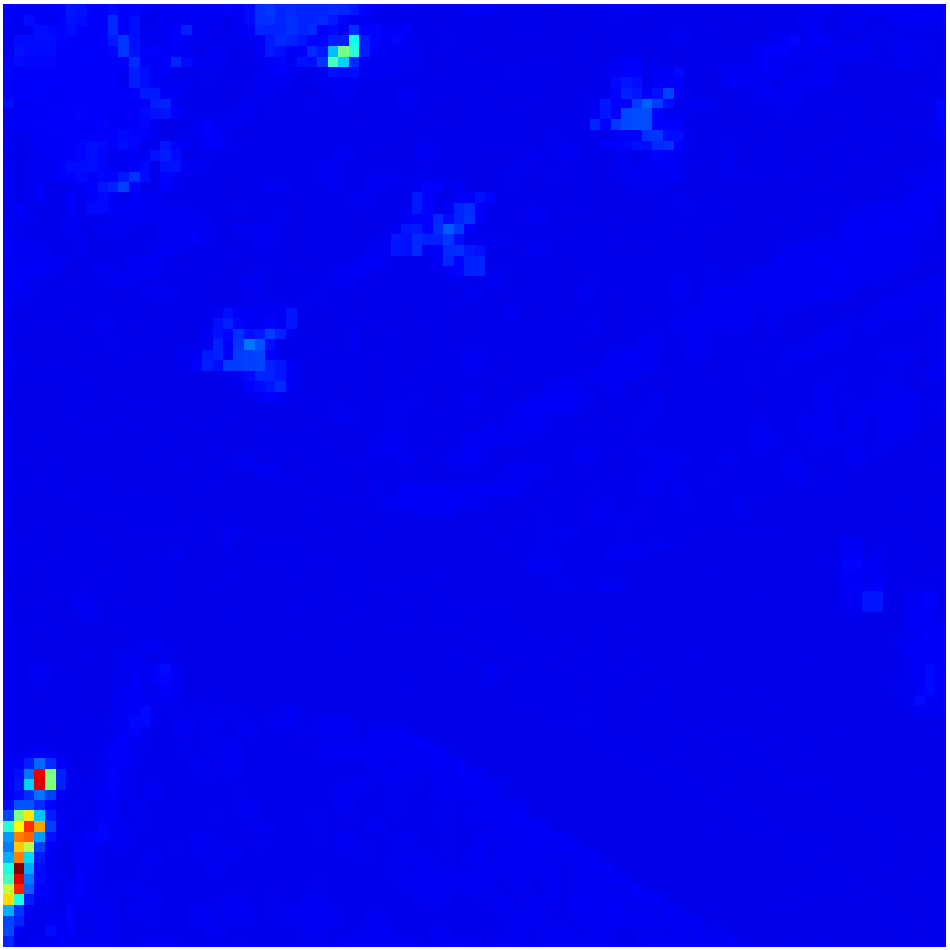}
			\caption{}
		\end{subfigure}\hfill
		\begin{subfigure}[b]{0.09\linewidth}
			\includegraphics[width=\linewidth]{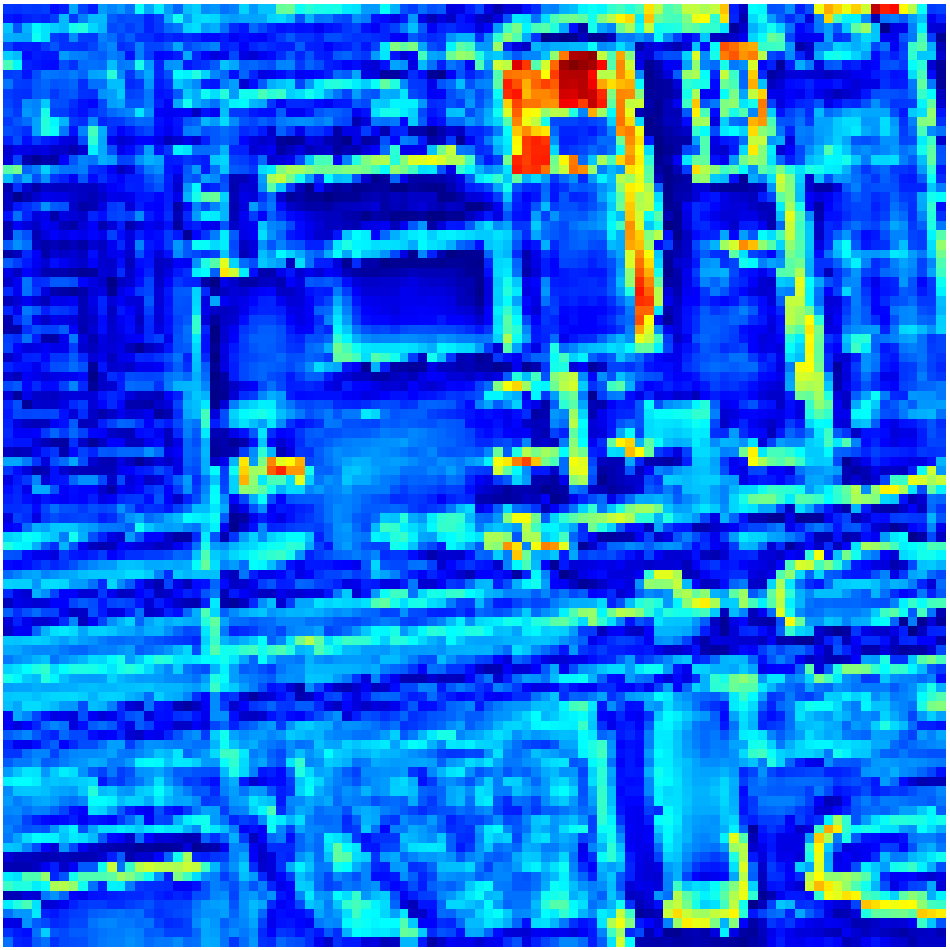}
			\includegraphics[width=\linewidth]{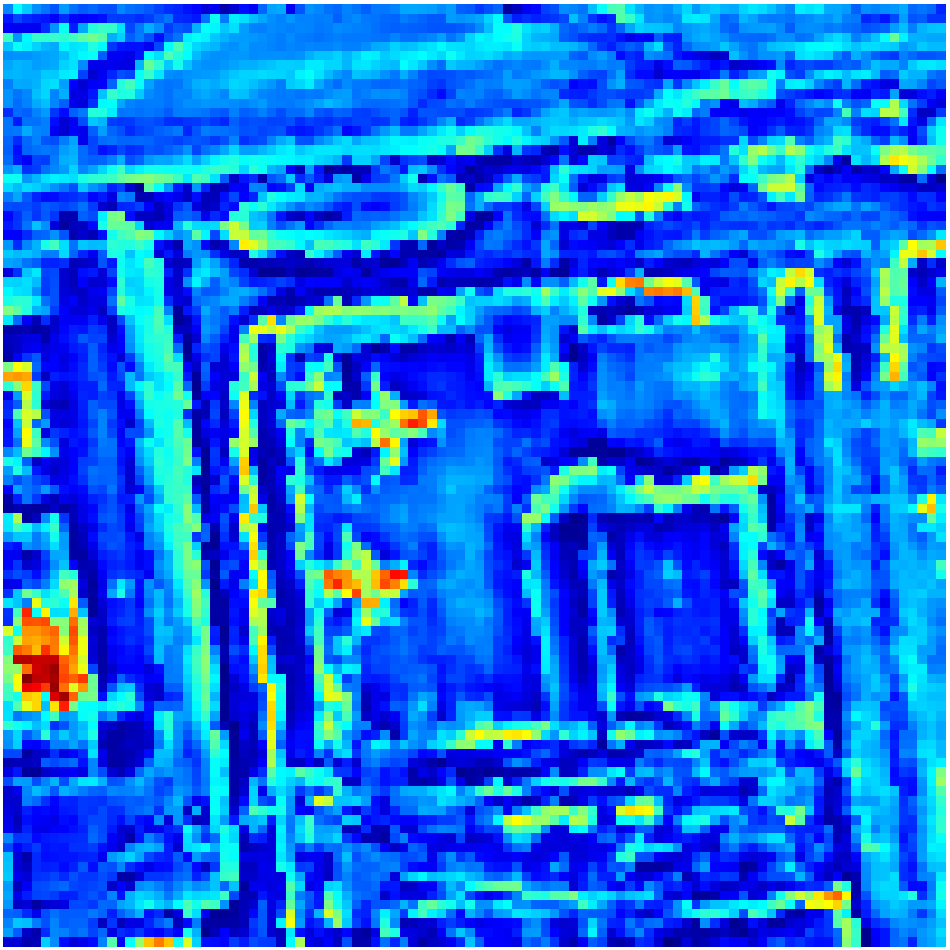}
			\includegraphics[width=\linewidth]{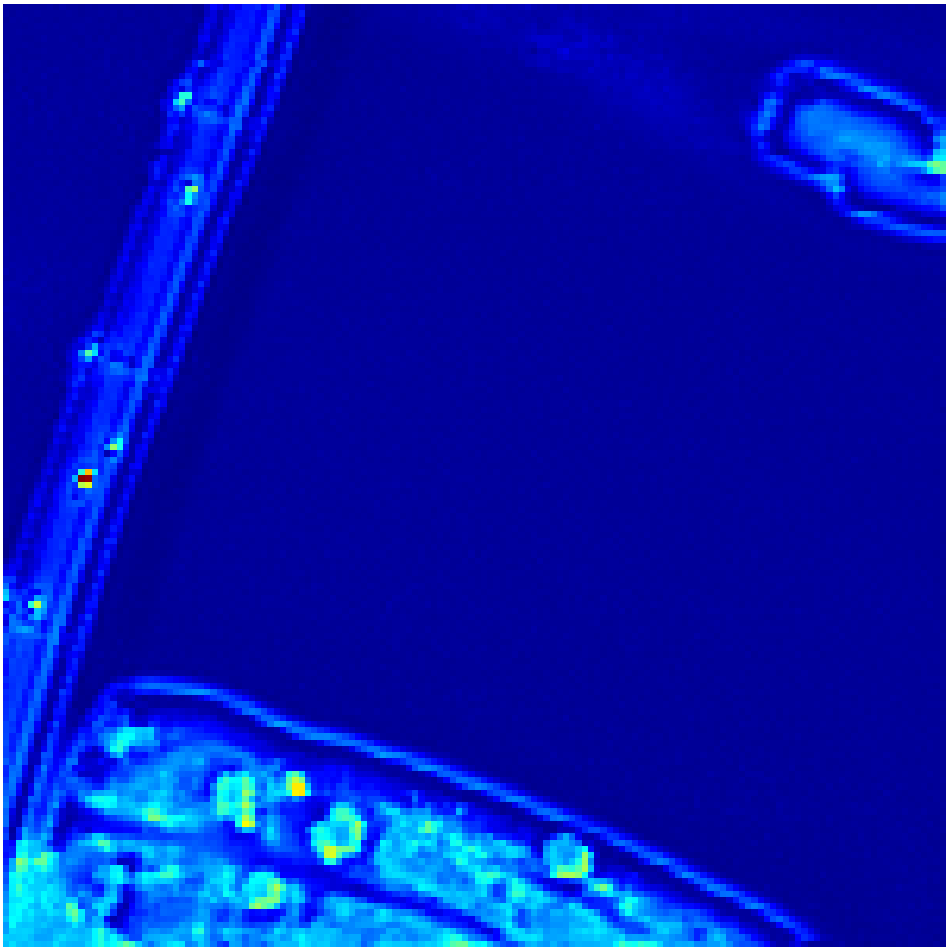}
			\includegraphics[width=\linewidth]{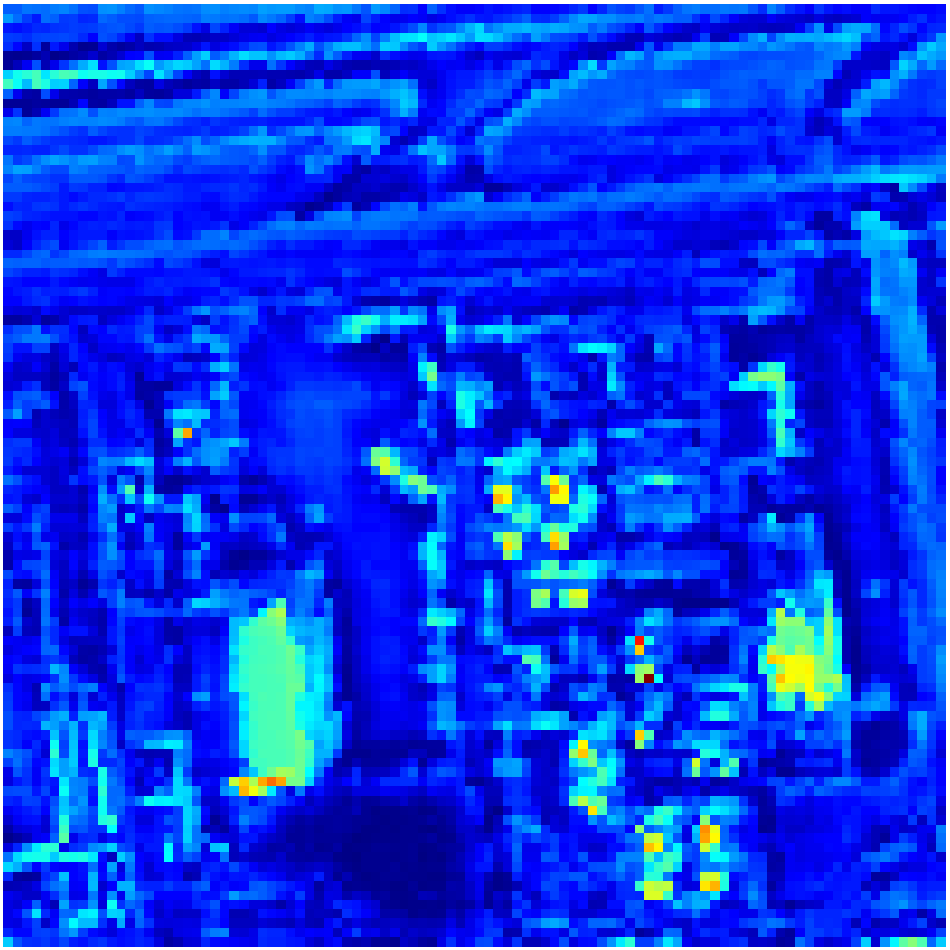}
			\includegraphics[width=\linewidth]{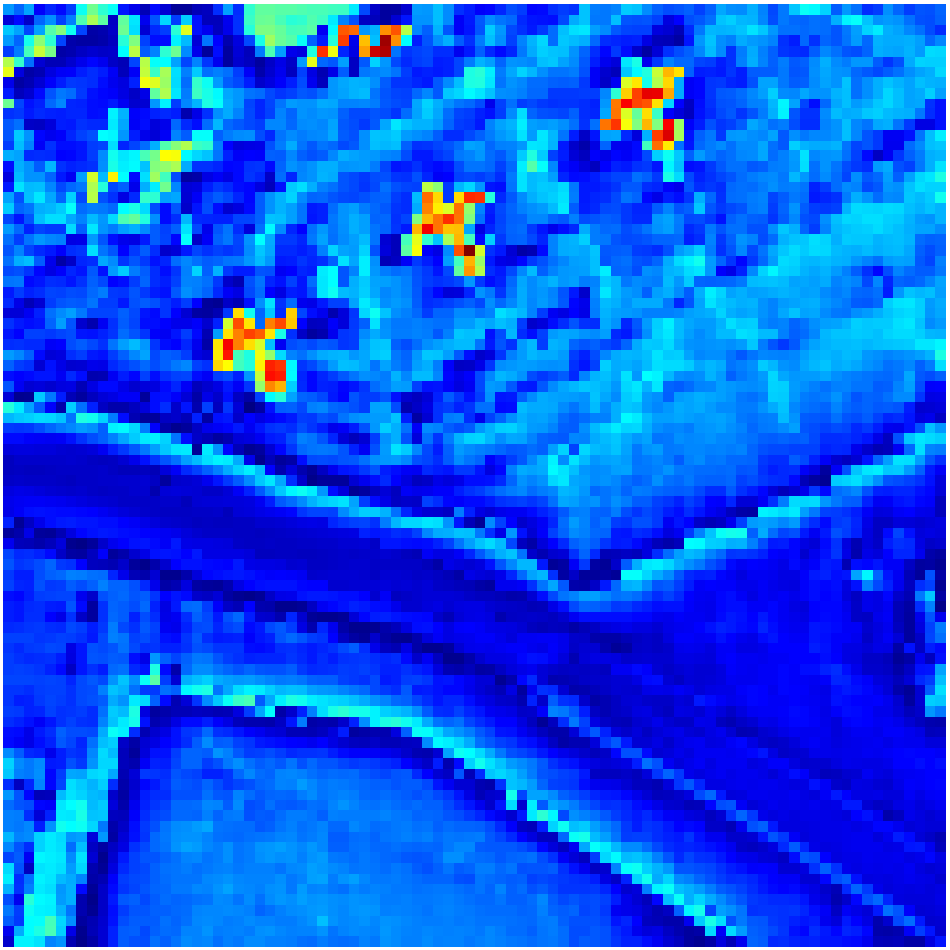}
			\caption{}
		\end{subfigure}\hfill
		\begin{subfigure}[b]{0.09\linewidth}
			\includegraphics[width=\linewidth]{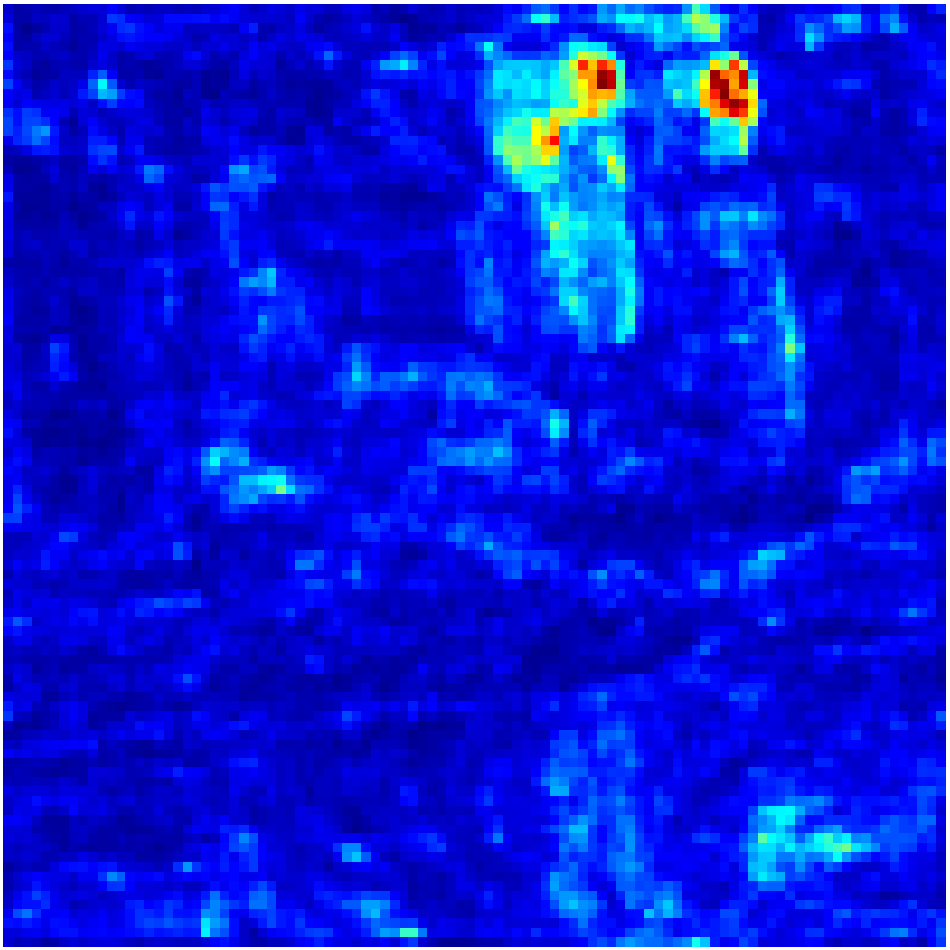}
			\includegraphics[width=\linewidth]{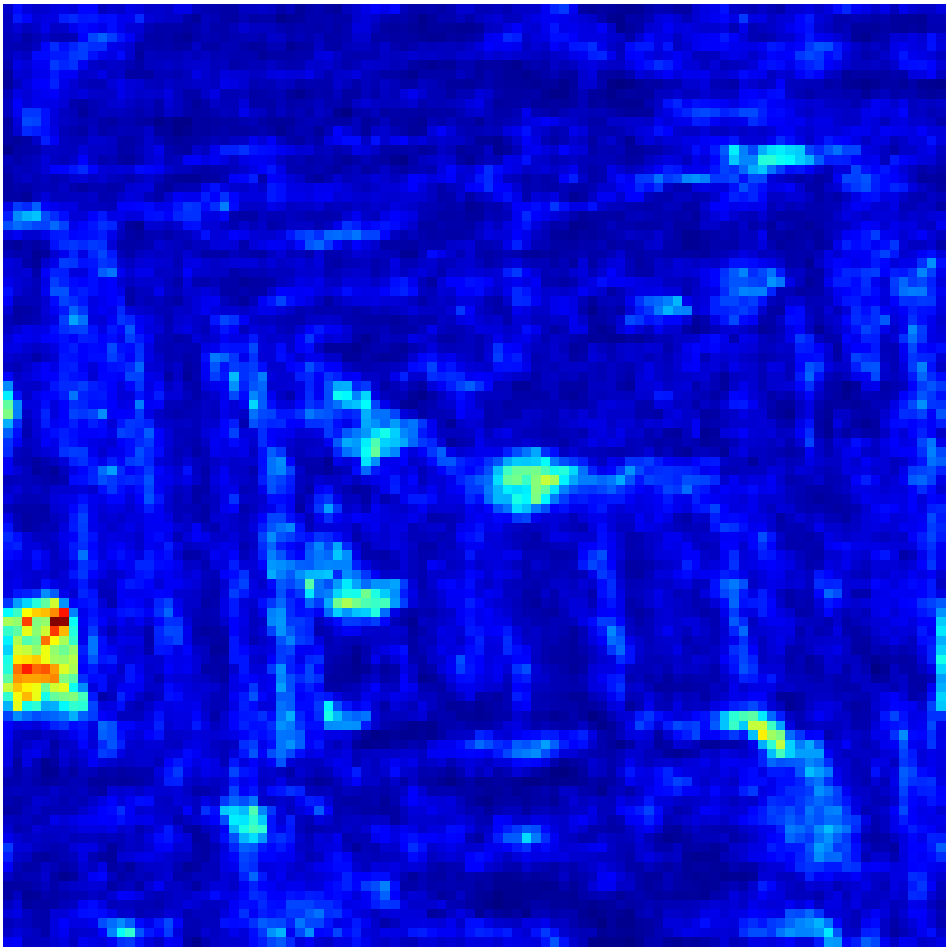}
			\includegraphics[width=\linewidth]{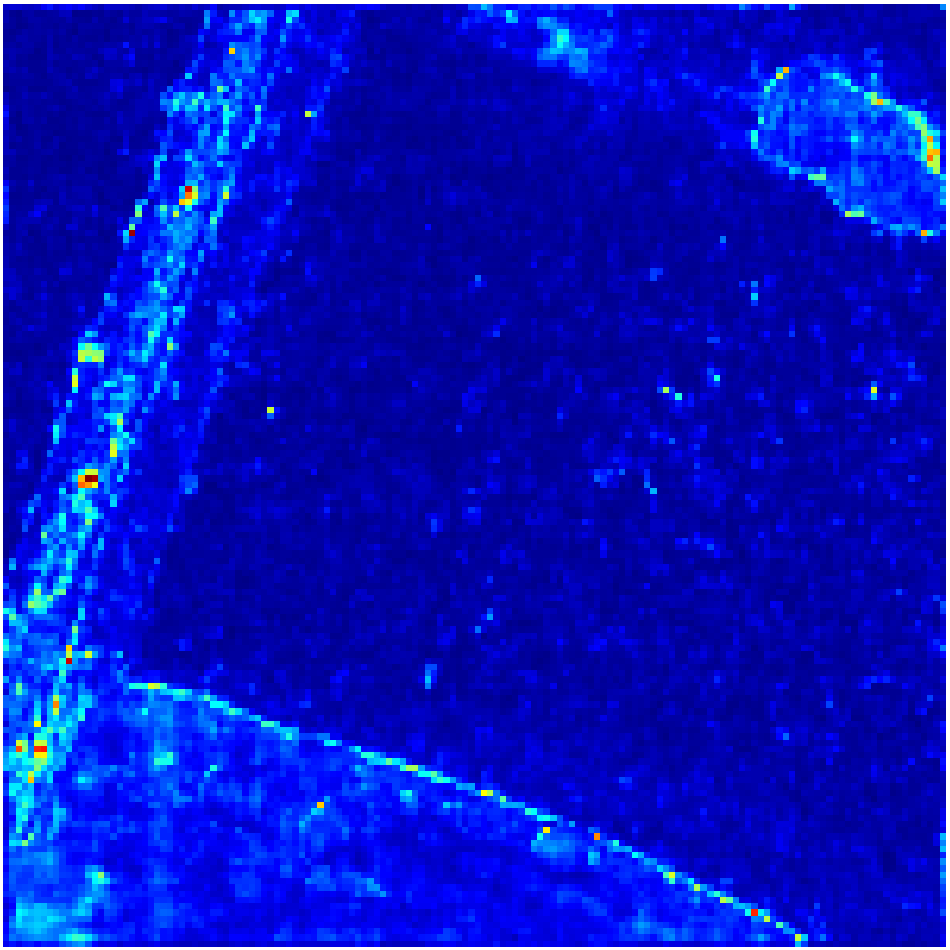}
			\includegraphics[width=\linewidth]{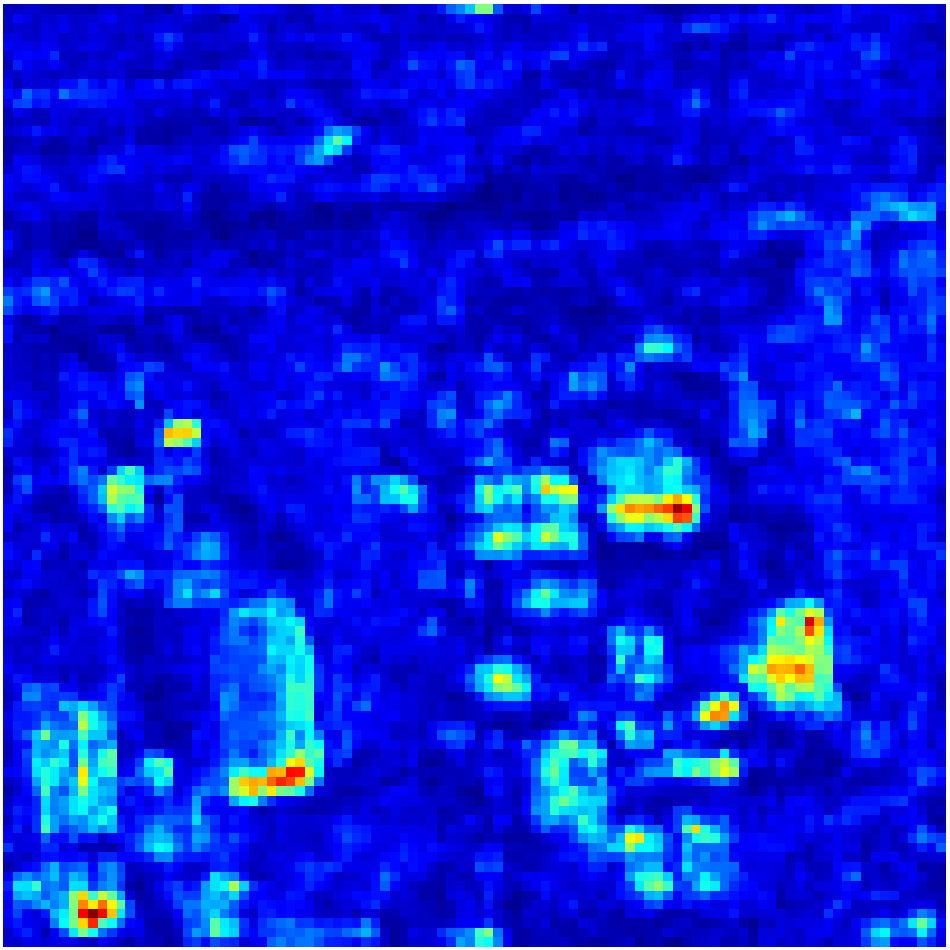}
			\includegraphics[width=\linewidth]{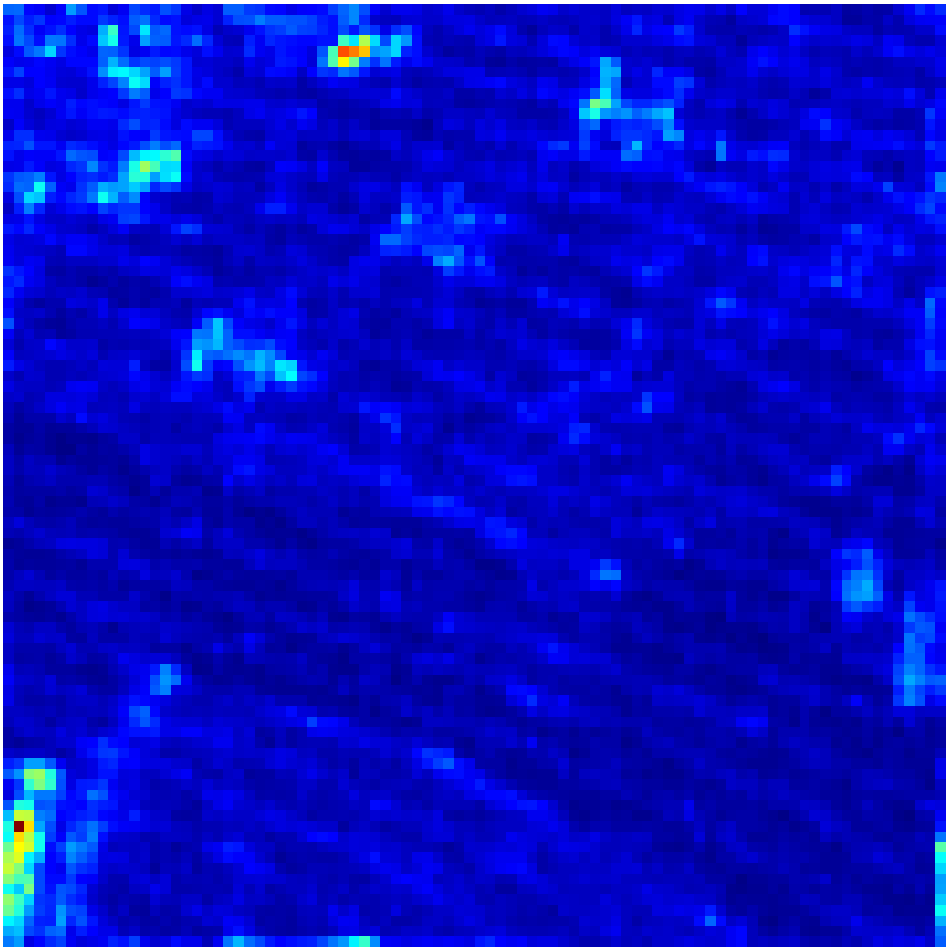}
			\caption{}
		\end{subfigure}\hfill
		\begin{subfigure}[b]{0.09\linewidth}
			\includegraphics[width=\linewidth]{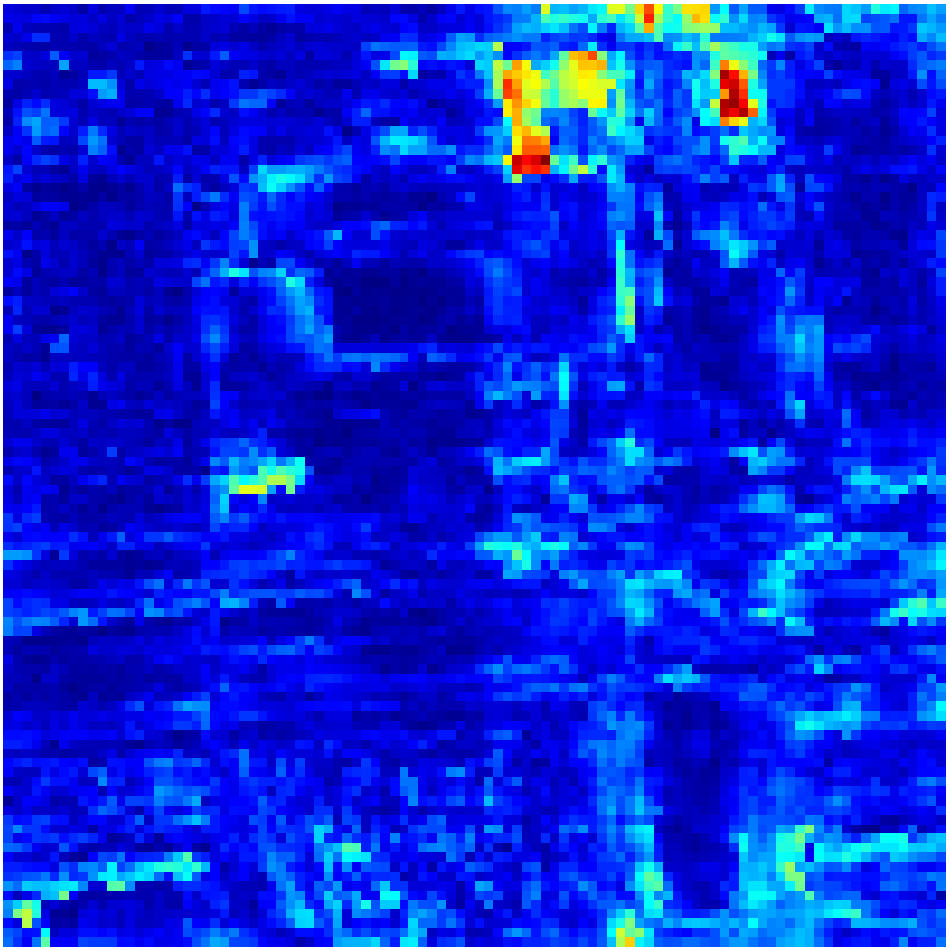}
			\includegraphics[width=\linewidth]{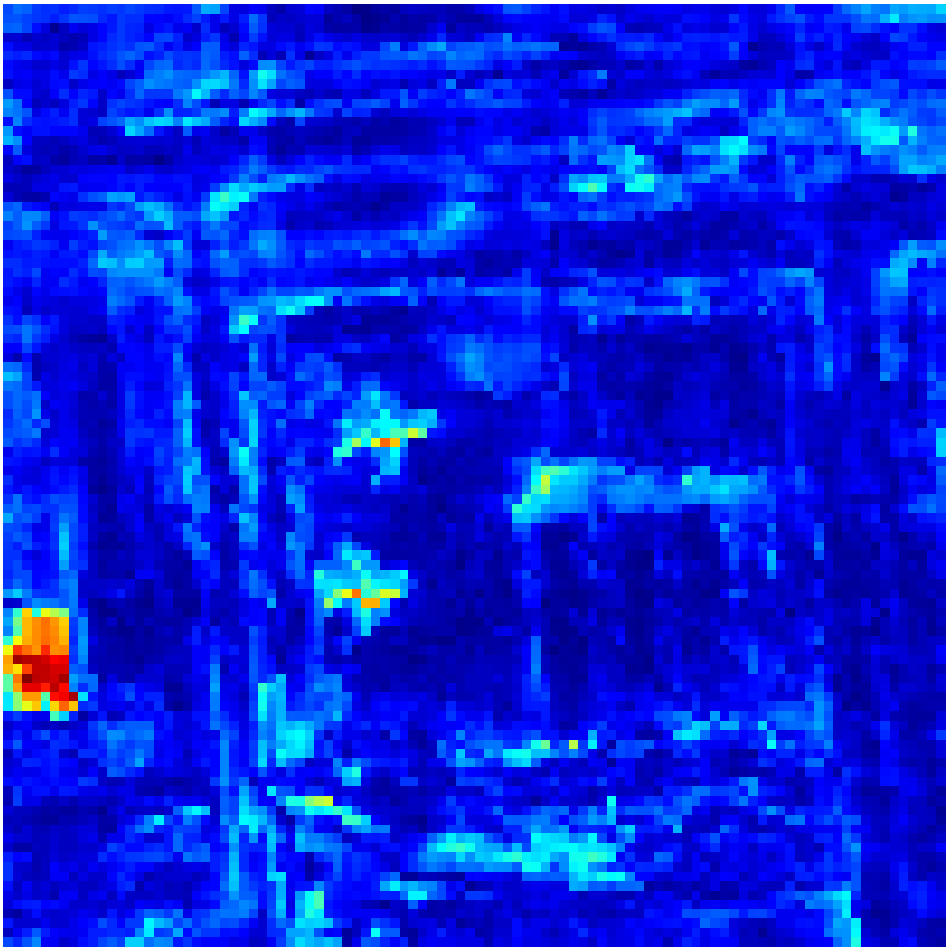}
			\includegraphics[width=\linewidth]{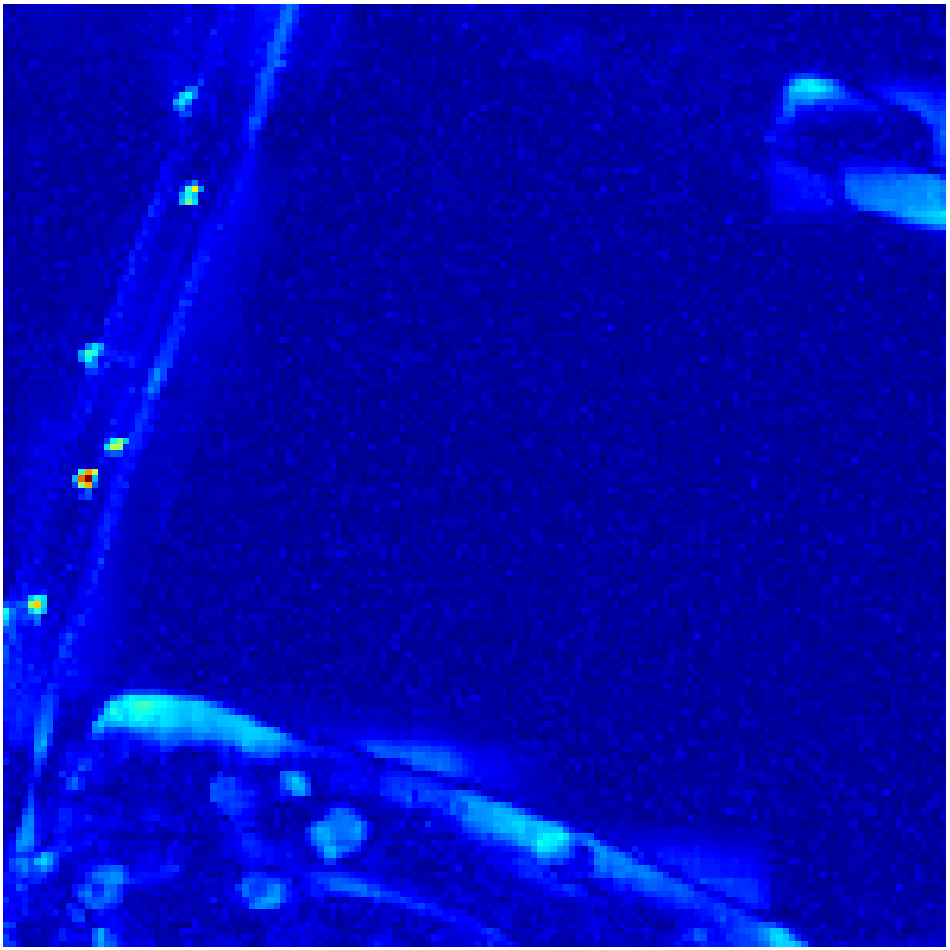}
			\includegraphics[width=\linewidth]{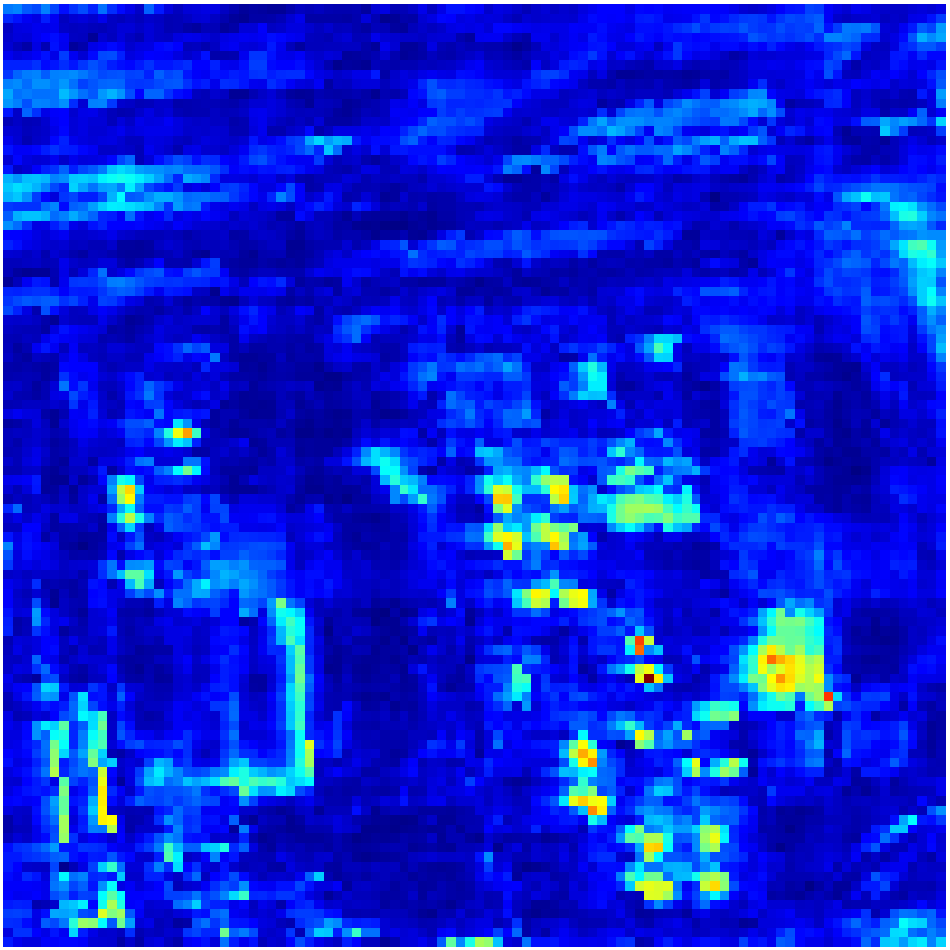}
			\includegraphics[width=\linewidth]{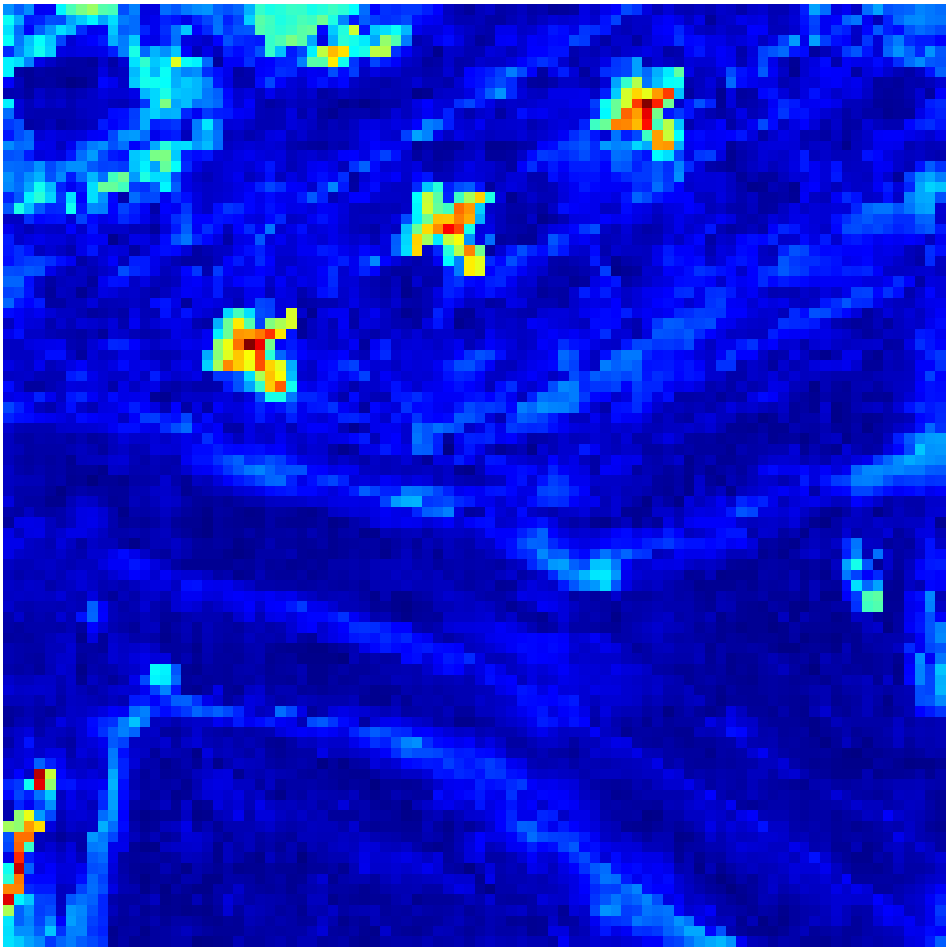}
			\caption{}
		\end{subfigure}\hfill
		\begin{subfigure}[b]{0.09\linewidth}
			\includegraphics[width=\linewidth]{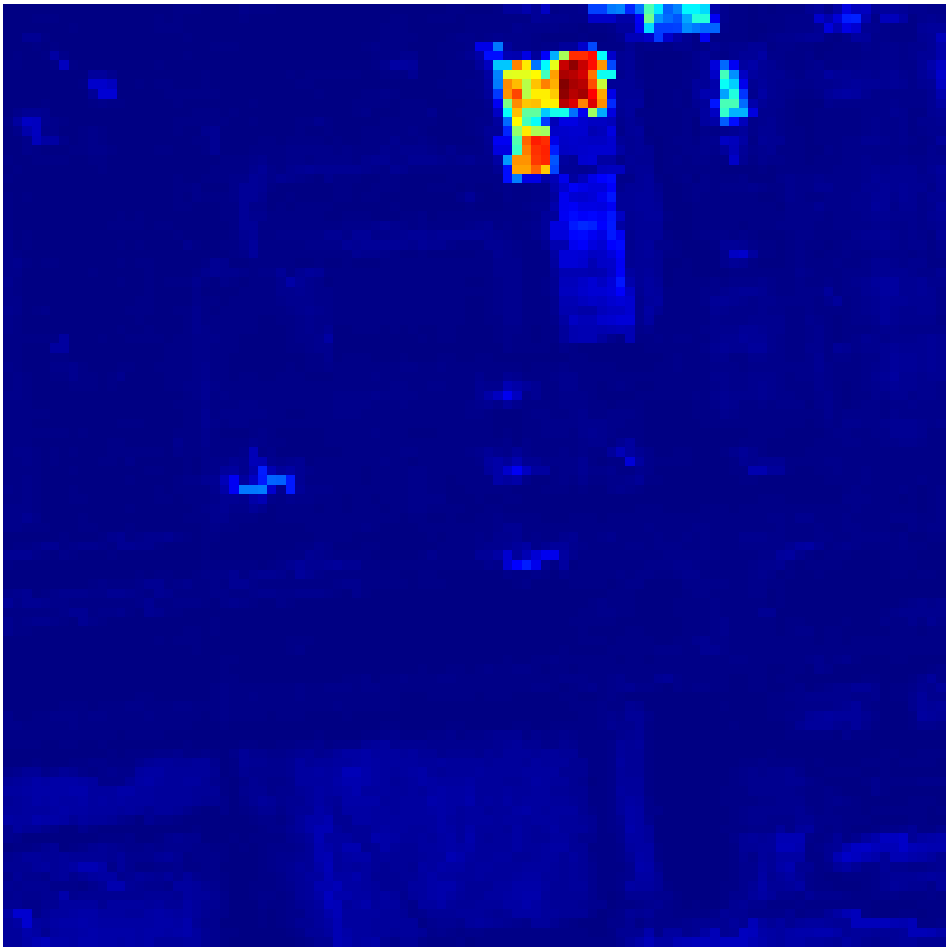}
			\includegraphics[width=\linewidth]{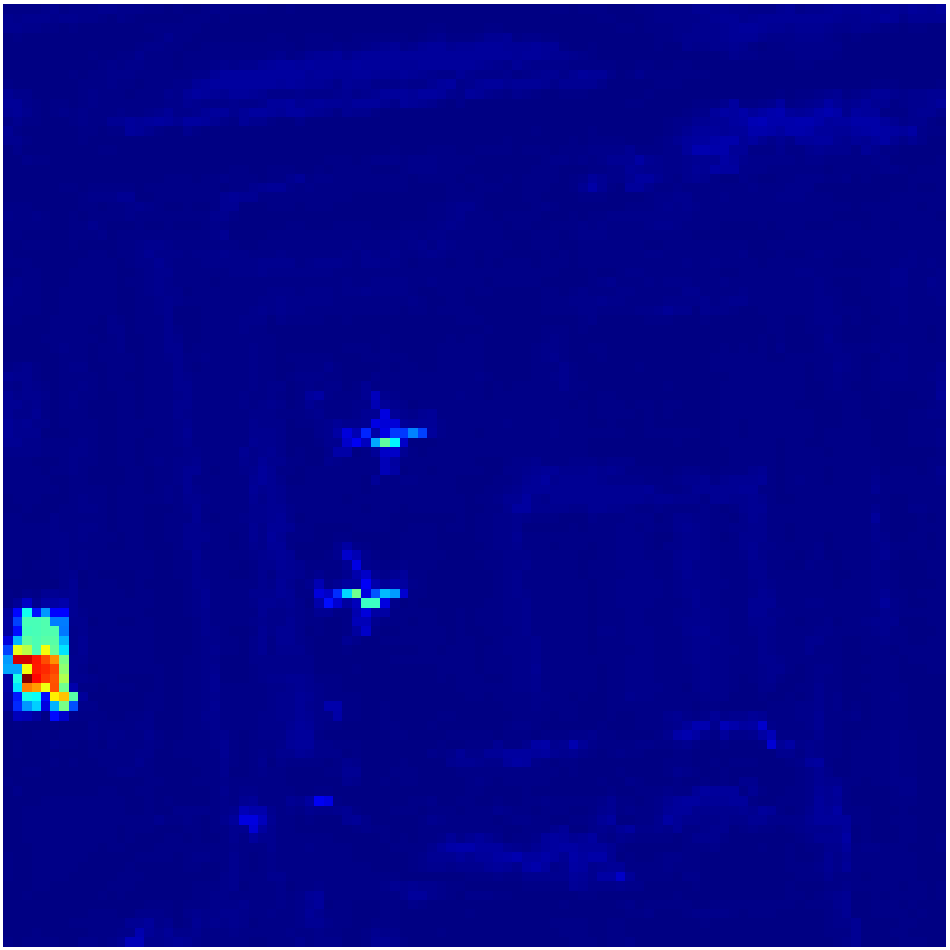}
			\includegraphics[width=\linewidth]{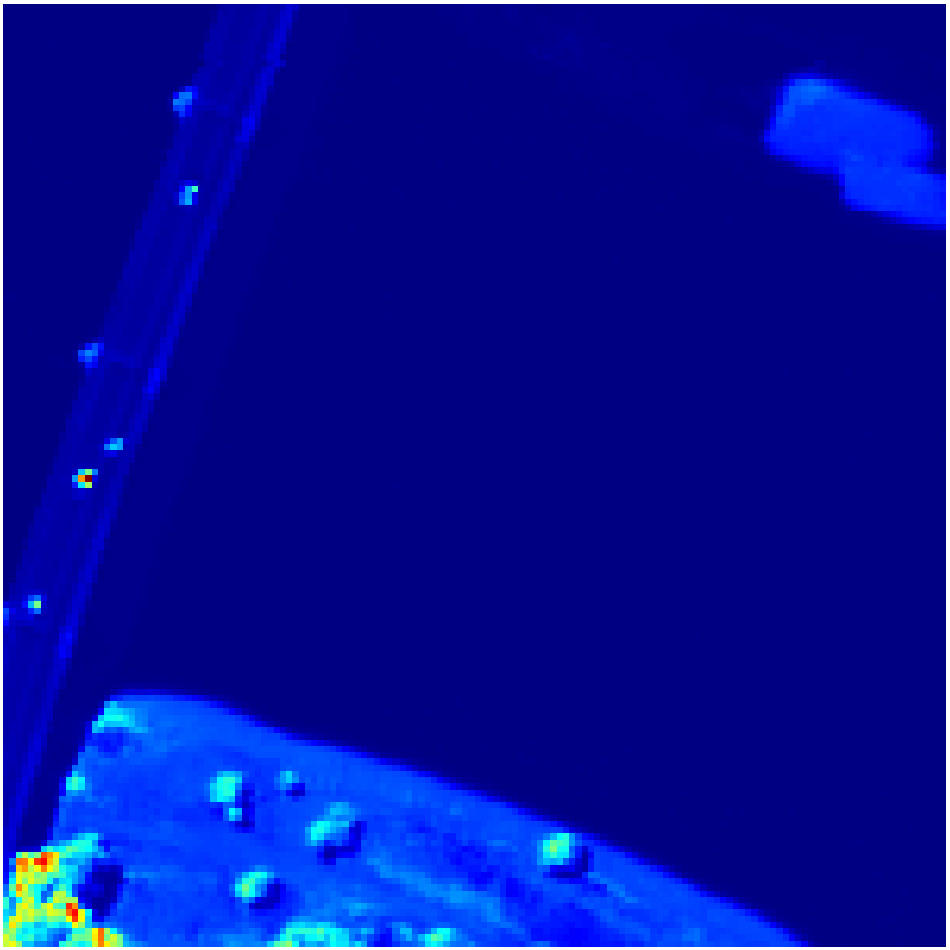}
			\includegraphics[width=\linewidth]{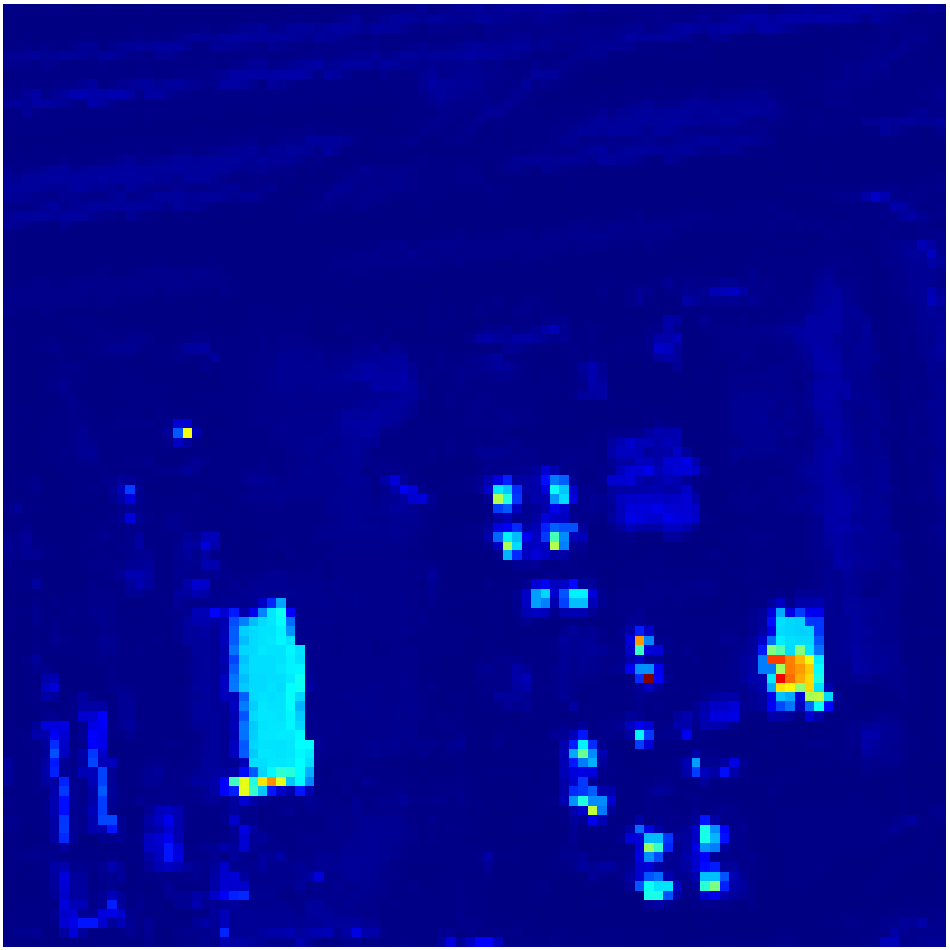}
			\includegraphics[width=\linewidth]{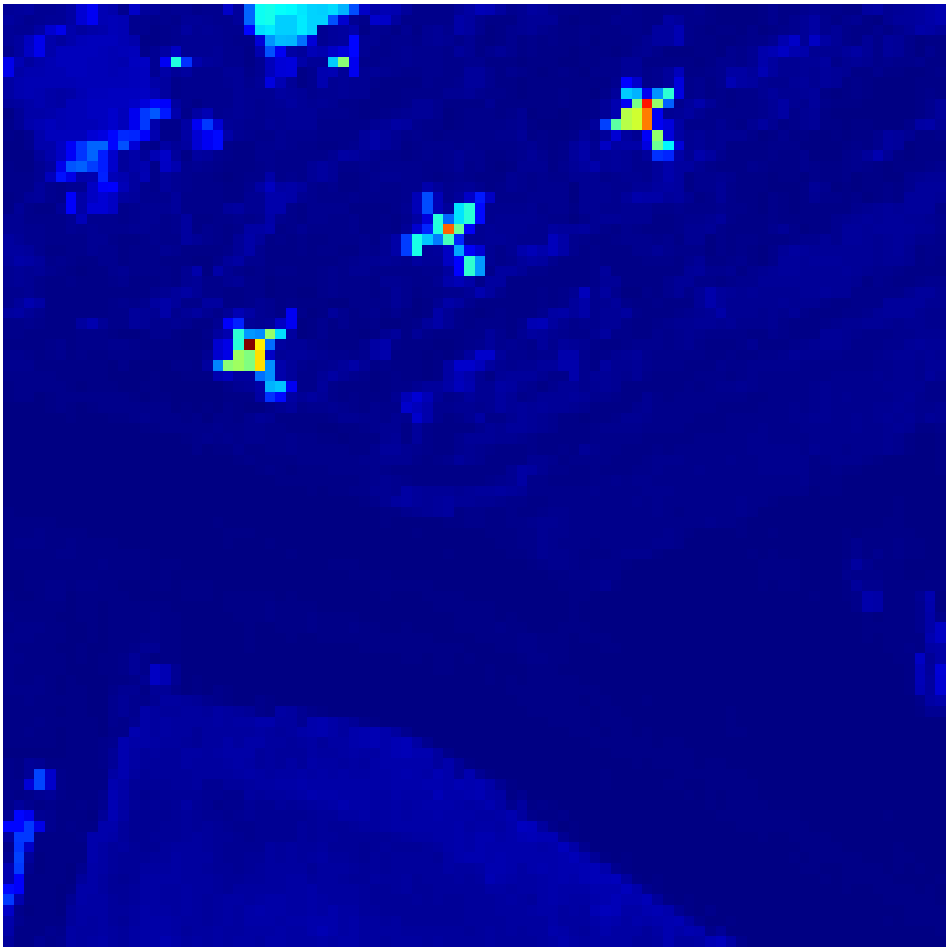}
			\caption{}
		\end{subfigure}\hfill
		\begin{subfigure}[b]{0.09\linewidth}
			\includegraphics[width=\linewidth]{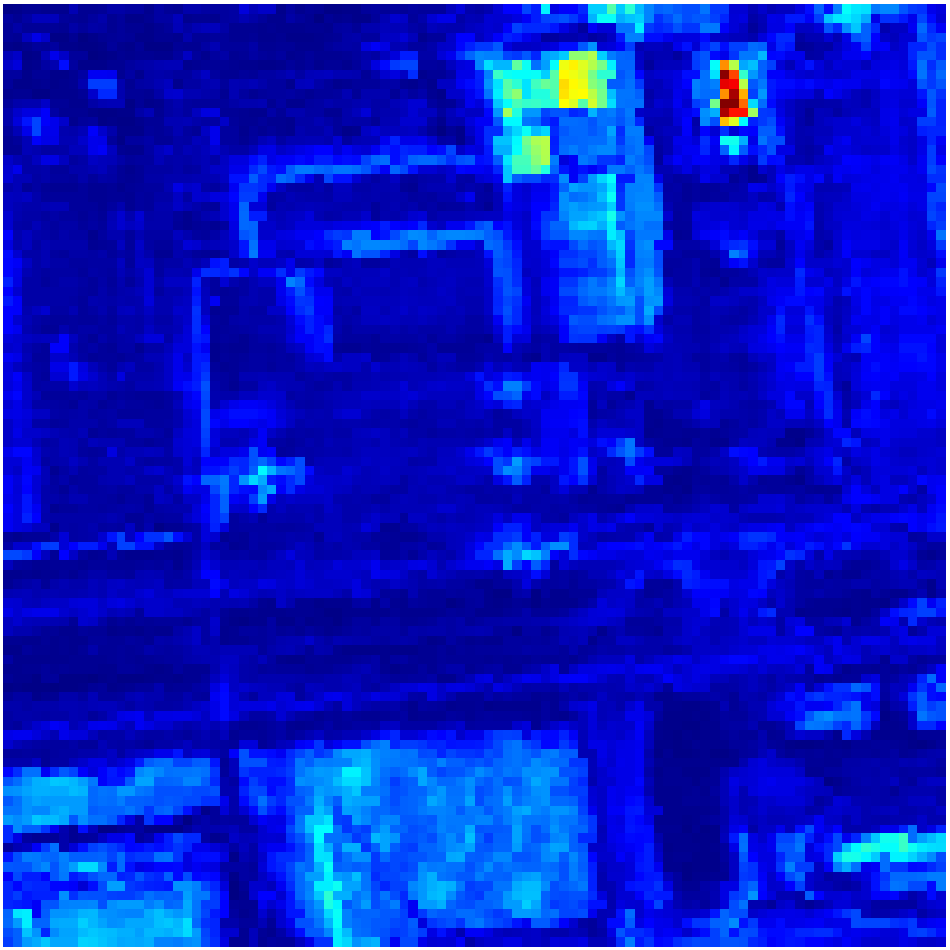}
			\includegraphics[width=\linewidth]{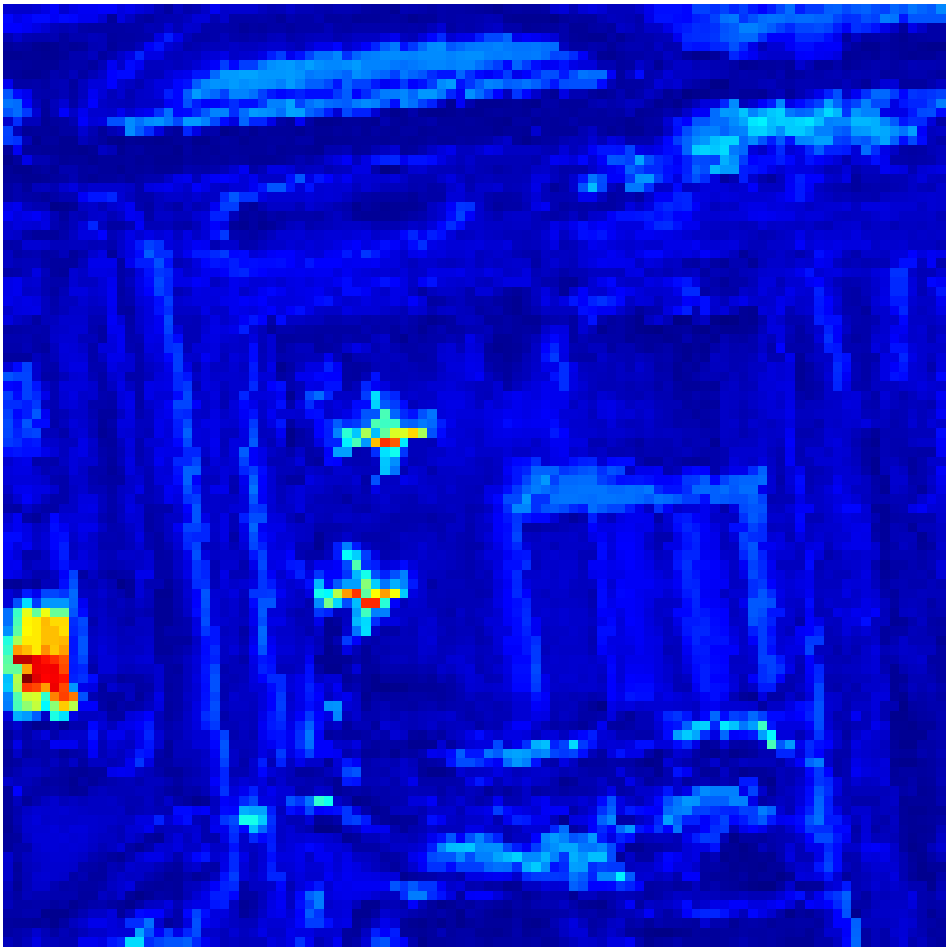}
			\includegraphics[width=\linewidth]{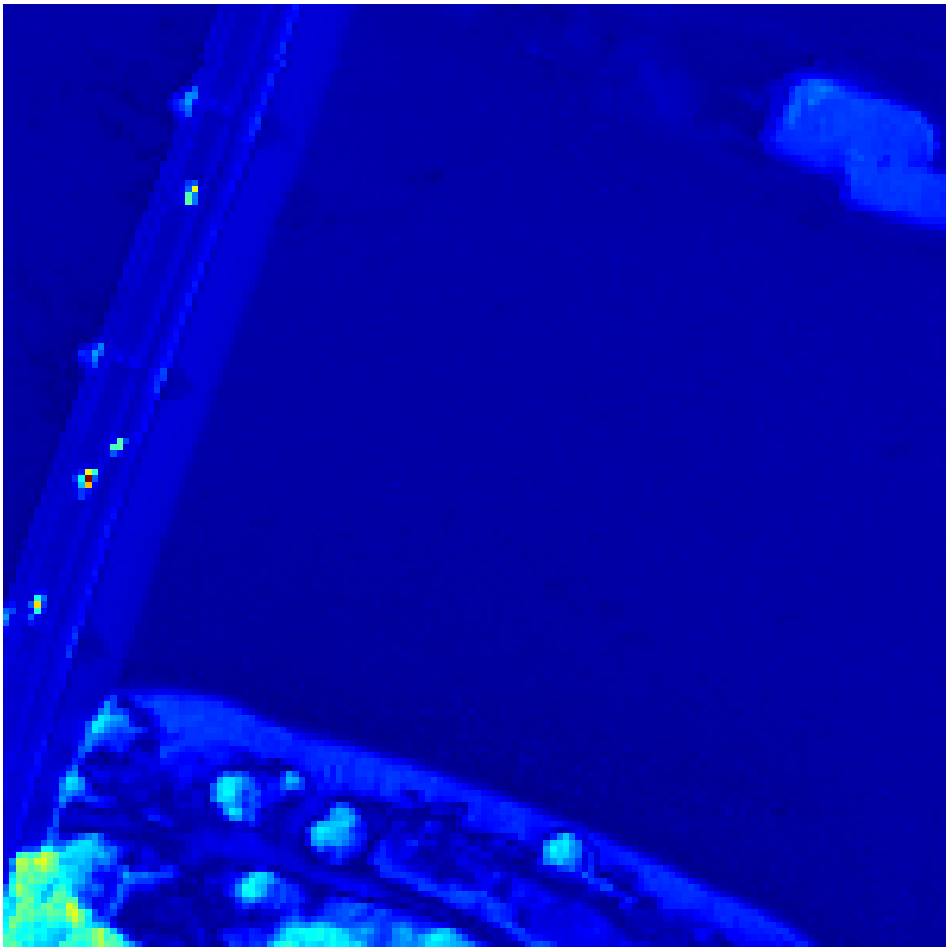}
			\includegraphics[width=\linewidth]{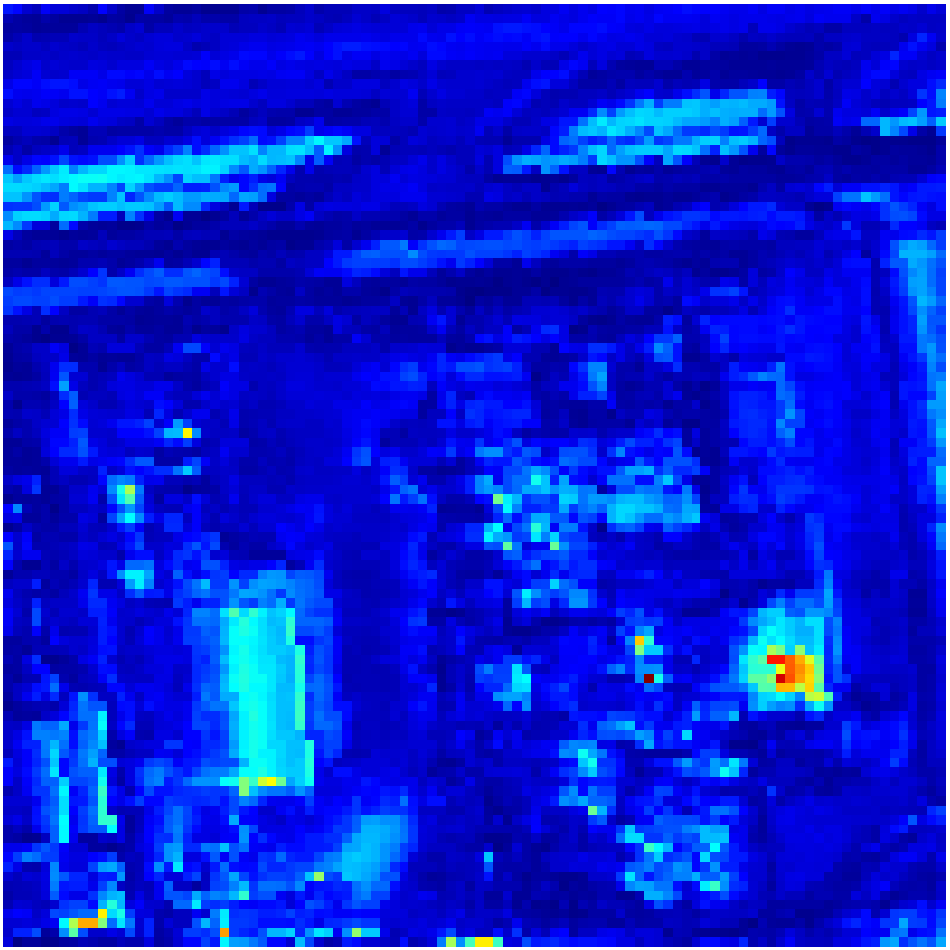}
			\includegraphics[width=\linewidth]{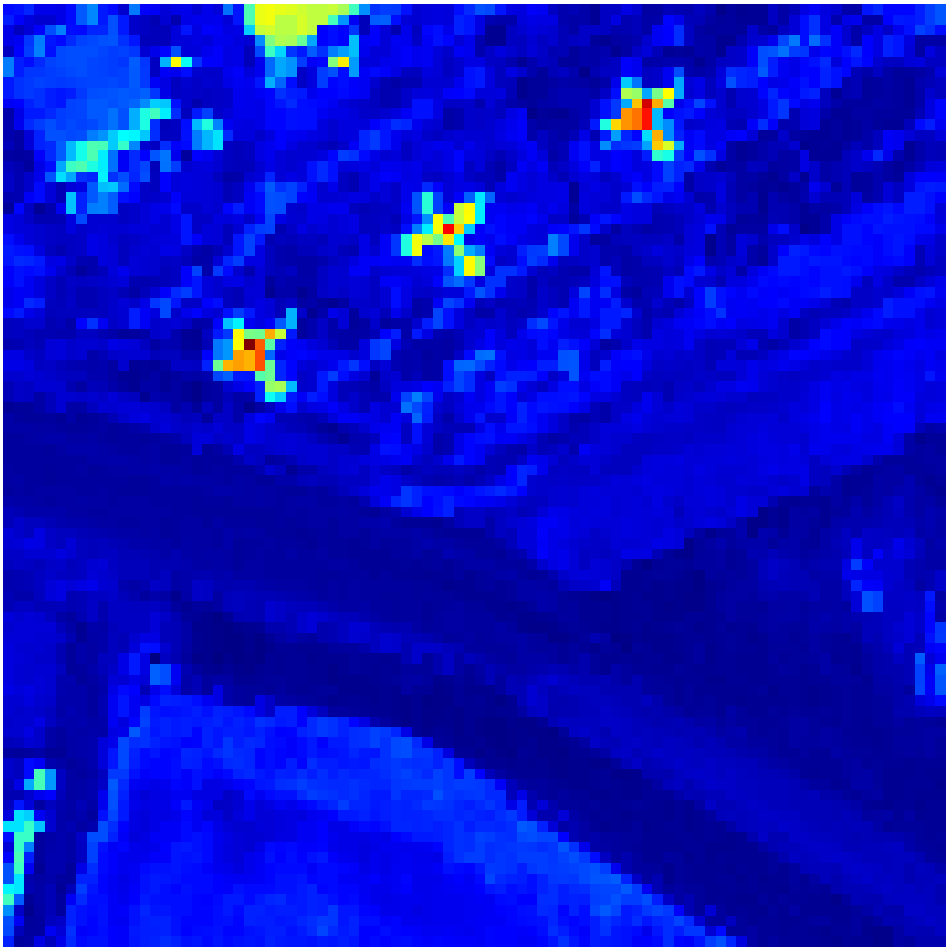}
			\caption{}
		\end{subfigure}\hfill
		\begin{subfigure}[b]{0.09\linewidth}
			\includegraphics[width=\linewidth]{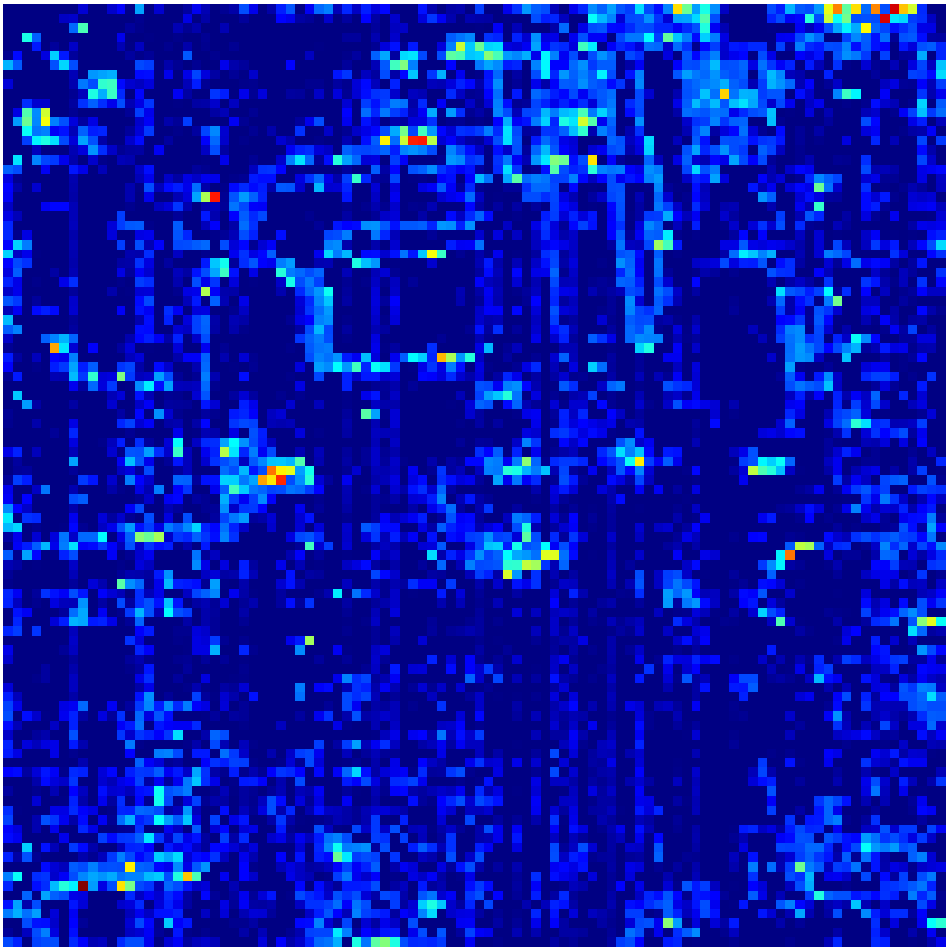}
			\includegraphics[width=\linewidth]{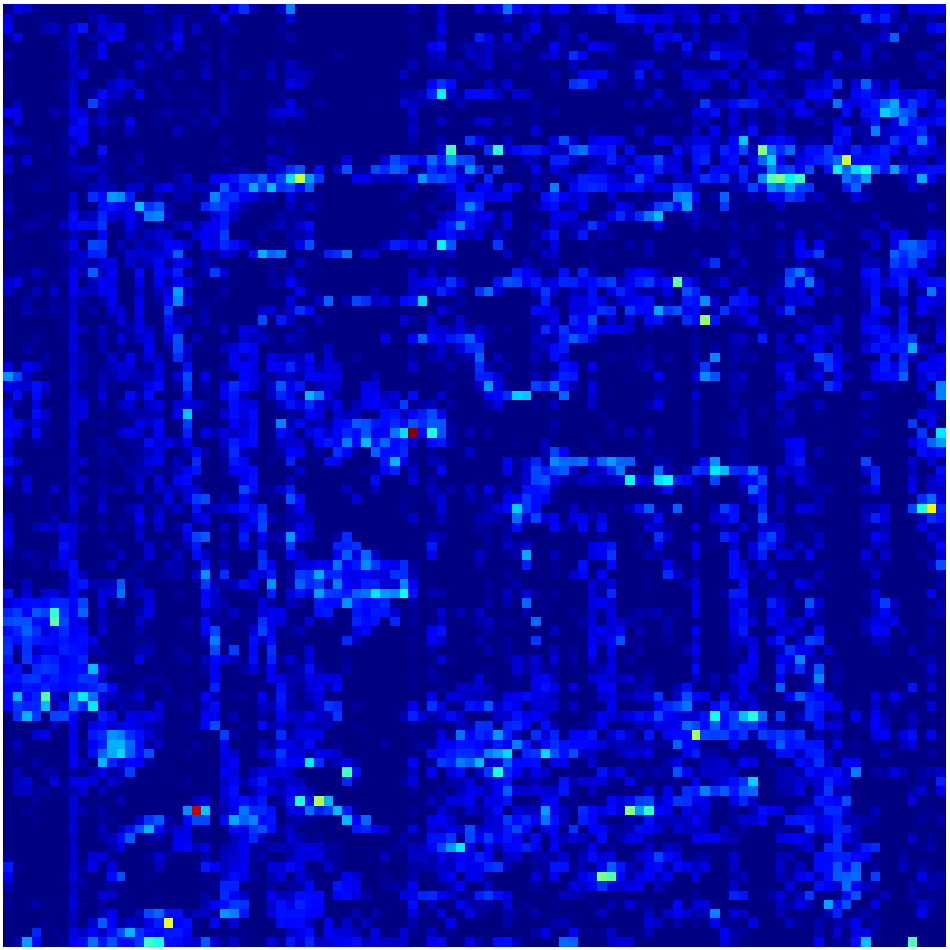}
			\includegraphics[width=\linewidth]{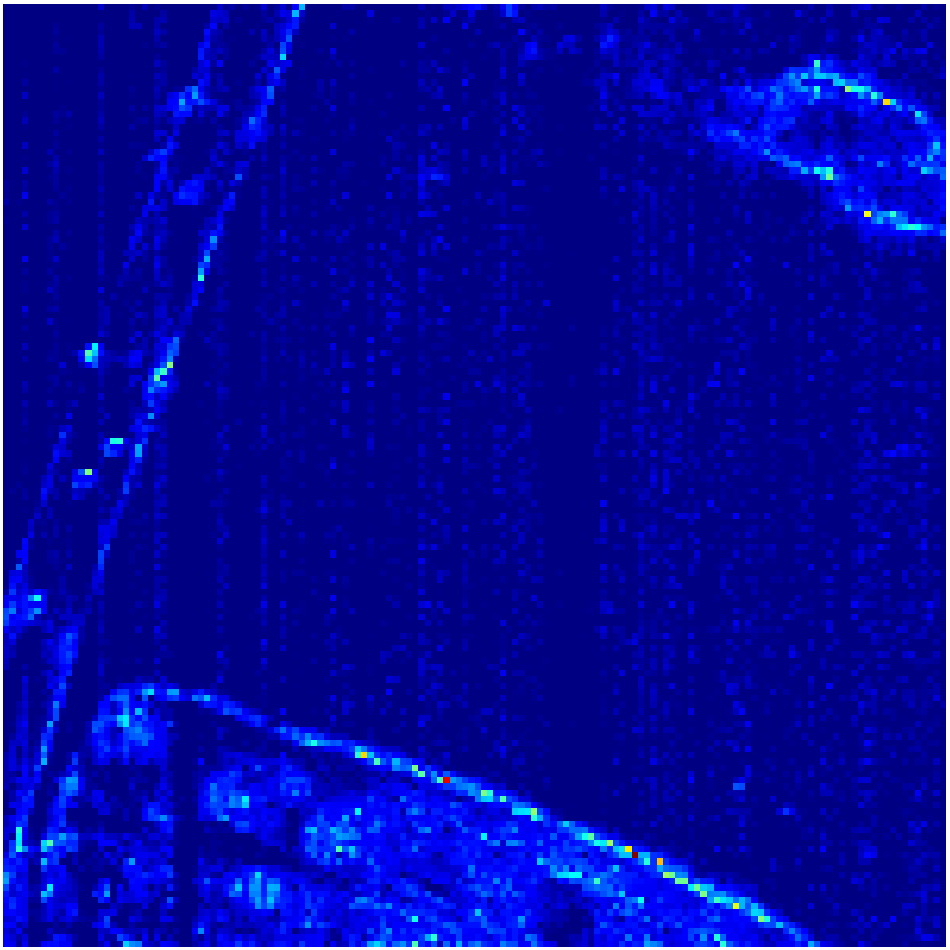}
			\includegraphics[width=\linewidth]{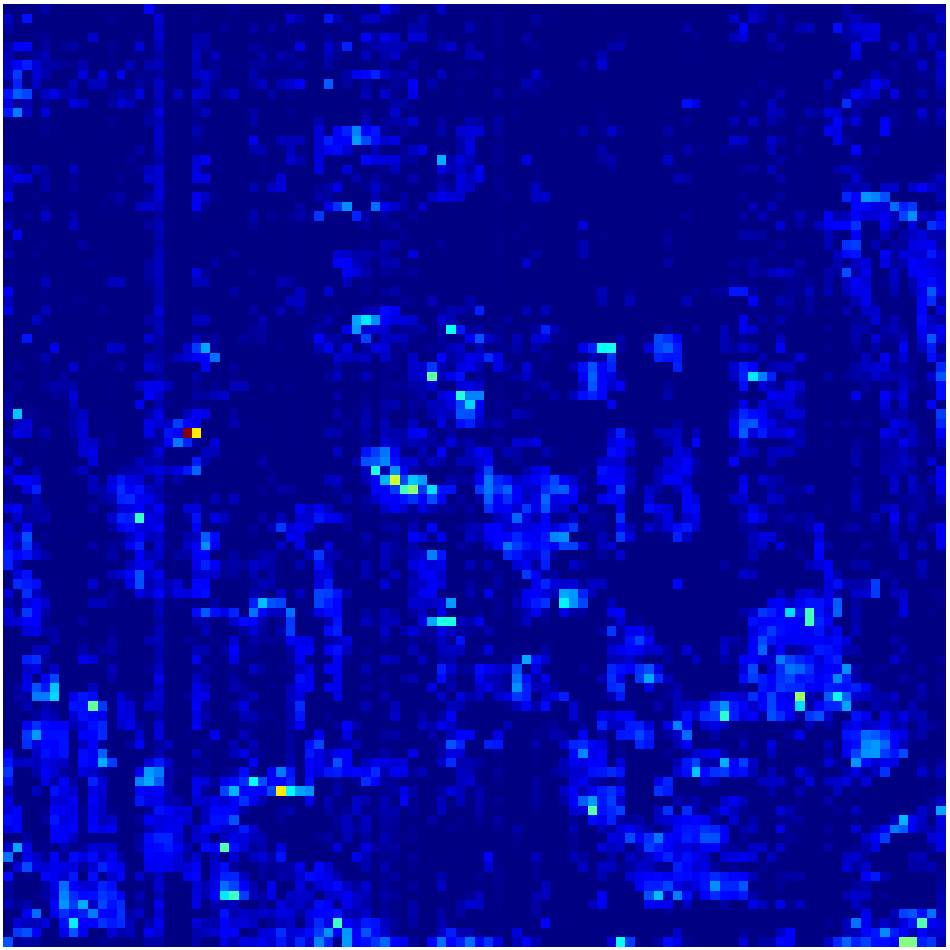}
			\includegraphics[width=\linewidth]{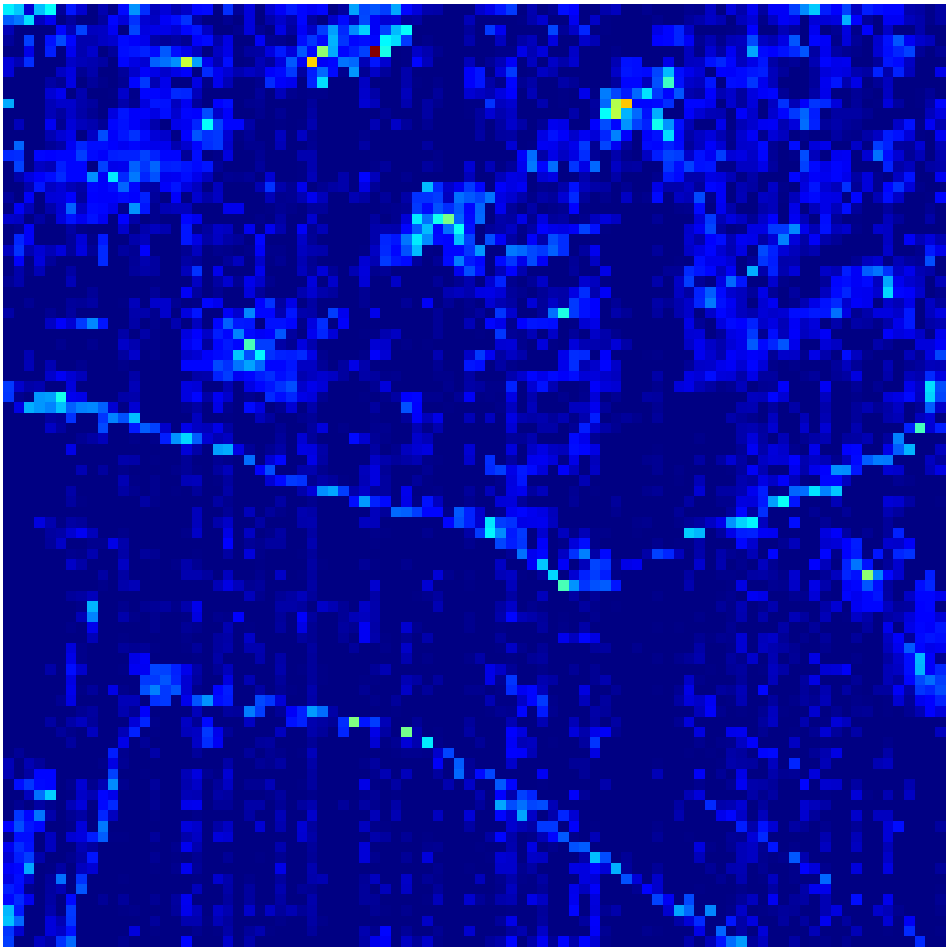}
			\caption{}
		\end{subfigure}\hfill
		\begin{subfigure}[b]{0.09\linewidth}
			\includegraphics[width=\linewidth]{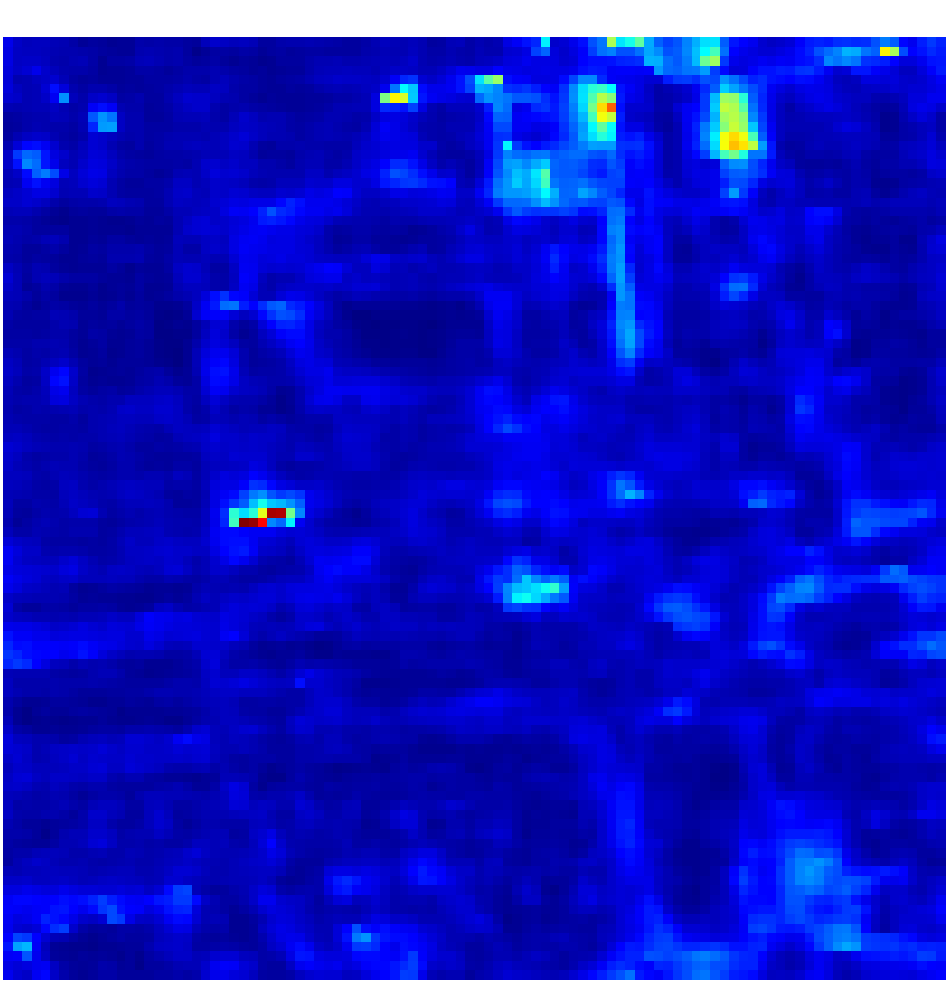}
			\includegraphics[width=\linewidth]{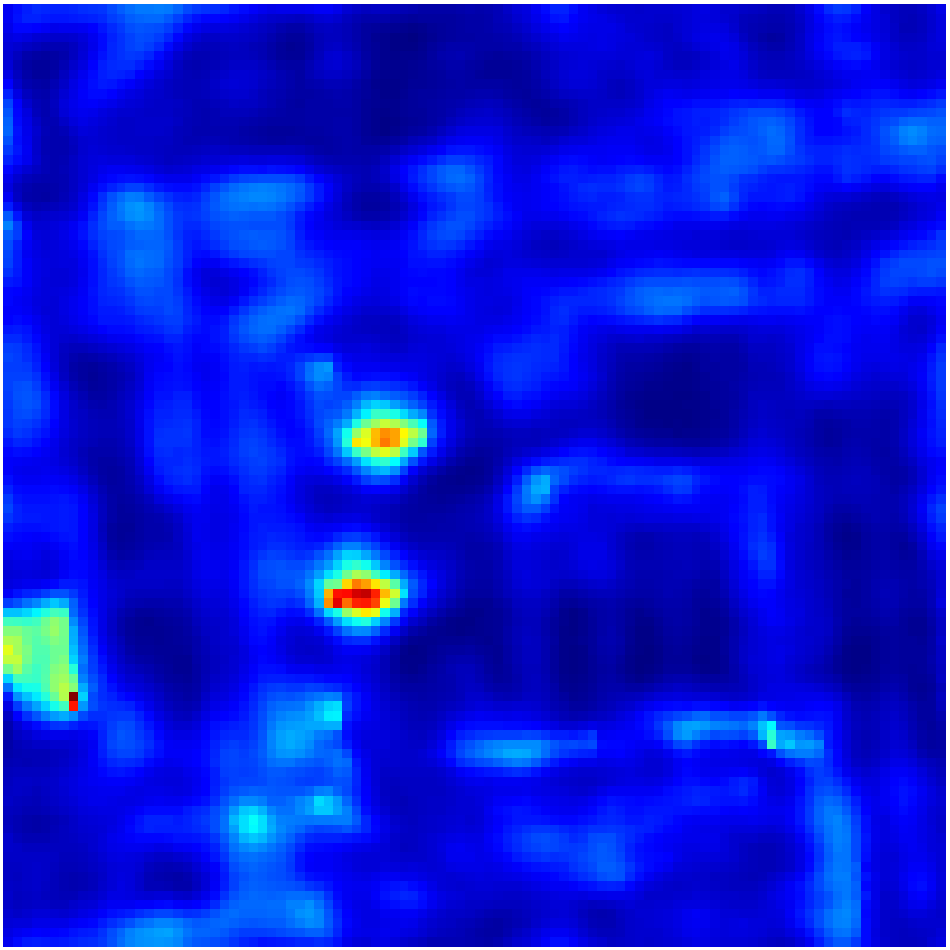}
			\includegraphics[width=\linewidth]{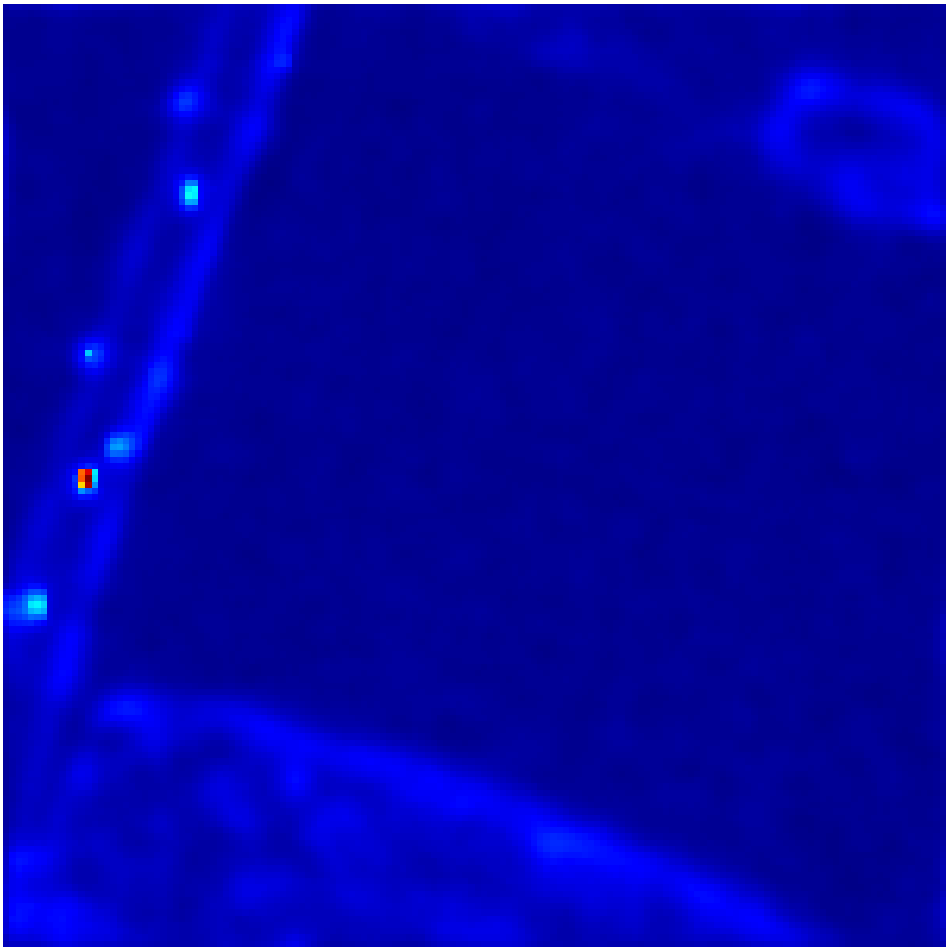}
			\includegraphics[width=\linewidth]{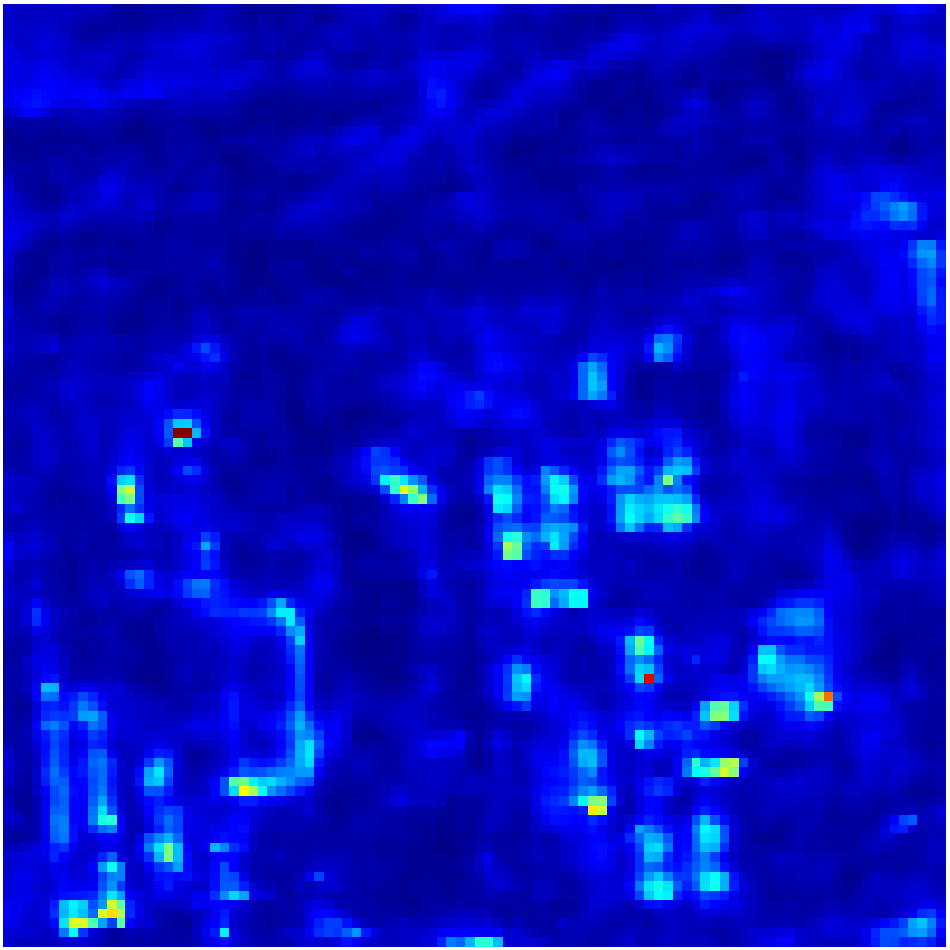}
			\includegraphics[width=\linewidth]{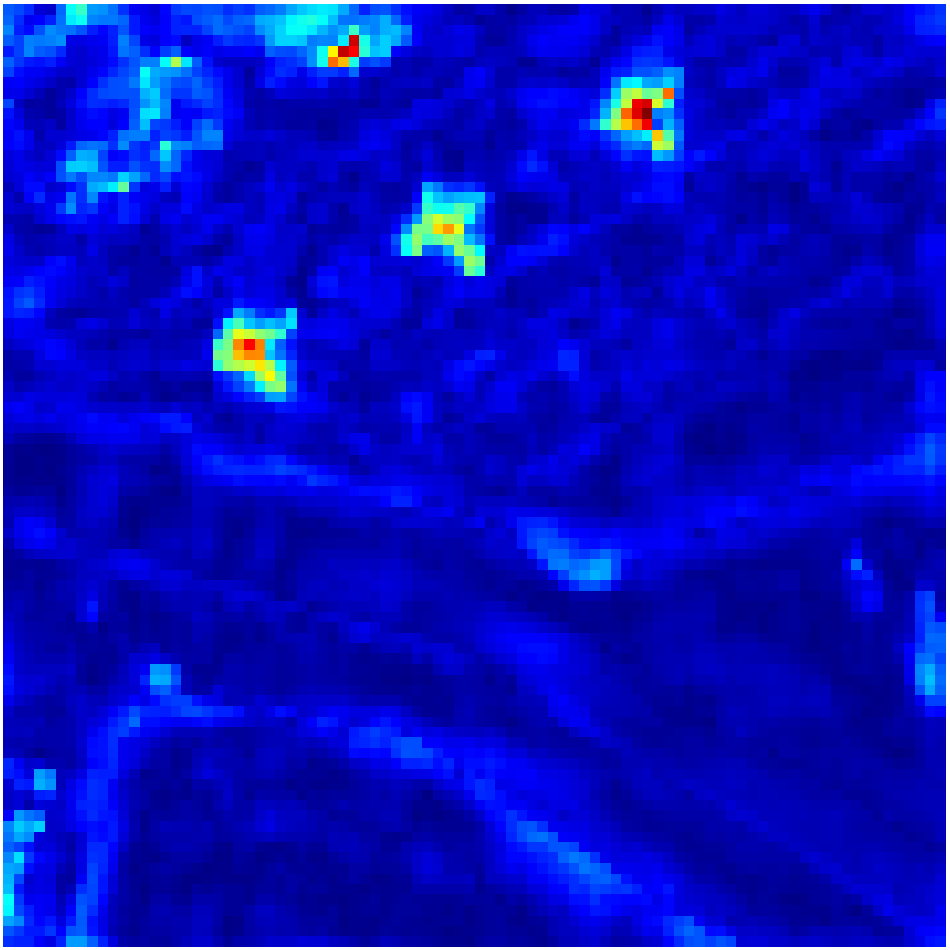}
			\caption{}
		\end{subfigure}
	\end{subfigure}
	\caption{Detection results of different methods on five datasets: (a) RX; (b) RPCA; (c) LRASR; (d) Turbo-GoDec; (e) PTA; (f) TPCA; (g) PCA-TLRSR; (h) RGAE; (i) GAED; (j) LCRS; (k) GSAA-SS.}
	\label{fig:2D}
\end{figure*}

Fig. \ref{fig:ROC} reports the ROC curves on the five datasets. On Airport2, GSAA-SS achieves a higher detection probability than the competing methods across almost the entire false-alarm range. On Beach and San Diego, its superiority is particularly pronounced when $P_F>10^{-3}$; this advantage is also evident on Urban when $P_F>10^{-2}$ and on Airport1 when $P_F>10^{-1}$. These ROC results show that the proposed detector maintains favorable detection capability under different false-alarm constraints.

\begin{figure*}[htbp]
	\centering
	\begin{subfigure}[b]{1\linewidth}
		\begin{subfigure}[b]{1\linewidth}
			\centering
			\includegraphics[width=0.95\linewidth]{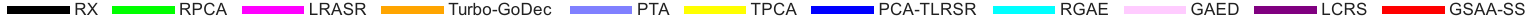}
		\end{subfigure} 
		\begin{subfigure}[b]{0.195\linewidth}
			\centering
			\includegraphics[width=\linewidth]{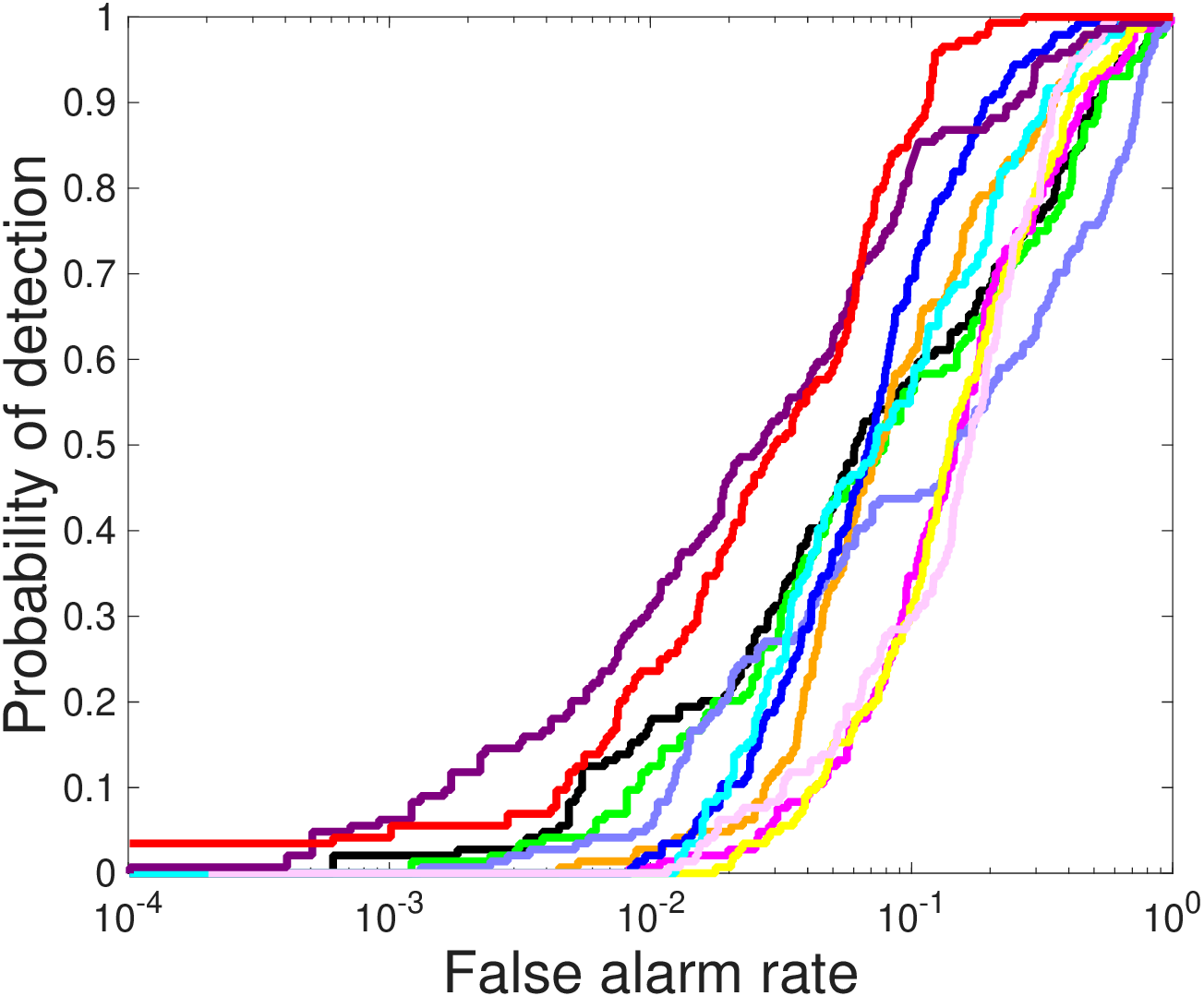}
			\caption{Airport1}
		\end{subfigure}   	
		\begin{subfigure}[b]{0.195\linewidth}
			\centering
			\includegraphics[width=\linewidth]{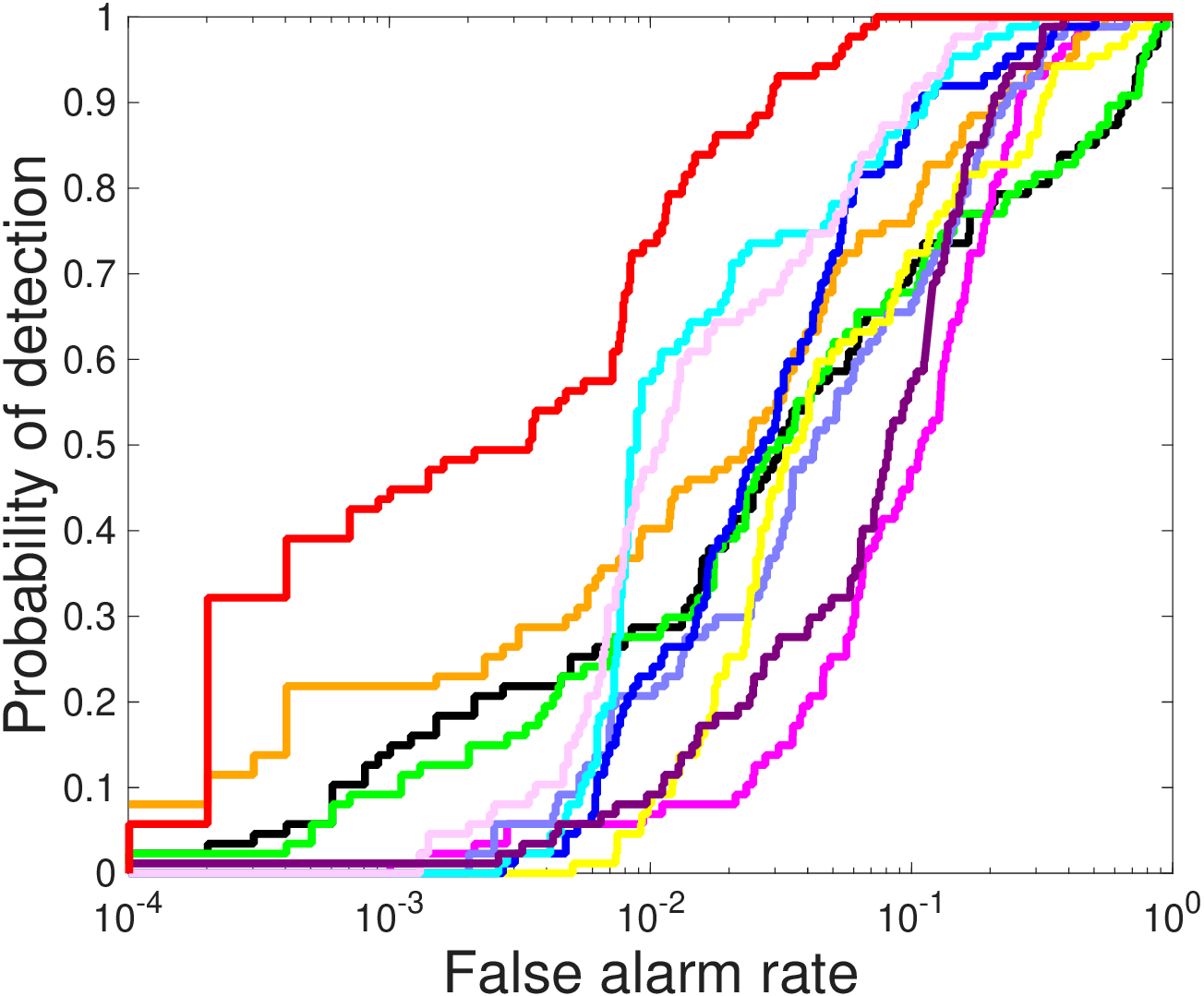}
			\caption{Airport2}
		\end{subfigure}
		\begin{subfigure}[b]{0.195\linewidth}
			\centering
			\includegraphics[width=\linewidth]{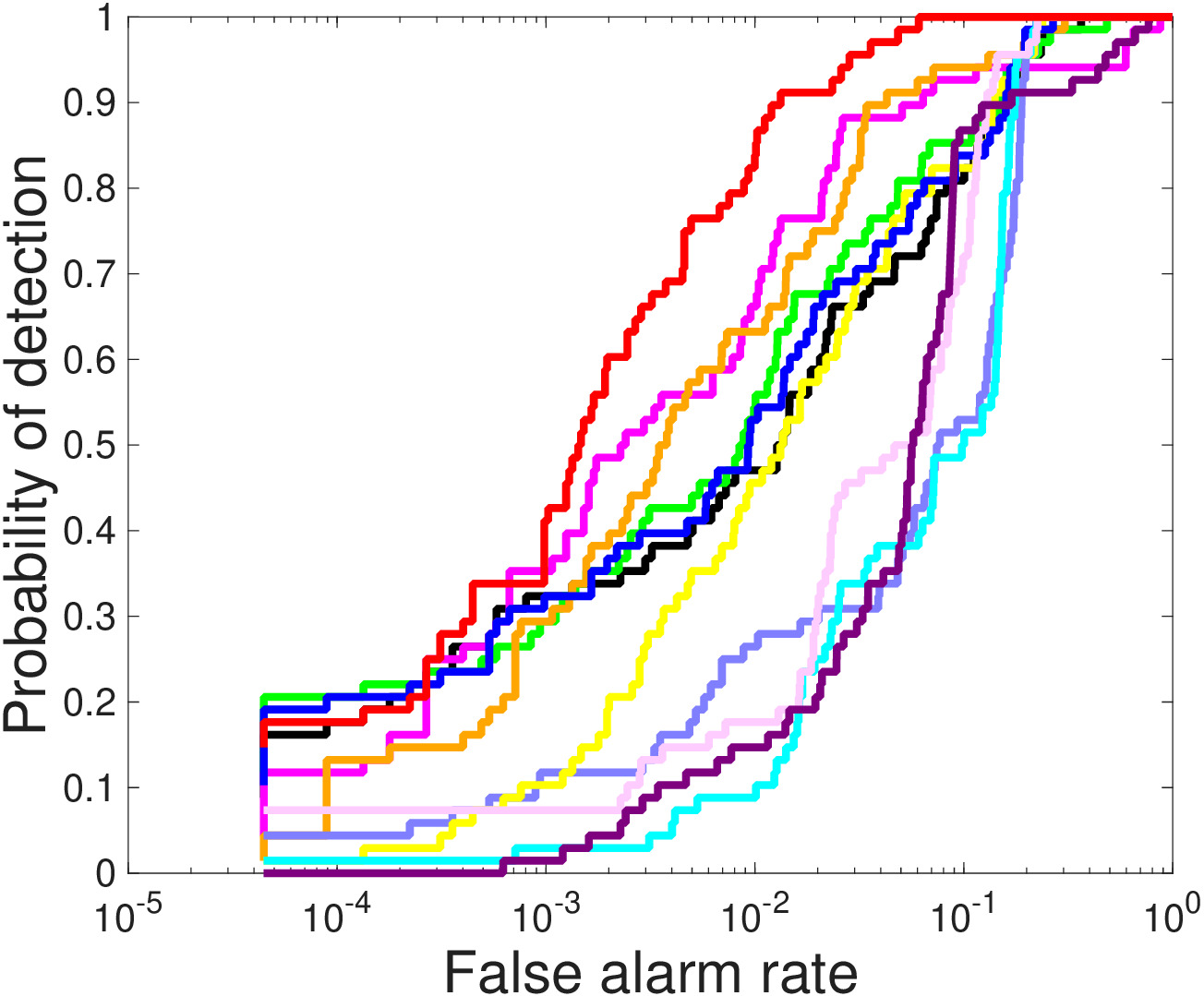}
			\caption{Beach}
		\end{subfigure}
		\begin{subfigure}[b]{0.195\linewidth}
			\centering
			\includegraphics[width=\linewidth]{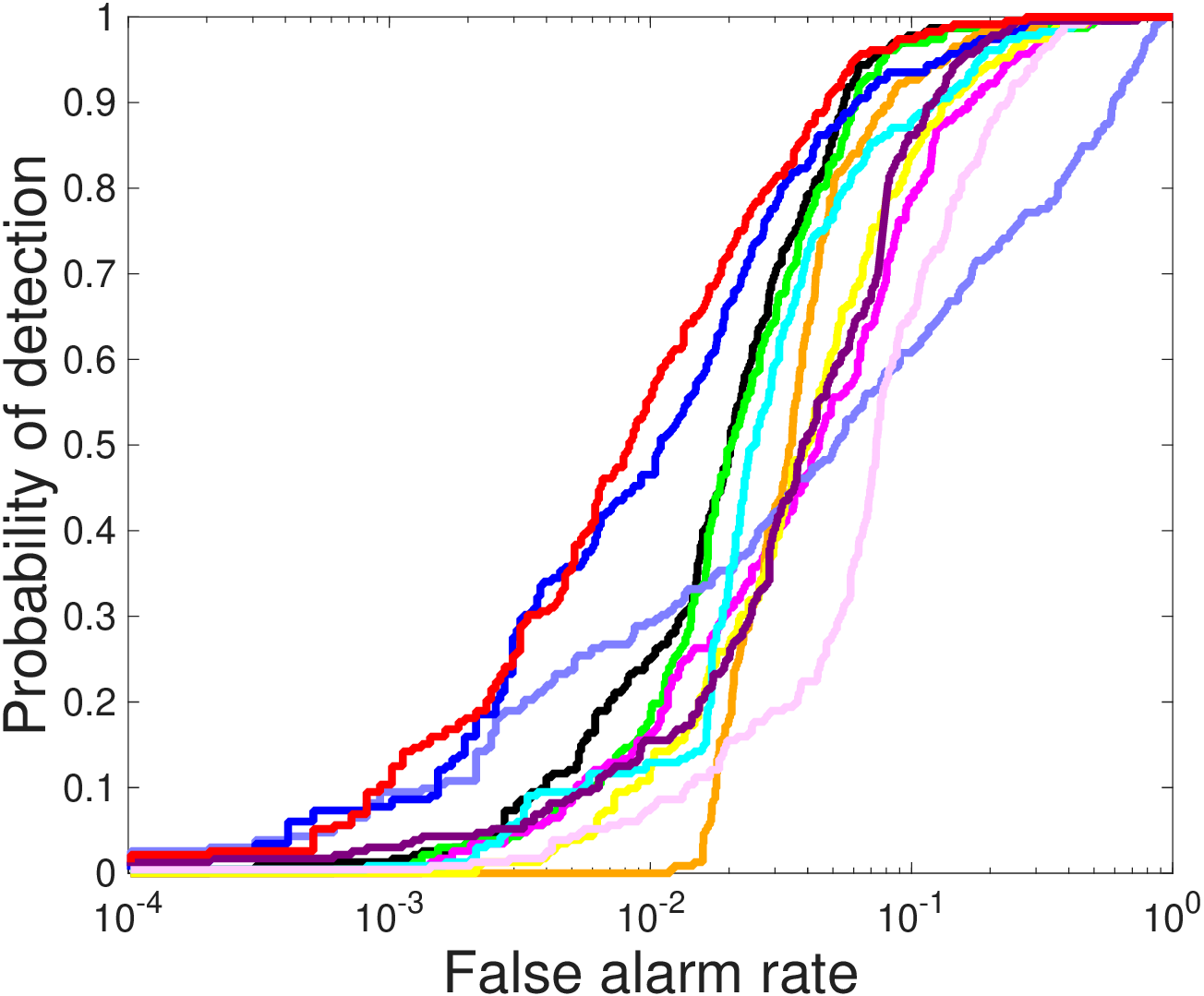}
			\caption{Urban}
		\end{subfigure}
		\begin{subfigure}[b]{0.195\linewidth}
			\centering
			\includegraphics[width=\linewidth]{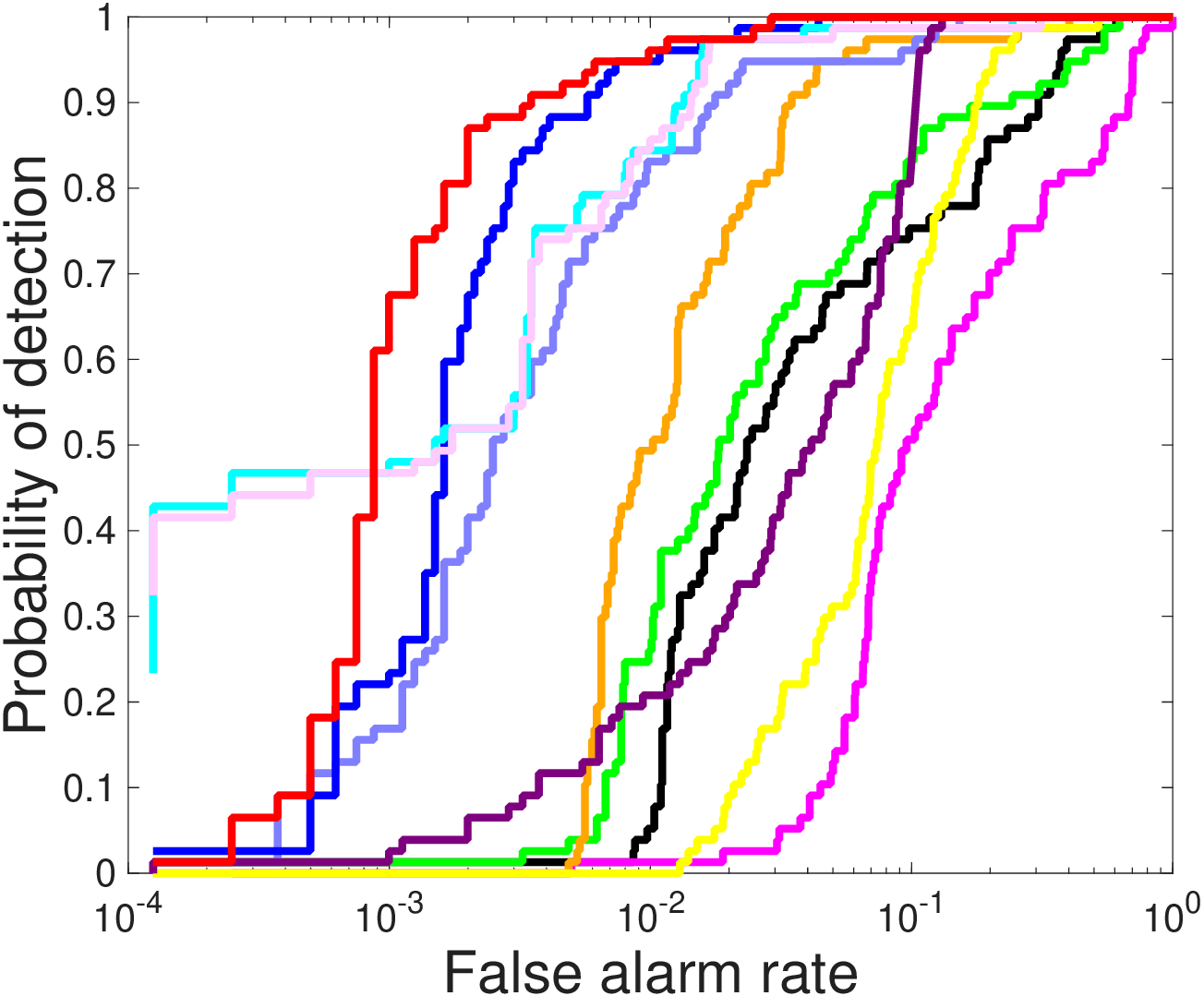}
			\caption{San Diego}
		\end{subfigure}
	\end{subfigure}
	\vfill
	\caption{ROC curves obtained by different methods.}
	\label{fig:ROC}
\end{figure*}

Table \ref{tab:AUC} lists the AUC values and running time (in seconds) of all methods. GSAA-SS obtains the highest AUC on all five datasets. Compared with the second-best result on each dataset, the improvements are 3.17\%, 2.41\%, 1.71\%, 1.02\%, and 0.09\%, respectively. The corresponding second-best methods are LCRS, GAED, Turbo-GoDec, PCA-TLRSR, and PCA-TLRSR. These results indicate that
GSAA-SS can obtain  robust performance across scenes with varying background complexity and anomaly distributions.
The runtime results in Table \ref{tab:AUC} further show the computational advantage of the proposed factorized model. Although RX is the fastest method, its detection accuracy is much lower than that of GSAA-SS. Among the remaining methods, GSAA-SS achieves the lowest runtime on average across the tested datasets, while maintaining highly competitive efficiency on each individual dataset. It is about twice as fast as PCA-TLRSR and is at least twenty-five times faster than RGAE and GAED on average. This efficiency is consistent with the complexity analysis in Section \ref{Sec:Alg}, because the proposed model avoids repeated large-scale SVDs and adaptively reduces the factor dimension during optimization.

\begin{table*}[htbp]
	\centering
	\caption{Comparison of AUC values and running time(s) of different methods.}
	\label{tab:AUC}
	\resizebox{\linewidth}{!}{
	\begin{tabular}{cc*{11}{c}} 
		\toprule
		\multirow{2.5}*{HSI} & \multirow{2.5}*{Index} & 
		\multicolumn{11}{c}{Method} \\
		\cmidrule(lr){3-13}
		& & RX & RPCA & LRASR & Turbo-GoDec & PTA & TPCA & PCA-TLRSR & RGAE & GAED & LCRS & GSAA-SS \\
		\midrule
		\multirow{2}{*}{Airport1} 
		& AUC & 0.8221 & 0.8088 & 0.7942 & 0.8672 & 0.7331 & 0.8023 & 0.9057 & 0.8684 & 0.8133 & 0.9237 & 0.9530 \\
		& Time (s) & 0.06 & 4.23 & 18.45 & 15.36 & 16.04 & 19.51 & 2.56 & 52.79 & 42.80 & 1.91 & 1.94 \\
		\midrule
		\multirow{2}{*}{Airport2}
		& AUC & 0.8404 & 0.8428 & 0.8689 & 0.9299 & 0.9096 & 0.8891 & 0.9458 & 0.9645 & 0.9673 & 0.8998 & 0.9906 \\
		& Time (s) & 0.05 & 4.50 & 19.75 & 14.98 & 16.32 & 19.58 & 2.56 & 52.02 & 41.58 & 1.94 & 1.70 \\
		\midrule
		\multirow{2}{*}{Beach}
		& AUC & 0.9538 & 0.9603 & 0.9504 & 0.9776 & 0.9061 & 0.9583 & 0.9599 & 0.9062 & 0.9368 & 0.9045 & 0.9943 \\
		& Time (s) & 0.04 & 2.05 & 43.72 & 26.43 & 17.53 & 22.33 & 6.66 & 219.39 & 73.75 & 3.11 & 2.50 \\
		\midrule
		\multirow{2}{*}{Urban}
		& AUC & 0.9692 & 0.9658 & 0.9293 & 0.9536 & 0.8258 & 0.9370 & 0.9711 & 0.9510 & 0.8986 & 0.9429 & 0.9810 \\
		& Time (s) & 0.06 & 3.96 & 19.91 & 15.51 & 15.64 & 20.08 & 2.56 & 57.09 & 42.38 & 2.02 & 1.67 \\
		\midrule
		\multirow{2}{*}{San Diego}
		& AUC & 0.9106 & 0.9264 & 0.7847 & 0.9768 & 0.9895 & 0.9075 & 0.9970 & 0.9930 & 0.9918 & 0.9500 & 0.9979 \\
		& Time (s) & 0.38 & 5.53 & 18.22 & 12.49 & 14.68 & 20.95 & 2.92 & 100.88 & 40.98 & 1.66 & 1.41 \\
		\bottomrule
	\end{tabular}}
\end{table*}

To further examine separability, Fig. \ref{fig:AB} presents normalized background--anomaly separation maps. A larger gap between the anomaly and background distributions indicates better separability. On Airport1, Airport2, and Beach, GSAA-SS produces a clearer separation gap than the competing methods. On Urban, RX, RPCA, PCA-TLRSR, and GSAA-SS show comparable separation, while the other methods exhibit stronger overlap between anomaly and background responses. On San Diego, PCA-TLRSR and GSAA-SS achieve similar separability, but PCA-TLRSR is less stable on Airport1, Airport2, and Beach. Overall, GSAA-SS provides consistently strong separability across all tested scenes.

\begin{figure*}[htbp]
	\centering
	\begin{minipage}{0.9\textwidth}
		\begin{subfigure}[b]{0.195\linewidth}
			\centering
			\includegraphics[width=\linewidth]{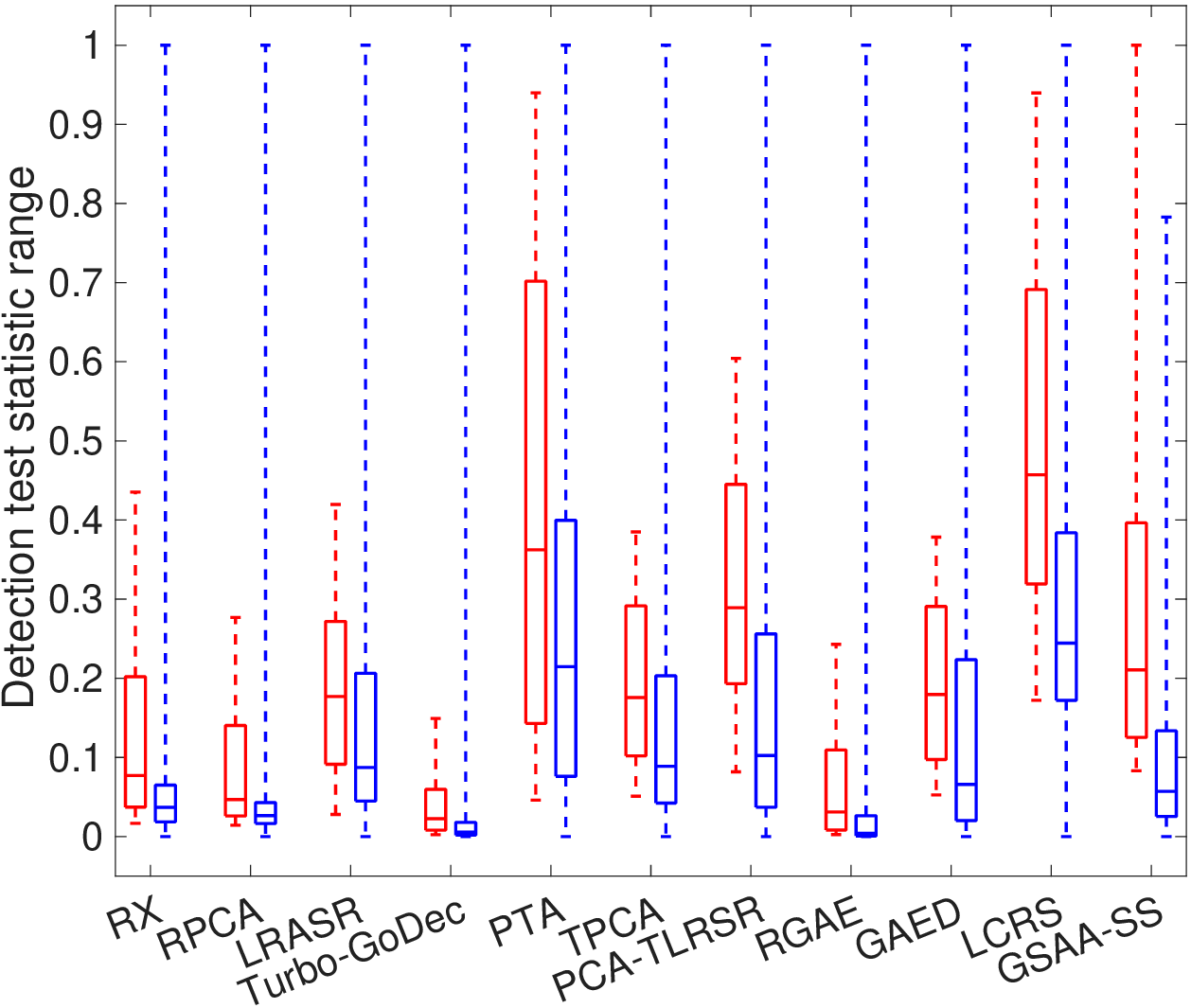}
			\caption{Airport1}
		\end{subfigure}   	
		\begin{subfigure}[b]{0.195\linewidth}
			\centering
			\includegraphics[width=\linewidth]{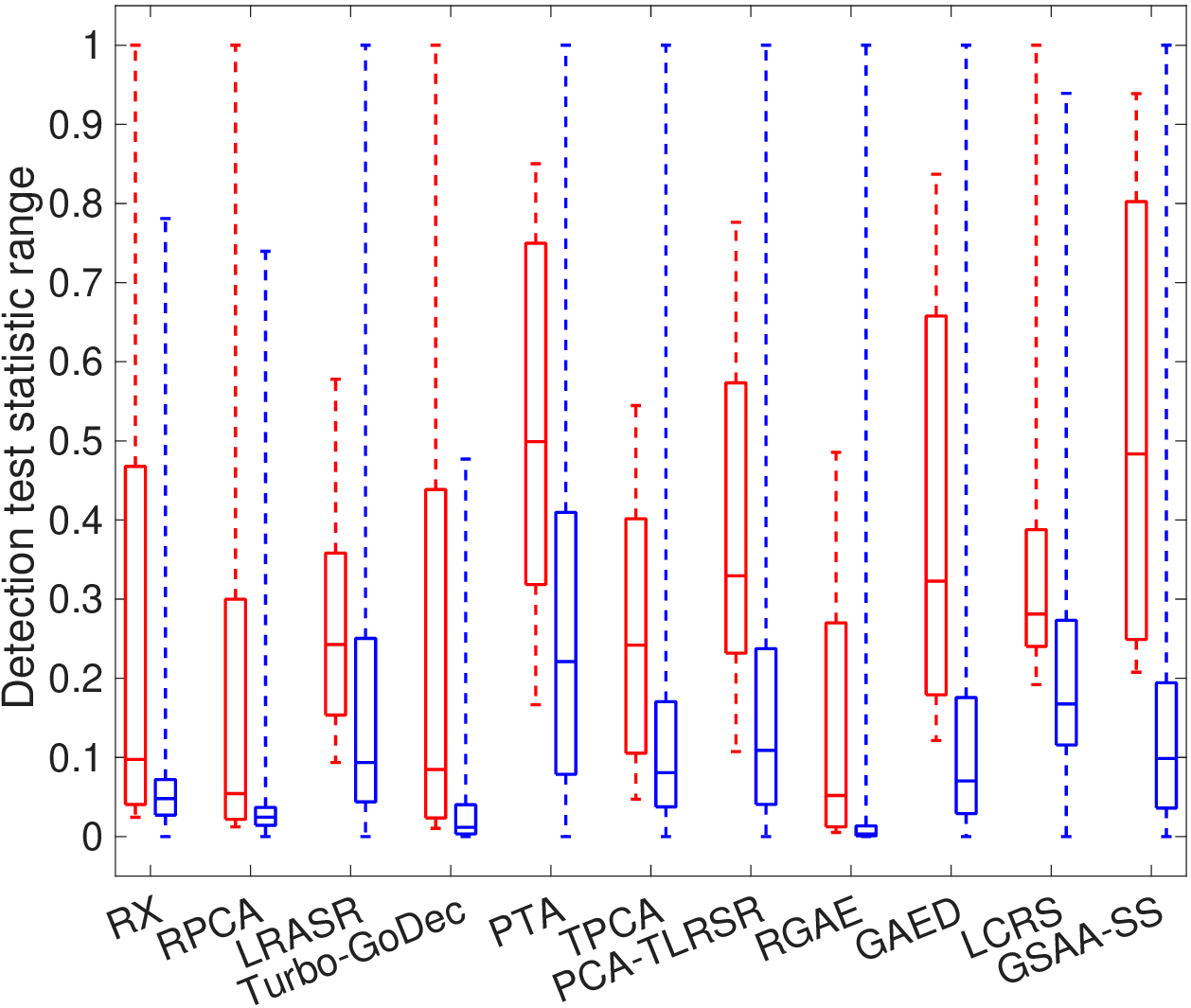}
			\caption{Airport2}
		\end{subfigure}
		\begin{subfigure}[b]{0.195\linewidth}
			\centering
			\includegraphics[width=\linewidth]{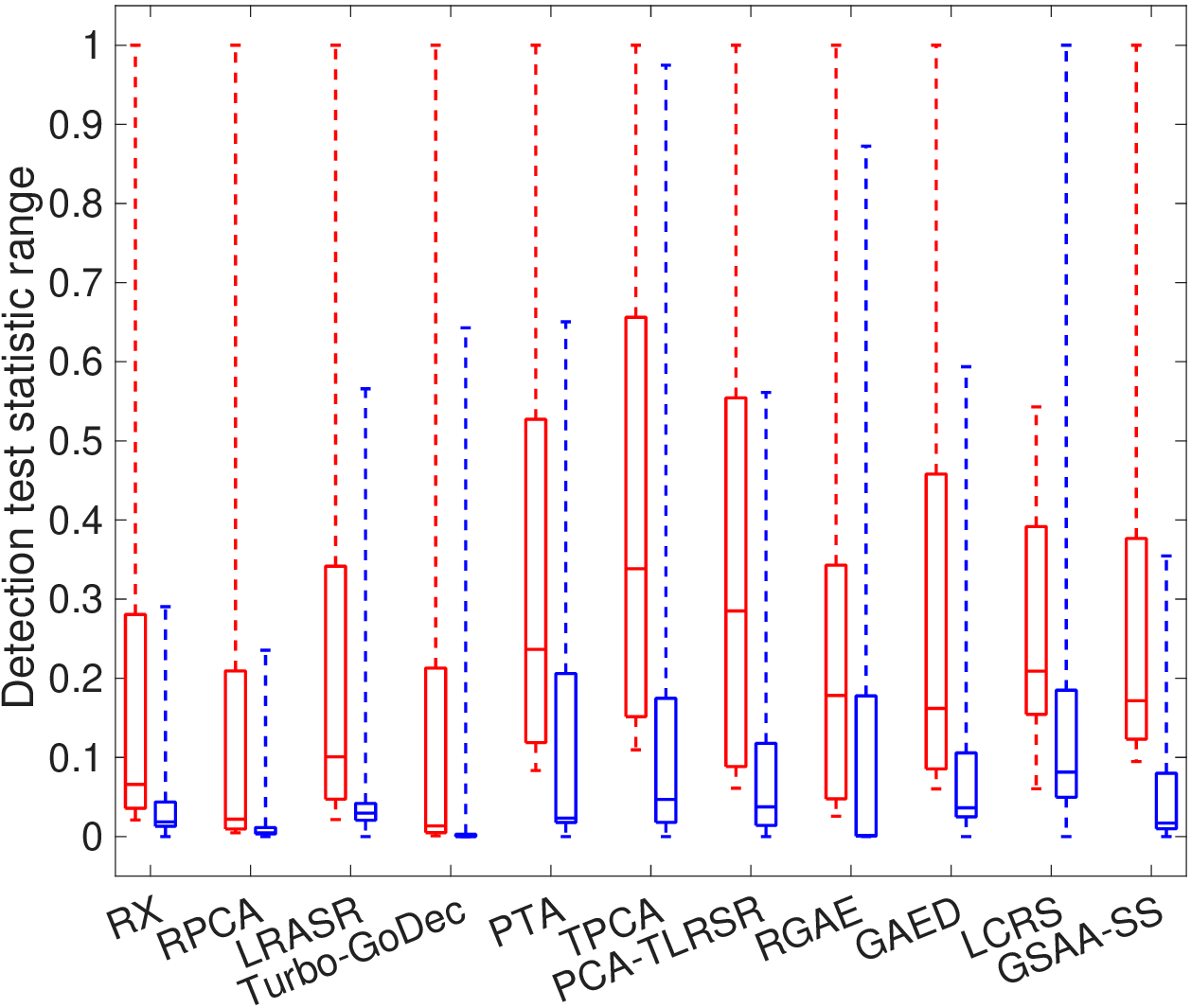}
			\caption{Beach}
		\end{subfigure}
		\begin{subfigure}[b]{0.195\linewidth}
			\centering
			\includegraphics[width=\linewidth]{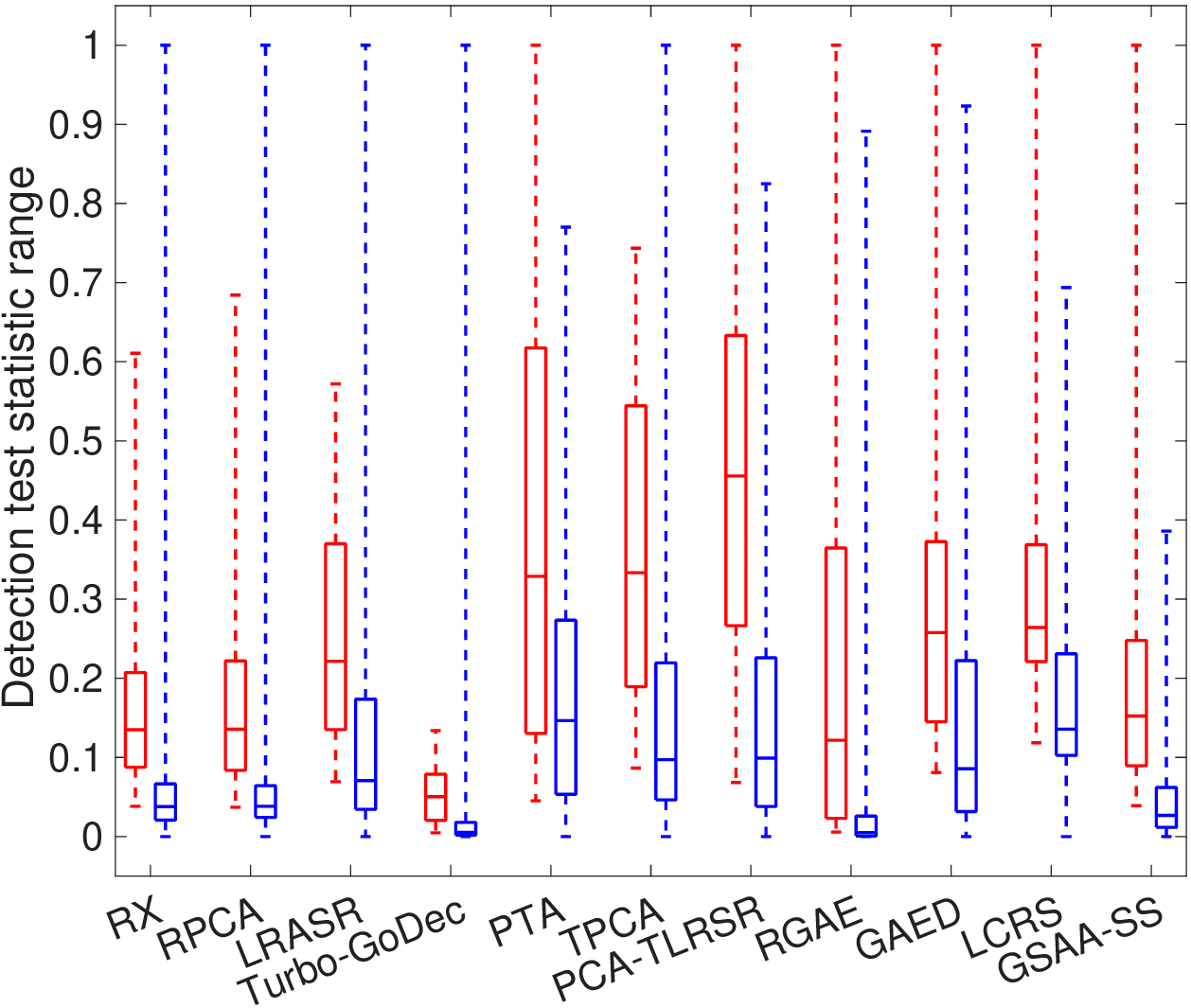}
			\caption{Urban}
		\end{subfigure}
		\begin{subfigure}[b]{0.195\linewidth}
			\centering
			\includegraphics[width=\linewidth]{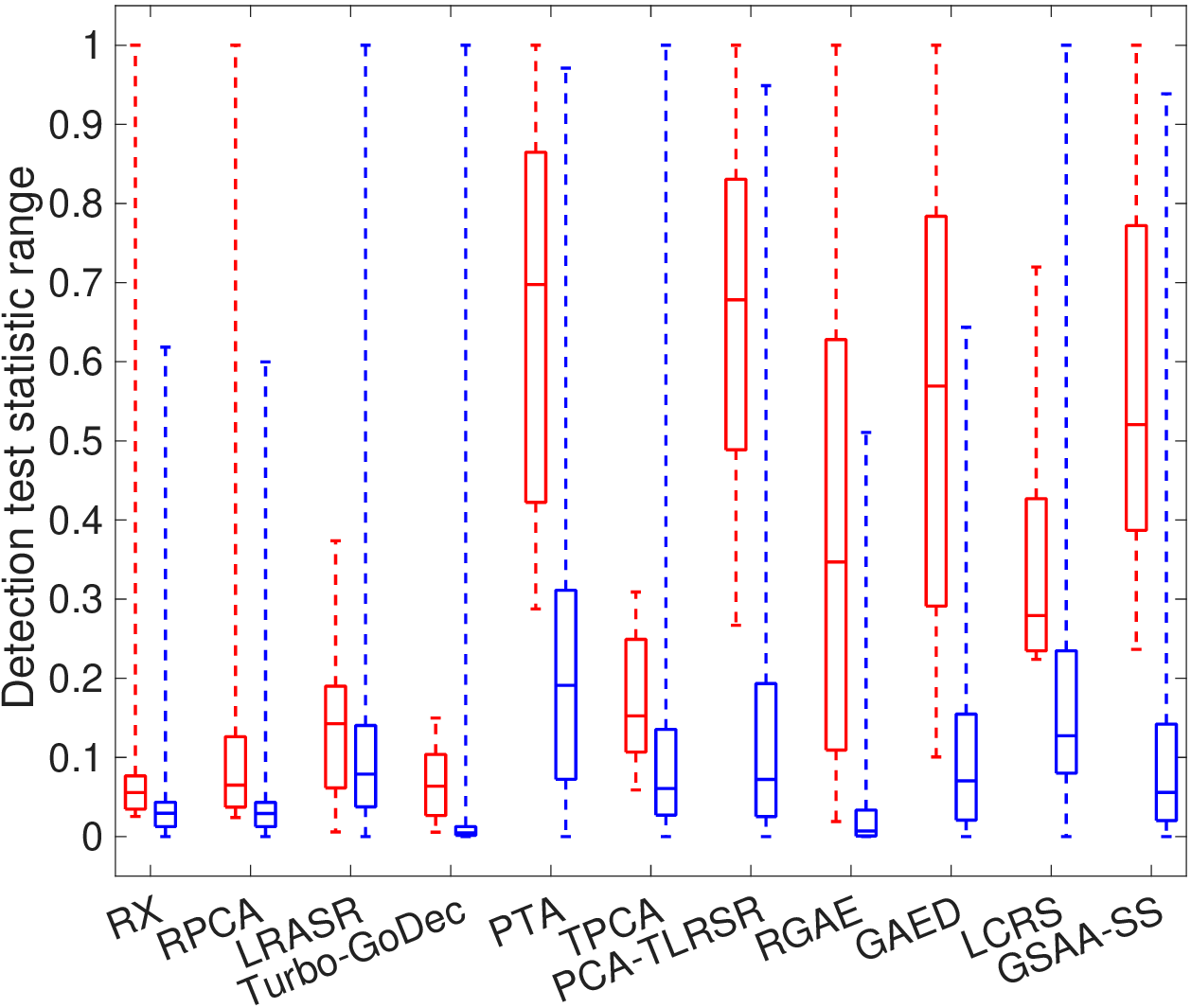}
			\caption{San Diego}
		\end{subfigure}
	\end{minipage}\hfill
	\begin{minipage}{0.1\textwidth}
		\centering
		\includegraphics[width=\linewidth]{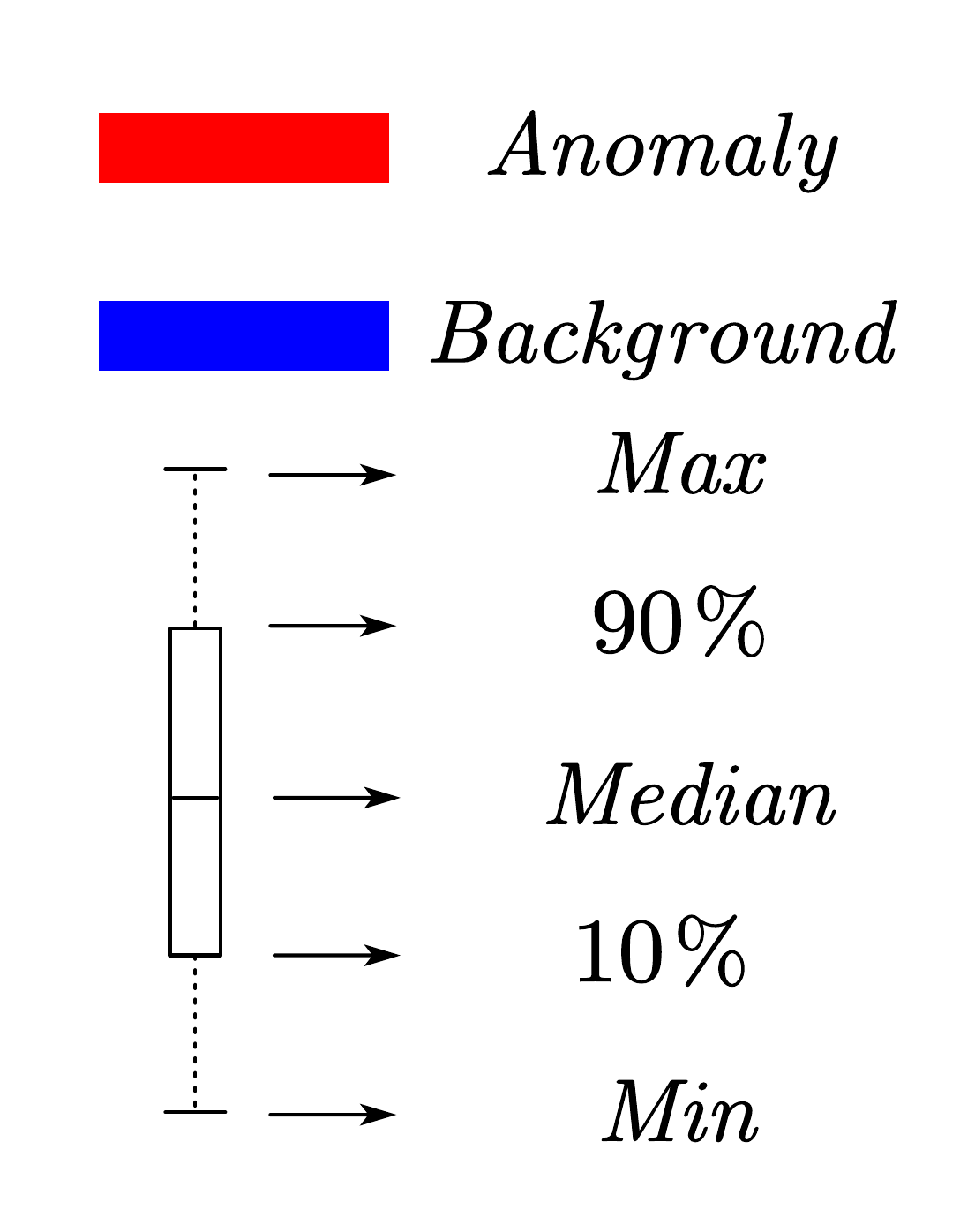}
	\end{minipage}
	\vfill
	\caption{Separability maps of different methods.}
	\label{fig:AB}
\end{figure*}

\subsection{Discussions}
This subsection begins with an analysis of how different parameters affect performance, followed by a discussion of the model’s novel contributions.

\subsubsection{Parameter analysis}\label{sec:pa}
The proposed model contains five model parameters, namely $\gamma$, $\lambda$, $p$, $\alpha_1$, and $\alpha_2$, and four algorithm parameters, namely $\beta_u^0$ and $\rho_u$ for $u\in[2]$. In all experiments, the initial penalty parameters are set as $\beta_1^0=5\times10^{-2}$ and $\beta_2^0=10^{-2}$. The parameter $\rho_1$ is selected from $\{1.3,1.4,1.5\}$, while $\rho_2$ is fixed at 1.2. Following the convergence requirement, $\lambda$ is updated as $\lambda=\min\{10\beta_1,10^{10}\}$.

We first evaluate the sensitivity to $\gamma$ and $p$. The parameter $\gamma$ is selected from $\{10^{-2},5\times10^{-2},10^{-1},5\times10^{-1},1\}$, and $p$ varies from 0.2 to 1 with an interval of 0.1. Fig. \ref{fig:gamma-p} shows the resulting AUC surfaces. The proposed method is stable when $\gamma \in \{5\times10^{-2},10^{-1}\}$ and $p\in[0.4,0.7]$. Therefore, we set $\gamma=5\times10^{-2}$ and $p=0.6$ in the following experiments.

\begin{figure*}[htbp]
	\centering
	\begin{subfigure}[b]{1\linewidth}
		\begin{minipage}{0.195\textwidth}
			\centering
			\includegraphics[width=1\textwidth]{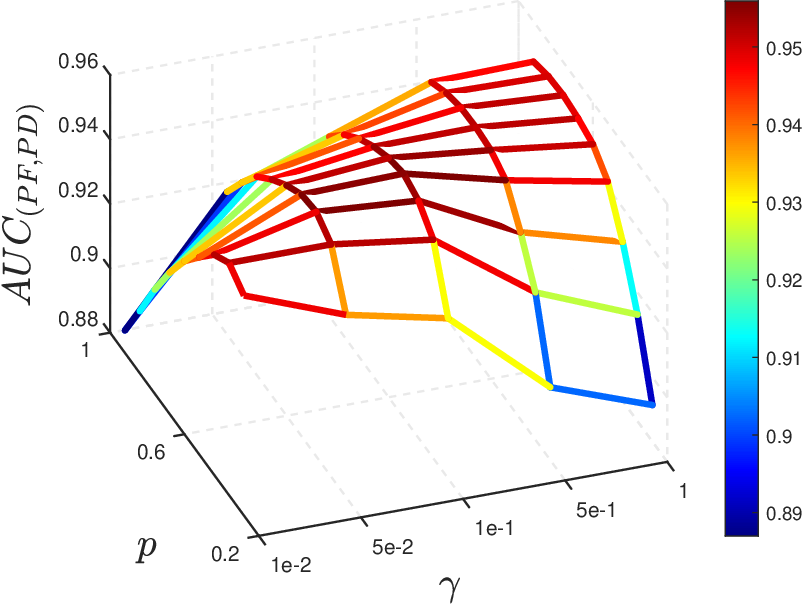}
			\caption{Airport1}
		\end{minipage}
		\begin{minipage}{0.195\textwidth}
			\centering
			\includegraphics[width=1\textwidth]{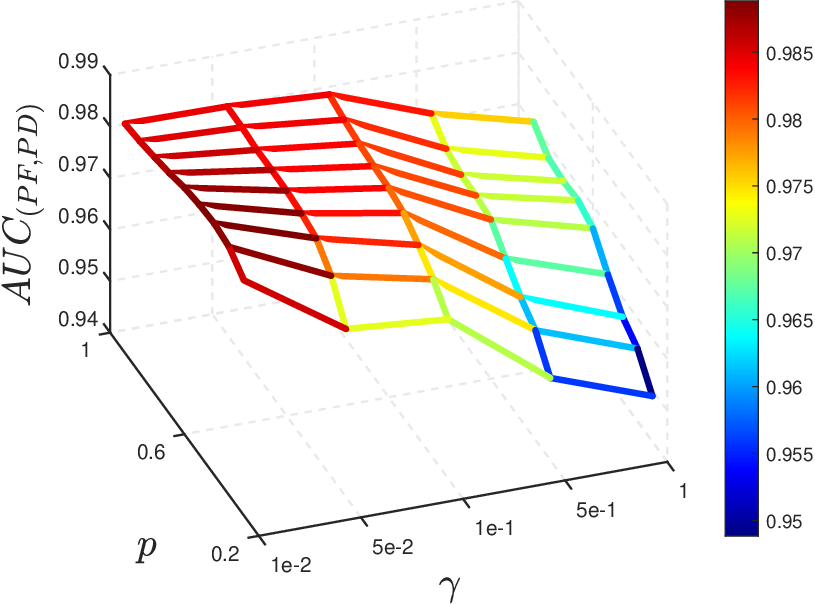}
			\caption{Airport2}
		\end{minipage}
		\begin{minipage}{0.195\textwidth}
			\centering
			\includegraphics[width=1\textwidth]{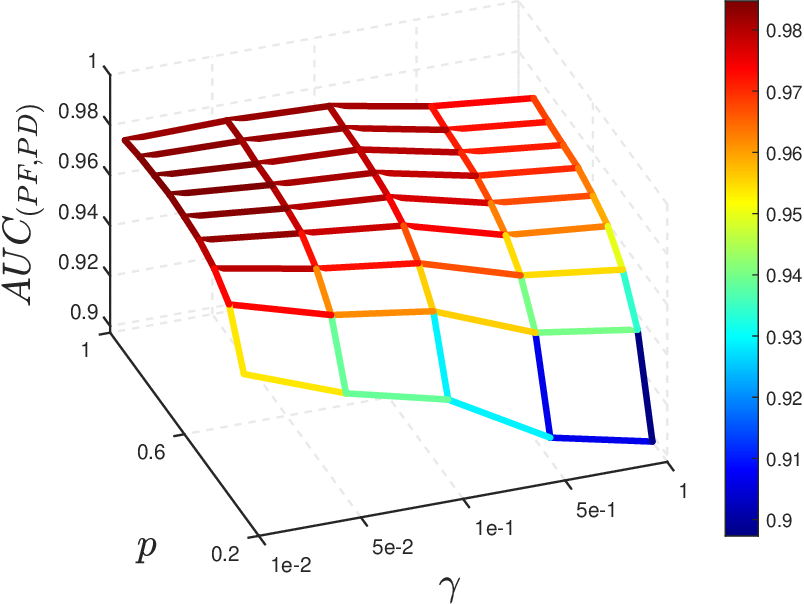}
			\caption{Beach}
		\end{minipage}
		\begin{minipage}{0.195\textwidth}
			\centering
			\includegraphics[width=1\textwidth]{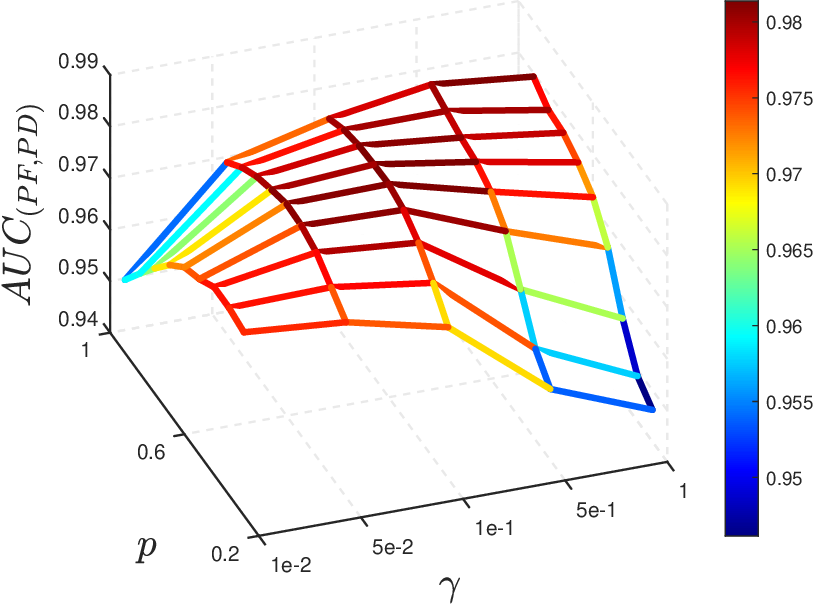}
			\caption{Urban}
		\end{minipage}
		\begin{minipage}{0.195\textwidth}
			\centering
			\includegraphics[width=1\textwidth]{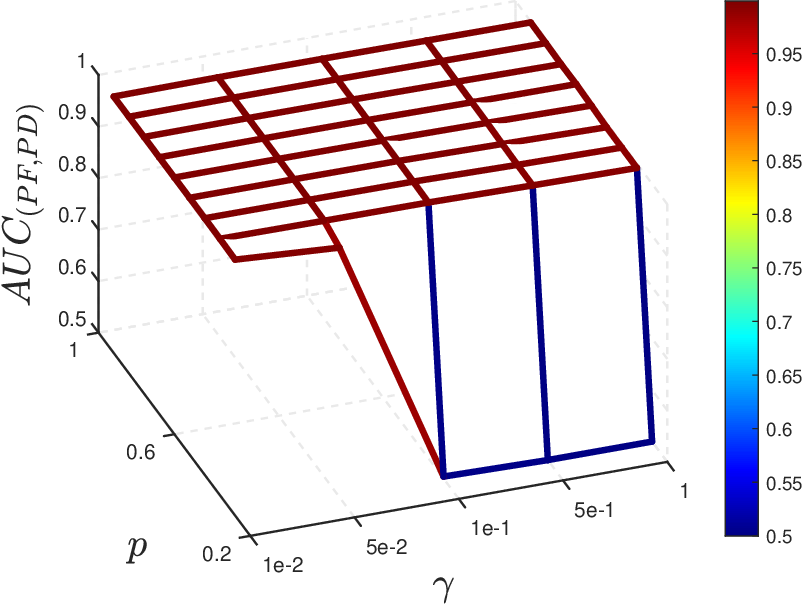}
			\caption{San Diego}
		\end{minipage}
	\end{subfigure}
	\vfill
	\caption{Surfaces of AUC values of the result by our method with different $\gamma$ and $p$.}
	\label{fig:gamma-p}
\end{figure*}

We then examine the influence of $\alpha_1$ and $\alpha_2$, which control the number of vertical and horizontal changes in the learned grouping map. Both parameters are selected from $\{1,5,10,20,30,40,50\}$. Fig. \ref{fig:alpha} reports the corresponding AUC surfaces. The performance is insensitive to moderate changes in $\alpha_1$ and $\alpha_2$: for all datasets except Beach, the variation of AUC is below 0.5\%, and the largest variation on Beach is about 2\%. We also observe that the preferred value of $\alpha_1$ tends to increase when more anomaly blocks are present. Therefore, we fix $\alpha_2=10$ and set $\alpha_1=1$ for scenes with few anomaly blocks and $\alpha_1=50$ for scenes with more anomaly blocks.
\begin{figure*}[htbp]
	\centering
	\begin{subfigure}[b]{1\linewidth}
		\begin{minipage}{0.195\textwidth}
			\centering
			\includegraphics[width=1\textwidth]{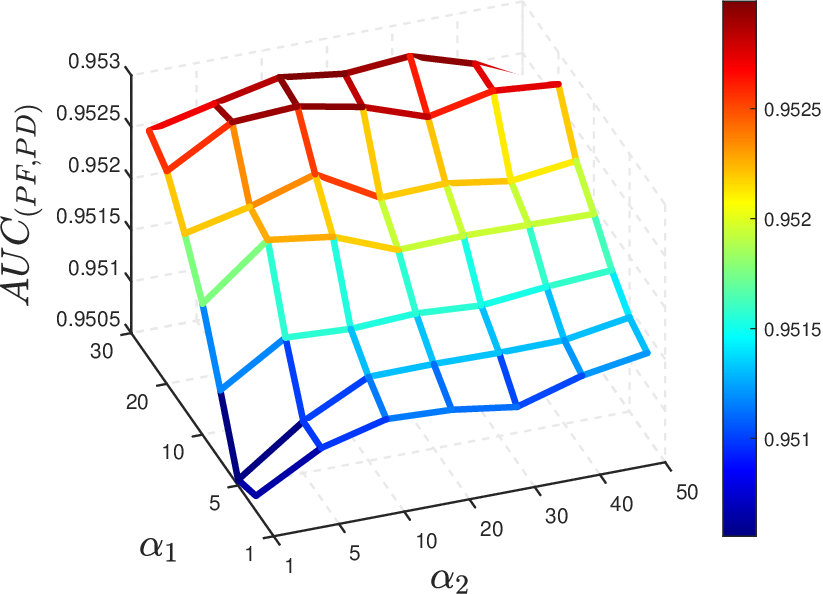}
			\caption{Airport1}
		\end{minipage}
		\begin{minipage}{0.195\textwidth}
			\centering
			\includegraphics[width=1\textwidth]{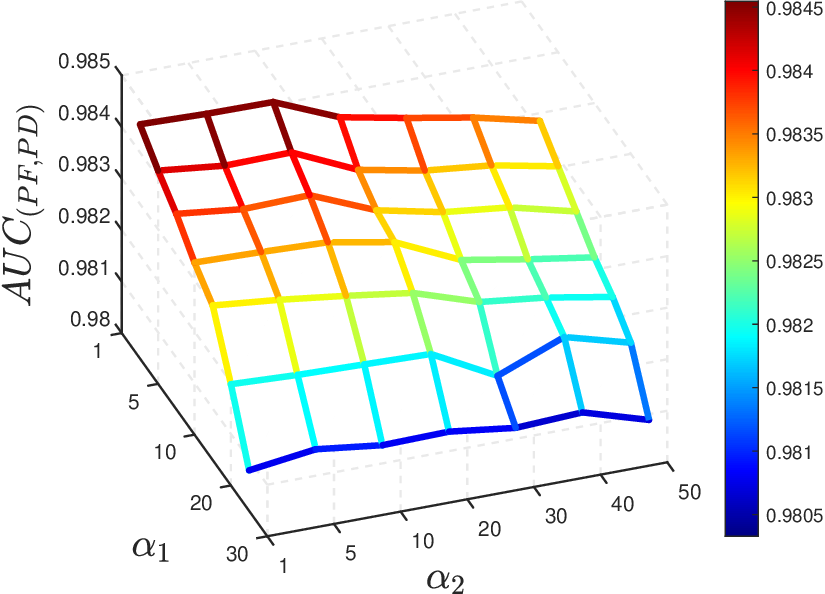}
			\caption{Airport2}
		\end{minipage}
		\begin{minipage}{0.195\textwidth}
			\centering
			\includegraphics[width=1\textwidth]{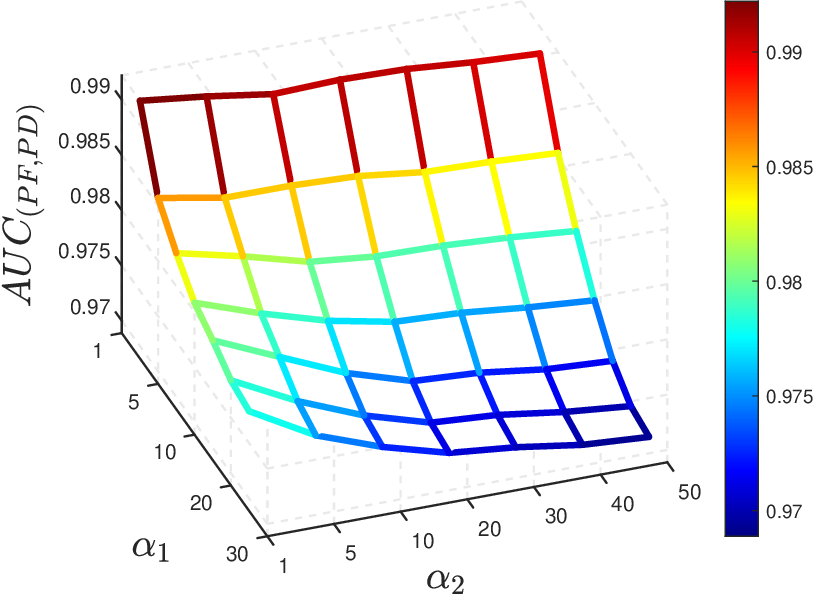}
			\caption{Beach}
		\end{minipage}
		\begin{minipage}{0.195\textwidth}
			\centering
			\includegraphics[width=1\textwidth]{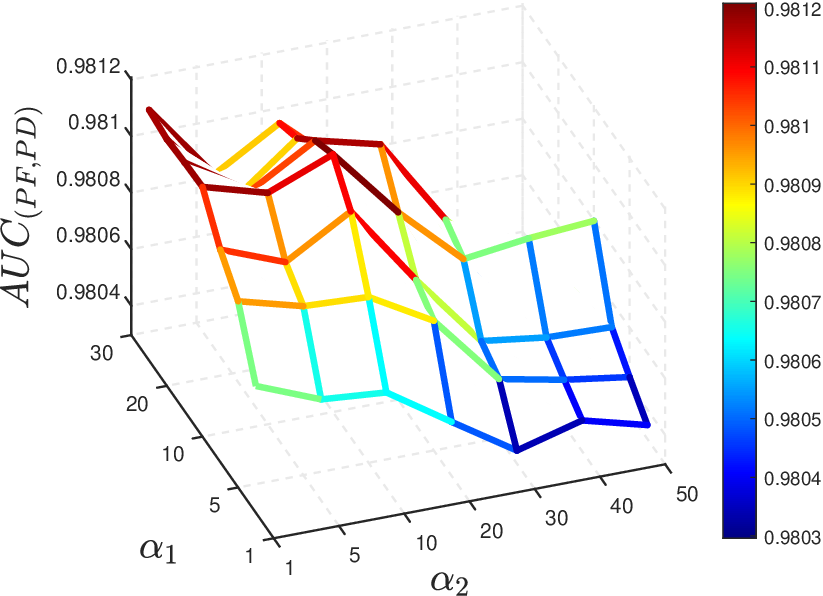}
			\caption{Urban}
		\end{minipage}
		\begin{minipage}{0.195\textwidth}
			\centering
			\includegraphics[width=1\textwidth]{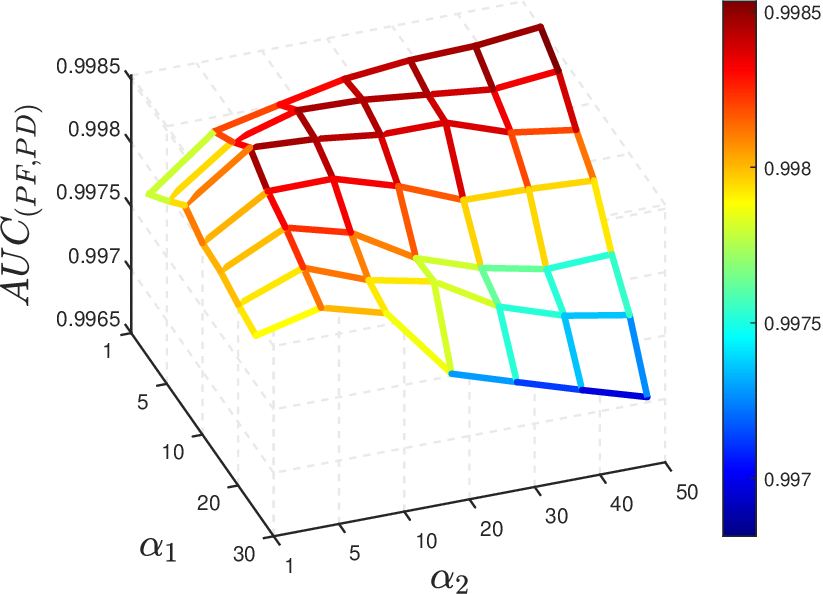}
			\caption{San Diego}
		\end{minipage}
	\end{subfigure}
	\vfill
	\caption{Surfaces of AUC  values of the result by our method with different $\alpha_1$ and $\alpha_2$.}
	\label{fig:alpha}
\end{figure*}

\subsubsection{Adaptive dimensional reduction during iterations} In our model, the large tensor $\mathcal{Z}\in\mathbb{R}^{n_1\times n_2\times n_3}$ is factorized via the t-product into two smaller tensors, $\mathcal{X}\in\mathbb{R}^{n_1\times d\times n_3}$ and $\mathcal{Y}\in\mathbb{R}^{n_2\times d\times n_3}$, where the shared dimension $d$ controls the complexity of the decomposition. In Algorithm \ref{Alg:LADMM}, $d$ is adaptively reduced at Step 11. Fig. \ref{fig:r} illustrates the evolution of the auxiliary dimension $d$ during each iteration of Algorithm \ref{Alg:LADMM} for both the spatial domain and the spectral domain. Initially $d$ equals the full tensor size but collapses sharply within a few iterations to a small stable value. After this rapid drop, $d$ remains fixed for the rest of the optimization. This early convergence of $d$ demonstrates that the algorithm immediately eliminates redundant dimensions and thereafter incurs only the low-rank cost determined by the converged value of $d$, yielding substantial savings in computation time.
\begin{figure}[htbp]
	\centering
	\begin{subfigure}[b]{1\linewidth}
		\begin{minipage}{0.495\textwidth}
			\centering
			\includegraphics[width=1\textwidth]{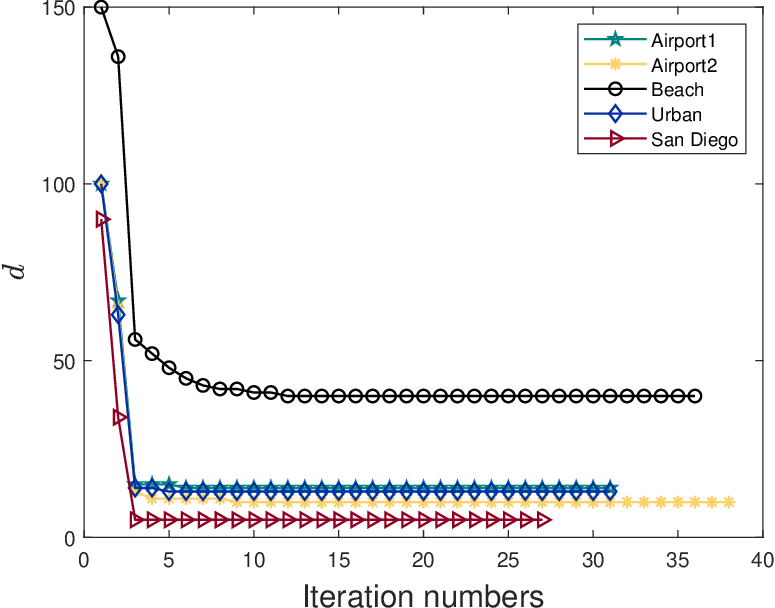}
			\caption{Spatial domain}
		\end{minipage}
		\begin{minipage}{0.495\textwidth}
			\centering
			\includegraphics[width=1\textwidth]{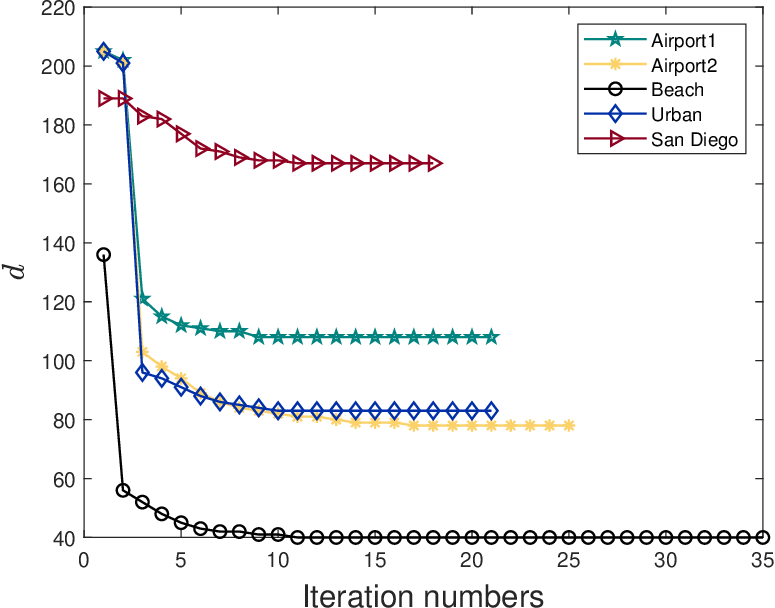}
			\caption{Spectral domain}
		\end{minipage}
	\end{subfigure}
	\vfill
	\caption{Variation of $d$ values across iterations for Algorithm \ref{Alg:LADMM}, shown separately for the spatial and spectral domains.}
	\label{fig:r}
\end{figure}

\subsubsection{Effects of automatic anomaly grouping and spectral--spatial fusion}
Table \ref{Tab:SS} verifies the effects of AAG and spectral--spatial fusion by reporting the $AUC$ values of different detection maps. Algorithm \ref{Alg:LADMM} produces $\mathcal{E}_{spa}$ and $\varTheta_{spa}$ in the spatial domain, and $\mathcal{E}_{spe}$ and $\varTheta_{spe}$ in the spectral domain. The anomaly magnitude maps are computed as
$E_{spa}(i,j)=\sqrt{\sum_k|\mathcal{E}_{spa}(i,j,k)|^2}$ and
$E_{spe}(i,j)=\sqrt{\sum_k|\mathcal{E}_{spe}(i,j,k)|^2}$.
The fused maps are obtained by element-wise multiplication, i.e.,
$E_{fus}=E_{spa}\odot E_{spe}$ and
$\varTheta_{fus}=\varTheta_{spa}\odot \varTheta_{spe}$.

In both spatial and spectral domains, $\varTheta$ consistently outperforms $E$ on all datasets. This result indicates that the learned grouping map provides a more spatially coherent anomaly response than the anomaly magnitude map alone. Combined with the parameter analysis in Fig. \ref{fig:alpha}, the results also show that finite $\alpha_1$ and $\alpha_2$ improve anomaly modeling compared with the limiting case $\alpha_1,\alpha_2\to\infty$, where Theorem \ref{Thm:LOP} reduces the AAG penalty to the conventional $\|\cdot\|_{2,1}$ form.

The fusion results further demonstrate the benefit of combining spectral and spatial information. Both $E_{fus}$ and $\varTheta_{fus}$ generally outperform their single-domain counterparts, with only minor exceptions. In particular, $\varTheta_{fus}$ achieves the highest AUC on four of the five datasets, and on Airport1, it is only 0.0029 lower than the best single-domain result. These observations confirm that AAG and spectral--spatial fusion are both important to the final performance of GSAA-SS.

\begin{table}[htbp]
	\centering
	\caption{Comparison of AUC values of different anomaly detection maps.}
	\label{Tab:SS}
	\begin{tabular}{c cc cc cc}
		\toprule
		\multirow{2}{*}{HSI} & 
		\multicolumn{2}{c}{Spatial Domain} & 
		\multicolumn{2}{c}{Spectral Domain} & 
		\multicolumn{2}{c}{Fusion Method} \\
		\cmidrule(lr){2-3} \cmidrule(lr){4-5} \cmidrule(lr){6-7}
		& $E_{spa}$ & $\varTheta_{spa}$ 
		& $E_{spe}$ & $\varTheta_{spe}$ 
		& $E_{fus}$ & $\varTheta_{fus}$ \\
		\midrule
		Airport1    & 0.9178 & 0.9559 & 0.8288 & 0.8541 & 0.9303 & 0.9530 \\
		Airport2    & 0.9198 & 0.9872 & 0.8896 & 0.9678 & 0.9476 & 0.9906 \\
		Beach       & 0.9268 & 0.9857 & 0.9720 & 0.9825 & 0.9535 & 0.9943 \\
		Urban       & 0.9299 & 0.9731 & 0.9662 & 0.9674 & 0.9545 & 0.9810 \\
		San Diego   & 0.9824 & 0.9978 & 0.9688 & 0.9752 & 0.9878 & 0.9979 \\
		\bottomrule
	\end{tabular}
	
	
\end{table}

\section{Conclusions} \label{Sec:con}

This paper presented GSAA, an HAD model based on group sparse low-rank tensor factorization with automatic anomaly grouping. In this framework, the background is modeled by imposing group sparsity on the tensor factors, which provides an efficient alternative to direct tensor rank regularization and avoids repeated large-scale SVD computations. For anomaly modeling, the AAG penalty introduces a latent grouping map, enabling spatially coherent anomaly structures to be learned adaptively from the data rather than specified by a predefined pixel-wise partition. To further leverage complementary spectral and spatial information, GSAA was extended to a spectral--spatial framework, leading to GSAA-SS. An efficient LADMM algorithm was developed to solve the resulting optimization problem, and its convergence was theoretically analyzed. Experimental results on five real hyperspectral scenes demonstrate that GSAA-SS achieves accurate anomaly detection and competitive computational efficiency relative to representative existing methods.

\ifCLASSOPTIONcaptionsoff
  \newpage
\fi



%
\nocite{HJ91, ZLLZ18, Com18}
\bibliographystyle{IEEEtran}
\bibliography{GSAA}

%




\end{document}